\documentclass{article}

 \usepackage[preprint]{neurips_2026}

\usepackage[utf8]{inputenc} % allow utf-8 input
\usepackage[T1]{fontenc}    % use 8-bit T1 fonts
\usepackage{url}            % simple URL typesetting
\usepackage{hyperref}       % hyperlinks
\usepackage{booktabs}       % professional-quality tables
\usepackage{amsfonts}       % blackboard math symbols
\usepackage{nicefrac}       % compact symbols for 1/2, etc.
\usepackage{microtype}      % microtypography
\usepackage{xcolor}         % colors
\usepackage{amsmath}
\usepackage{amssymb}
\usepackage{amsthm}
\usepackage{graphicx}
\usepackage{subcaption}
\usepackage{algorithm}
\usepackage{algpseudocode}
\usepackage{mathtools}
\usepackage[table]{xcolor}
\usepackage{multirow}
\usepackage[normalem]{ulem} % for \uline
\usepackage{array}
\usepackage{pifont}
\usepackage{titletoc}
\usepackage{tabularx}
\usepackage[most]{tcolorbox}
\usepackage{wrapfig}
\usepackage{enumitem}
\usepackage{makecell}
\usepackage{fontawesome5} % for icons

\hypersetup{
    colorlinks=true,
    citecolor=cyan,
    linkcolor=magenta,
}

\newtcolorbox{grayoutlinebox}{
    enhanced,
    breakable,
    colback=gray!4,
    colframe=gray!35,
    boxrule=0.6pt,
    arc=2mm,
    outer arc=2mm,
    left=2mm,
    right=2mm,
    top=0.25mm,
    bottom=0.25mm,
}

\definecolor{buttonbg}{HTML}{F5F5F5}
\definecolor{buttontext}{HTML}{000000} % black
\newtcbox{\linkbuttonbox}{
  on line,
  arc=4pt,
  boxrule=0pt,
  colback=buttonbg,
  colframe=buttonbg,
  boxsep=0pt,
  left=6pt,
  right=6pt,
  top=4pt,
  bottom=4pt
}

\newcommand{\paperbutton}[3]{%
  \href{#1}{%
    \linkbuttonbox{%
      \textcolor{buttontext}{\sffamily\small #2\hspace{0.35em}#3}%
    }%
  }%
}

\theoremstyle{definition}
\newtheorem{definition}{Definition}[section]

\newtheorem{theorem}{Theorem}
\newtheorem*{theorem*}{Theorem}

\newcommand{\tmark}{\ding{51}}
\newcommand{\xmark}{\ding{55}}

\title{RePercENT: Scaling Disentangled Representation Learning Beyond Two Modalities}

\author{%
  Vasiliki Rizou \\
  EPFL \\
  \texttt{vasiliki.rizou@epfl.ch} \\
  \And
  Pascal Frossard \\
  EPFL \\
  \texttt{pascal.frossard@epfl.ch} \\
  \And
  Dorina Thanou \\
  EPFL \\
  \texttt{dorina.thanou@epfl.ch} \\
}

\begin{document}

\maketitle

\begin{center}
\vspace{-2em}
\paperbutton{https://github.com/Vascorn/RePercENT}{\faIcon{github}}{ Code}
\vspace{0.5em}
\end{center}

% aiming to 

% to surpass the black-box limitations of joint representations 
\begin{abstract}
To leverage the full potential of multimodal data, we need representations that go beyond the state-of-the-art alignment and fusion approaches and exploit all cross-modal interactions without sacrificing modality-specific information. Learning disentangled representations is a principled way to identify these underlying shared and unique factors that are hidden in observational data. However, while multimodal disentanglement is a compelling paradigm, existing methods are largely confined to the two-modality regime due to its inherent scalability bottleneck. To address this, we propose RePercENT, a self-supervised framework designed to surpass these limitations and \textit{unlocks scalable pairwise disentanglement} beyond two modalities. Through a multimodal `plug-and-play' architecture, our approach operates directly on pre-extracted embeddings, eliminating the need for extensive joint pre-training while making no assumptions regarding the underlying modalities or foundation model backbones. Moreover, we introduce a joint optimization objective for simultaneously deriving the shared and unique components, and provide formal theoretical guarantees that characterize the optimality of our solution. Across diverse modalities and tasks, RePercENT successfully recovers disentangled components while maintaining competitive performance and significantly reducing computational complexity.
\end{abstract}

\section{Introduction}
    A central aim of multimodal machine learning is to endow models with the ability to synthesize and reason over information coming from multiple sources, e.g. images, audio, text~\citep{ml_survey, Krones2024ReviewOM}. Landmark multimodal foundation models, such as CLIP~\citep{pmlr-v139-radford21a}, ALIGN~\citep{Jia2021ScalingUV}, Flamingo~\citep{10.5555/3600270.3601993}, have shown that integrating different modalities gives rise to richer, more semantic and broadly applicable representations, than those derived from a single source alone. Notably, most previous work focuses on cross-modal alignment, exploiting the critical assumption of \textit{multi-view redundancy}~\citep{liang_factorized_2023}, which suggests that \textit{all} task-relevant information is shared across modalities. The \textit{Platonic Representation Hypothesis}~\citep{huh_platonic_2024}
    also supports this assumption, linking better performance with more aligned representations. While this may hold in many settings,~\citet{tjandrasuwita_understanding_2025} demonstrate that it crucially depends on the degree of similarity between the modalities and the balance between redundant and unique information they provide for the task under consideration (see Figure \ref{Figure01:Venn}, left).
    % While this may hold for particular distributions and tasks, it crucially ignores settings, where unique information is critical, as illustrated in Figure~\ref{Figure01:Venn} (right). \citet{tjandrasuwita_understanding_2025} demonstrate that in fact both the emergence of alignment and its relationship with task performance depend largely on the degree of similarity between the modalities and the balance between redundant and unique information they provide for the task.
    
    \begin{figure}[ht!]
        \centering
        \begin{subfigure}[t]{0.50\textwidth}
            \centering
            \includegraphics[width= \linewidth]{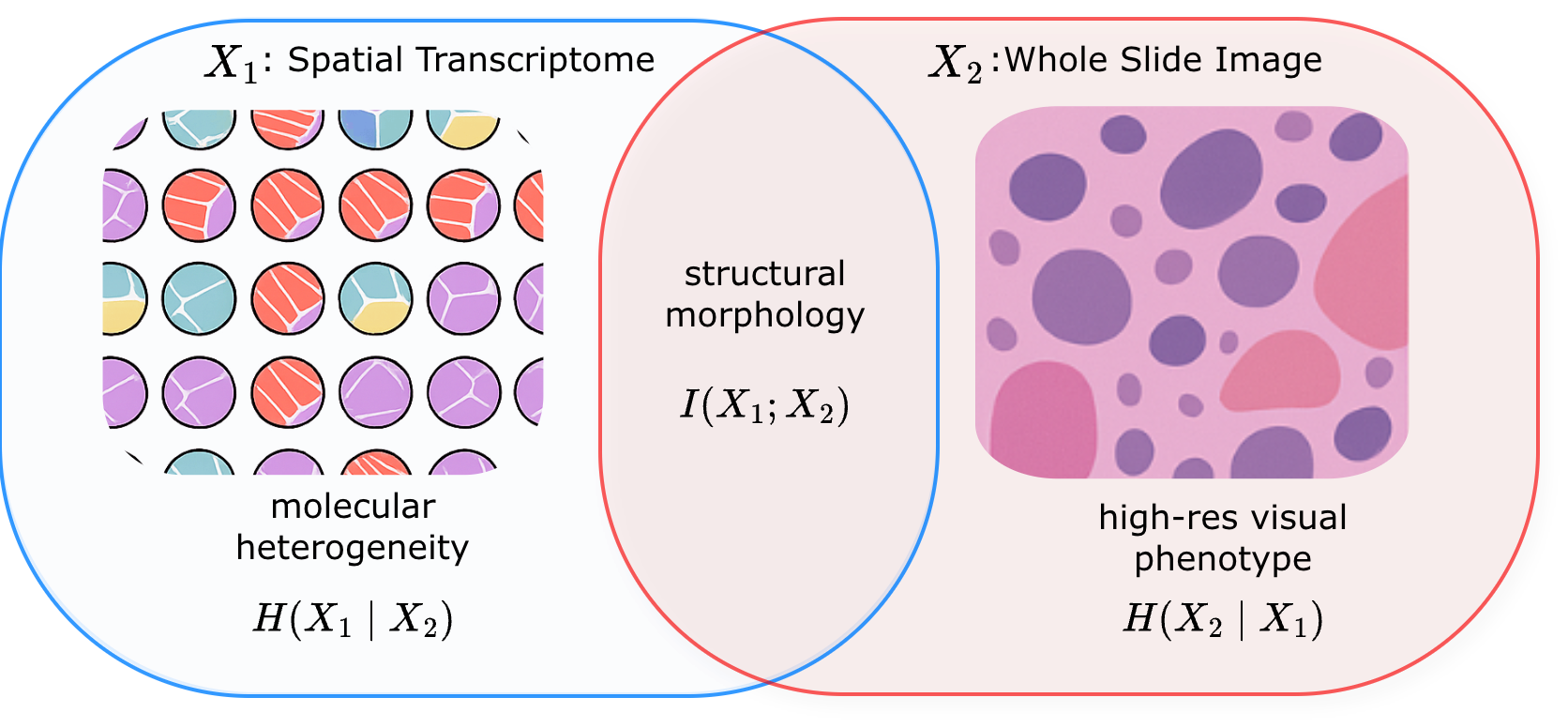}
        \end{subfigure}
        \hspace{2em}
        \begin{subfigure}[t]{0.38\textwidth}
            \centering
            \includegraphics[width= \linewidth]{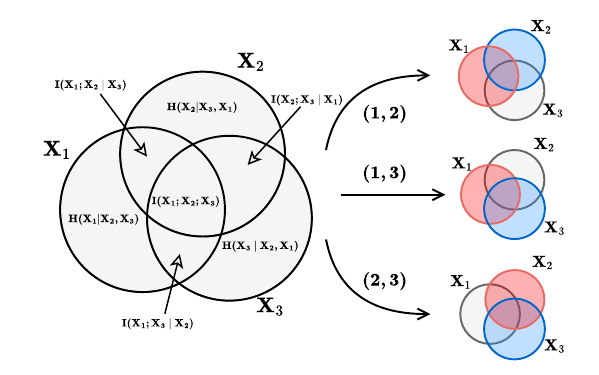}
        \end{subfigure}
        \caption{\textbf{Left:} Example in oncology when multi-view redundancy is limited. While both modalities capture shared structural morphology, WSI resolves fine-grained cellular features, whereas ST reveals underlying molecular variations, that are invisible in histology. \textbf{Right:} Information Venn diagram for three modalities, along with their pairwise shared and unique component visualizations.}
        \label{Figure01:Venn}
        \vspace{-1em}
    \end{figure}
    Multimodal fusion addresses this limitation by combining heterogeneous modalities into a joint representation that potentially captures both their shared and complementary information. While effective, such representations leave the different information factors entangled, limiting our understanding on how the observed modalities interact. This motivates \textit{disentangled representation learning}, where the goal is to decompose the multimodal information into distinct, meaningful components, separating what is shared across modalities from what remains modality-specific. Identifying the underlying shared and unique factors, reveals which modalities are most task-critical, which contribute unique information, and which exhibit significant overlap. This, in turn, can inform practical decisions such as pruning modalities that are computationally or cost demanding, yet \textit{redundant}. On the other hand, shared factors from observed modalities may further help compensate for missing inputs. Finally, such a decomposition can facilitate adaptability to new unseen tasks, integration of new heterogeneous modalities, and efficient fine-tuning on new datasets or cohorts.
    
    Despite recent progress~\citep{Wu2025LearningOM, liang_factorized_2023}, reliably disentangling multimodal information remains an open challenge. Importantly, the majority of prior work remains confined to two modalities, largely due to a fundamental scalability bottleneck, arising from two main factors. First, the number of possible shared and modality-specific factors grows exponentially with the number of modalities, making them increasingly difficult to model. Second, beyond two modalities, tractable objectives or bounds that explicitly characterize all these interactions remain limited. Moreover, in real-world settings, unique and shared information is often highly entangled, leading to a trade-off between minimizing redundancy across components and preserving as much relevant information as possible \citep{wang_information_2025}. Capturing only the Minimal Necessary Information (MNI) \citep{fischer_conditional_2020} is therefore key to learning informative, non-redundant components without inducing representational collapse. These challenges naturally raise a fundamental question: 
    
    \textit{"How can we scalably extract the underlying unique and shared factors beyond two modalities?"}

    We answer this question with \textbf{RePercENT}, a novel information-theoretic framework for \textbf{Re}duced-complexity \textbf{Perc}eiver-based dis\textbf{ENT}anglement. Our approach effectively addresses the scalability barrier, through two key ingredients. First, to avoid the exponential growth of components, we model pairwise interactions between modalities: this preserves the information of each modality while capturing all relevant cross-modal interactions--see Figure \ref{Figure01:Venn} (right). Secondly, we propose an architecture that redefines the efficient, modality-agnostic design of the Perceiver encoder  \citep{jaegle2021perceiver} through dedicated structural mechanisms that effectively route the available information into shared and modality-specific representations. Furthermore, by leveraging pre-extracted foundation model embeddings and a self-supervised training regime, our method remains modality- and task-agnostic, while still being robust to missing or incomplete modalities at inference time. Finally, our approach is accompanied with formal theoretical guarantees on the optimality of our solution. Our contributions are as follows:
    \begin{itemize}
        \item \textbf{Scalable multimodal disentanglement:} We introduce RePercENT, a novel information-theoretic framework for multimodal disentanglement that efficiently scales beyond the two-modality setting.
        \item \textbf{Plug-and-play architecture:} RePercENT operates directly on pre-extracted embeddings, making it agnostic to modality types and backbone architectures. By design, it accommodates missing data during inference and seamlessly transfers to unseen tasks.
        \item \textbf{Formal optimality guarantees:} We provide formal optimality guarantees both when minimal necessary information (MNI) is attainable and when it is not.
        \item \textbf{Extensive empirical validation:} We demonstrate the robustness and effectiveness of our framework across diverse settings, including controlled synthetic experiments, real-world figurative language understanding, and multimodal medical analysis.
    \end{itemize}
    Overall, we provide a principled framework that bridges information-theoretic foundations and formal guarantees with a practical and efficient implementation, paving the way towards more reliable, interpretable, and mechanistically grounded multimodal representations.
\section{Problem formulation}
    In this section, we formalize the problem of information-theoretic disentangled representation learning in multimodal settings and introduce the necessary notation and definitions.
    
    Let \(\mathcal{X} = \{X_1, X_2, \dots, X_M\}\) be the set of \(M\) modalities, and let \(Z_i\) correspond to a latent, information-preserving representation of each \(X_i\).
    We denote by \(\mathcal{M} = \{1,2, \dots, M\}\) 
    the corresponding modality index set. For each subset \(A \subseteq \mathcal{M}\), we introduce the \textit{atomic representations} \(z_A\), as the basic latent building blocks, reflecting distinct information subspaces (Definition~\ref{def:atomic_representation}).
    \noindent
    \begin{minipage}[]{0.73\textwidth}
    \begin{grayoutlinebox}
    \begin{definition}
    \textbf{Atomic representation.}
    We define as the atomic representation \(z_A\), the latent representation associated with a non-empty subset of modalities \(A \subseteq \mathcal{M}\), capturing information:
    \begin{enumerate}
        \item \textbf{shared} by all modalities \(i \in A\), and
        \item \textbf{exclusive} to the modalities in \(A\), such that \(\forall A,A' \subseteq \mathcal{M}\), with \(A \cap A' = \emptyset\), we have \(I(z_A; z_{A'}) = 0\), where \(I(z_A;z_{A'})\) is the mutual information between \(z_A\) and \(z_{A'}\).
    \end{enumerate}
    \label{def:atomic_representation}
    \end{definition}
    \end{grayoutlinebox}
    \end{minipage}
    \hfill
    \begin{minipage}[]{0.26\textwidth}
    % \vspace{0.25em}
    \centering
    \includegraphics[width=\textwidth]{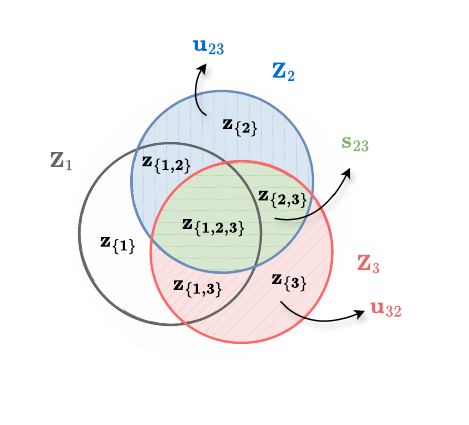}
    % \captionsetup{font=small}
    \captionof{figure}{Atomic representations through a Venn diagram visualization.}
    \label{fig:atomic_representations}
    \end{minipage}
    
    Figure~\ref{fig:atomic_representations} illustrates the atomic representation subspaces through a Venn decomposition for the three-modality case. Notice that each modality-specific latent representation $Z_i$ can then be interpreted as a composition of atomic representations:
    \begin{grayoutlinebox}
        \begin{definition} \textbf{Composite representation}
        For \(i\in\mathcal{M}\), let \(\mathcal{A}_i=\{A\subseteq \mathcal{M}:i\in A\}\). The composite representation \(Z_i\) is the combination of all atomic representations whose subsets contains \(i\): \(Z_i=\mathop{\bigoplus}\limits_{A\in\mathcal{A}_i} z_A\). Here, \(\bigoplus\) denotes a generic composition operation, e.g., concatenation or any aggregation function.
            \label{def: 2.2}
        \end{definition}
    \end{grayoutlinebox}
    Subsequently, using the atomic representations as building blocks, we define shared and modality-specific representations for each modality pair $(i, j)$, according to Definition \ref{def: 2.3}. 
    \begin{grayoutlinebox}
    \begin{definition}
    \textbf{Pairwise unique and shared representations.}
    For \(i,j\in\mathcal{M}\), let \(\mathcal{A}_{i\mid j}=\mathcal{A}_i\cap\mathcal{A}_j^c\), where \(\mathcal{A}^c_j\) denotes the complementary set of \(\mathcal{A}_j\), and let \(\mathcal{A}_{ij}=\mathcal{A}_i\cap\mathcal{A}_j\). The pairwise unique and shared representations are defined as,
            \[
            \mathbf{u}_{ij} = \bigoplus_{A \in \mathcal{A}_{i \mid j}} z_A,
            \qquad
            \mathbf{s}_{ij} = \bigoplus_{A \in \mathcal{A}_{ij}} z_A.
            \]Here, \(\mathbf{u}_{ij}\) and \(\mathbf{s}_{ij}\) capture information unique to \(i\) relative to \(j\), and shared by \(i\) and \(j\), respectively.
            \vspace{-0.75em}
    \label{def: 2.3}
    \end{definition}
    \end{grayoutlinebox}
    Finally, we formalize the learning objective as follows: 
    \begin{grayoutlinebox}
        \textbf{Problem Statement}
        Let $M$ be the number of modalities and $\mathcal{M}=\{1,\dots,M\}$ denote the corresponding modality index set. Given the composite representation $Z_i$ of each modality $i\in\mathcal{M}$, for every $j\in\mathcal{M}\setminus\{i\}$, our goal is to derive the pairwise disentangled latent representations $\mathbf{u}_{ij}$ and $\mathbf{s}_{ij}$:
        \[
        Z_i \;\longmapsto\; \{(\mathbf{u}_{ij},\, \mathbf{s}_{ij})\}_{j \in \mathcal{M}\setminus\{i\}}.
        \]
        \label{problem_definition}
        \vspace{-1.25em}
    \end{grayoutlinebox}
    Notice that the above formulation efficiently circumvents the exponential growth in the number of atomic representations\footnote{Each composite representation $Z_i$ consists of exactly $2^{M -1}$ atomic representations.} reducing the component complexity from \(\mathcal{O}(2^M)\) to \(\mathcal{O}(M^2)\), while still capturing all cross-modal interactions through pairwise decompositions. 
    Importantly, this preserves the information associated with every modality, as for each $j\neq i$, the pairwise components $(\mathbf{u}_{ij},\mathbf{s}_{ij})$ encapsulate the complete information reflected in $Z_i$.
    
\section{RePercENT: Unlocking scalable disentangled representations}
\label{framework_overview}
   Building on this formulation, we introduce RePercENT. We first present the architectural design, then define the optimization objective leading to tractable training losses, and finally derive theoretical guarantees on the optimality of our solution.
   \vspace{-0.5em}
\subsection{Architecture overview}
    \begin{figure}[ht!]
        \centering
        \includegraphics[width=0.8\linewidth]{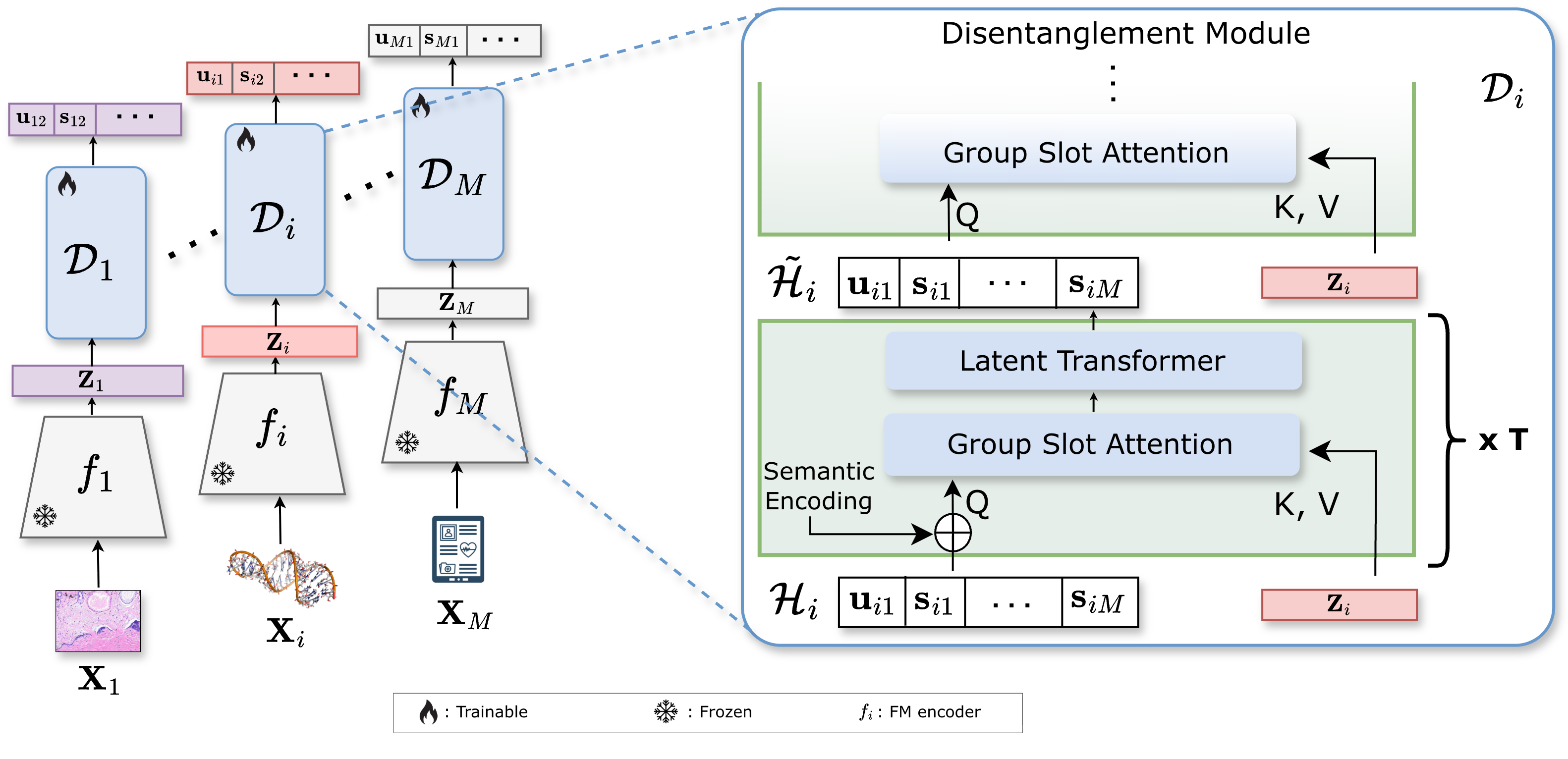}
        \caption{\textit{Model overview}. Each modality $X_i$ is first encoded, using modality-specific FMs. Afterwards, each encoded representation is processed through its dedicated disentanglement module $\mathcal{D}_i$.}
        \label{Figure03:model_overview}
        \vspace{-1em}
    \end{figure}
    
    We propose a scalable framework for multimodal disentanglement that extracts the desired information-theoretic representations while requiring \textit{a single encoder} per modality, yielding linear scaling in $M$. Specifically, from each modality $i$, we leverage pre-trained modality specific foundation models (FMs) to obtain the initial embeddings $Z_i$. Under the assumption that these encoders are sufficiently expressive and thus the resulting representations preserve most of the relevant modality information, we treat the embeddings $Z_i$, as composite representations. Subsequently, each representation is processed via a separate disentangling module $\mathcal{D}_i$, that decomposes $Z_i$ into all desired information components involving modality $i$, i.e., $Z_i\;\longmapsto\; \{(\mathbf{u}_{ij},\, \mathbf{s}_{ij})\}_{j \in \mathcal{M}\setminus\{i\}}$. 
    % \vspace{-0.25em}
    \paragraph{Disentanglement module} Within each $\mathcal{D}_i$, we model the disentangled representations using a learnable latent array $\mathcal{H}_i \in \mathbb{R}^{N \times D}$, where \(N=2(M-1)\) denotes the number of latent slots per modality and $D$ is the dimension of each slot. To link pairwise unique and shared components to the latent array, we assign each row $h_{ik} \in \mathbb{R}^{D}$ of $\mathcal{H}_i$ to exactly one component (unique or shared) of modality $i$ (see Appendix \ref{def: mapping_def}). $\mathcal{H}_i$ is then iteratively refined through a Perceiver-inspired latent attention mechanism. While \citet{jaegle2021perceiver} introduced the Perceiver as a general-purpose architecture for scalable input processing, we repurpose its latent attention mechanism for a different objective: to extract granular disentangled representations. In our setup, $\mathcal{H}_i$ serves as a query over the encoded representation $Z_i$, followed by a latent self-attention block, as illustrated in Figure \ref{Figure03:model_overview}. This latent bottleneck is well suited to our setting, as it enables efficient representation encoding without making any modality-specific assumptions. To encourage disentanglement, we introduce two key routing mechanisms: \textbf{Semantic Encoding}, which assigns a functional role to each latent slot, and a new form of attention, \textbf{Group Slot Attention}, that promotes competition among slots.
    % \vspace{-0.25em}
    \paragraph{Semantic encoding} To promote component specialization, each slot $h_{ik}$ is augmented with two learnable encodings: a pair encoding, \(e^{\mathrm{p}}_{ij} \in \mathbb{R}^{D}\), which identifies the modality pair $(i,j)$, and a type encoding, \(e^{\mathrm{t}}_{ij} \in \mathbb{R}^{D}\), indicating the component type, $\mathrm{t} \in \{\mathrm{unique}, \mathrm{shared}\}$. The initialized latent slot is thus defined as: $\tilde{h}_{ik} = h_{ik} + e^{\mathrm{p}}_{ij} + e^{\mathrm{t}}_{ij}$.
    \paragraph{Group slot attention} Rather than compressing modality \(X_i\) into a single pooled embedding, we retain the full sequence of embeddings \(Z_i \in \mathbb{R}^{S_i \times E_i}\), allowing the latent slots to attend to fine-grained input structure. We then introduce \textit{group slot attention}: a structural routing mechanism tailored to pairwise disentanglement. While motivated by the competitive assignment principle of Slot Attention~\citep{10.5555/3495724.3496691}, our mechanism applies competition within modality-pair groups. Specifically, for each pair \((i,j)\), the slots corresponding to \((\mathbf{u}_{ij}, \mathbf{s}_{ij})\) compete over the same input embeddings, enforcing specialization within the pair. This way, each group effectively separates modality \(i\)'s information into \(i\)-specific and \(i\)-\(j\) shared latent representations (see Appendix~\ref{framework_details}).

    Together, these design choices impose an explicit routing bias, enabling each slot to specialize in its assigned representation while keeping the inference modality-local. This offers a central scalability advantage, as all pairwise unique and shared representations are obtained with only one encoder per modality. In contrast, state-of-the-art disentanglement methods \citep{Wu2025LearningOM, liang_factorized_2023} rely on separate encoders for different representations, leading to \(\mathcal{O}(M^2)\) structural complexity, which quickly becomes prohibitive as $M$ grows. Our method reduces this to $\mathcal{O}(M)$ while retaining the full set of pairwise decompositions. Moreover, by construction, for every \(j \in \mathcal{M}\), the representations \((\mathbf{u}_{ij}, \mathbf{s}_{ij})\) are inferred from \(X_i\) alone, i.e., 
    $
    \mathbf{u}_{ij} \sim p(\cdot \mid X_i),
    $ and $
    \mathbf{s}_{ij} \sim p(\cdot \mid X_i).
    $
    Equivalently,
    \[
    (\mathbf{u}_{ij}, \mathbf{s}_{ij}) \perp X_j \mid X_i,
    \qquad
    (\mathbf{u}_{ji}, \mathbf{s}_{ji}) \perp X_i \mid X_j.
    \]
    Crucially, this decoupling ensures robustness to missing modalities during inference. Since \((\mathbf{u}_{ij}, \mathbf{s}_{ij})\) are inferred directly from \(X_i\), they remain available whenever \(X_i\) is observed, regardless of whether \(X_j\) is present, bypassing the need for complex imputation or heuristic mapping.

    \subsection{Information criteria for optimal disentangled representations}
    We now introduce how the unique and shared components of the architecture are learned.
    % As all interactions are modeled in a pairwise manner, each modality pair $(i, j)$, can be treated as an independent optimization problem. 
    Rather than following a sequential decomposition strategy, where the shared representation is learned first, and the unique representation is then inferred as a residual component~\citep{wang_information_2025}, we cast each pairwise decomposition as a single optimization problem. Concretely, we simultaneously optimize all representations $\theta_{ij} = (u_{ij}, u_{ji}, s_{ij}, s_{ji})$, according to the following optimization objective:
    \begin{grayoutlinebox}
    \label{optimization_objective}
    \textbf{Pairwise joint optimization objective.}
    For each modality pair \((i,j)\), we optimize
        \begin{equation}
        \hat{\theta}_{ij}
        \in
        \arg\max_{\theta_{ij}}\,\mathcal{J}(\theta_{ij})
        =
        \arg\max_{\theta_{ij}}
        \left[
        \alpha(\mathcal{L}_{s_i}+\mathcal{L}_{s_j})
        +\mathcal{L}_{u_i}
        +\mathcal{L}_{u_j}
        \right],
        \label{eq:eq_01}
        \end{equation}
        where \(\theta_{ij}=(u_{ij},u_{ji},s_{ij},s_{ji})\), and
        \begin{equation}
        \begin{aligned}
        \mathcal{L}_{s_i}
        &= I(s_{ij};X_j)-\beta I(s_{ij};X_i\mid X_j),
        &
        \mathcal{L}_{u_i}
        &= I(u_{ij},s_{ji};X_i)-\lambda I(u_{ij};s_{ji})
        \end{aligned}
        \label{eq:eq_02}
        \end{equation}
        \begin{equation}
        \begin{aligned}
        \mathcal{L}_{s_j}
        &= I(s_{ji};X_i)-\beta I(s_{ji};X_j\mid X_i),
        &
        \mathcal{L}_{u_j}
        &= I(u_{ji},s_{ij};X_j)-\lambda I(u_{ji};s_{ij}).
        \end{aligned}
        \label{eq:eq_03}
        \end{equation}
    \end{grayoutlinebox}
    
    The shared objectives $\mathcal{L}_{s_i}, \mathcal{L}_{s_j}$, encourage each shared representation to capture all \textit{necessary} cross-modal information, while at the same time being \textit{minimal}, i.e. penalizing information that can only be explained by its source modality. Specifically, the mutual information $I(s_{ij}; X_j)$, promotes cross-modal relevance, whereas the conditional mutual information, $I(s_{ij}; X_i \mid X_j)$, suppresses cross-modal leakage. Complementarily, the unique objectives $\mathcal{L}_{u_i}, \mathcal{L}_{u_j}$ preserve source-modality information via $I(u_{ij},s_{ji};X_i)$, while enforcing cross-modal disentanglement by penalizing overlap between unique and shared components across modalities through $I(u_{ij}; s_{ji})$. We introduce, the hyperparameter $\alpha > 0$ acting as a weighting constant between the shared and unique objectives, while (\(\beta, \lambda\))~\citep{wang_information_2025} control the trade-off between representation coverage and redundancy.

\subsubsection{Tractable training objectives.}
    For training the framework, we instantiate the information-theoretic components in Eq.~\eqref{eq:eq_01} using standard surrogate losses. Specifically, we employ the InfoNCE objective \citep{Oord2018RepresentationLW} to estimate the mutual information terms \(I(s_{ij};X_j)\) and \(I(u_{ij}, s_{ji};X_i)\). For the latter, positive pairs are formed from two independently augmented views of \(X_i\). We model $I(s_{ij};X_i \mid X_j)$ using a KL-divergence penalty, $\mathcal{L}_{\mathrm{KL}}$, between the conditional distributions $p(s_{ij}\mid X_i)$ and $p(s_{ji}\mid X_j)$, as it provides a variational upper bound on the corresponding conditional mutual information~\citep{Federici2020LearningRR}. Lastly, we reduce the dependence between \(u_{ij}\) and \(s_{ji}\) via a cross-covariance penalty, $\mathcal{L}_{\mathrm{xcov}}$, on standardized representations. Concretely, Eq.~\eqref{eq:eq_02} is approximated as
    \begin{equation}
        \mathcal{L}_{s_i}
        \approx
        \mathcal{L}^{\mathrm{tr}}_{s_i}
        =
        -\mathcal{L}^{\mathrm{INCE}}_{s_i}
        -
        \beta\,\mathcal{L}_{\mathrm{KL}},
        \qquad
        \mathcal{L}_{u_i}
        \approx
        \mathcal{L}^{\mathrm{tr}}_{u_i}
        =
        -\mathcal{L}^{\mathrm{INCE}}_{u_i}
        -
        \lambda\,\mathcal{L}_{\mathrm{xcov}}.
        \label{eq:eq_tract_1}
    \end{equation}
    The terms corresponding to Eq.~\eqref{eq:eq_03} are defined analogously by exchanging \(i\) and \(j\). Averaging over all modality pairs, we obtain the full trainable objective
    \begin{equation}
        \mathcal{L}^{\mathrm{tr}}_{\mathcal{E}}
        =
        \frac{1}{|\mathcal{E}|}
        \sum_{(i,j)\in\mathcal{E}}
        \left[
        \alpha \left(\mathcal{L}^{\mathrm{tr}}_{s_i} + \mathcal{L}^{\mathrm{tr}}_{s_j}\right)
        + \mathcal{L}^{\mathrm{tr}}_{u_i}
        + \mathcal{L}^{\mathrm{tr}}_{u_j}
        \right],
        \qquad
        \mathcal{E} = \{(i,j)\in\mathcal{M}^2 : i < j\}.
        \label{eq:eq_tract_2}
    \end{equation}
    Detailed expressions and further discussion on the losses can be found in Appendix \ref{training_tractable_objectives}.
    
    \subsubsection{Optimality guarantees}
    We now establish theoretical guarantees for our joint objective relative to the optimal information-theoretic decomposition. Specifically, we show that under attainable \textit{Minimum Necessary Information} (MNI)~\citep{fischer_conditional_2020}, our proposed objective admits the same global optimum as the ideal optimizer. Moreover, we prove near-optimality, in the more challenging regime, when MNI is unattainable.
    
    \paragraph{Optimal information-theoretic objectives} Concretely, let us consider the ideal sequential optimization problem \citep{wang_information_2025}, where shared representations are derived first, following:
    \begin{equation}
        \begin{aligned}
        s^*_{ij} &\in \arg\max_{s_{ij}} \mathcal{L}^{o}_{s_i} \; =\arg\max_{s_{ij}} \; I(s_{ij}; X_j) - \beta\, I(s_{ij}; X_i \mid X_j), \\
        s^*_{ji} &\in \arg\max_{s_{ij}} \mathcal{L}^{o}_{s_j} \; =\arg\max_{s_{ji}} \; I(s_{ji}; X_i) - \beta\, I(s_{ji}; X_j \mid X_i).
        \end{aligned}
        \label{eq:eq_04}
    \end{equation}
     In the second step, the modality-specific representations are inferred, from optimal solutions of the first step, as defined in Eqs. \eqref{eq:eq_04}. Specifically,
     \begin{equation}
        \begin{aligned}
        u^*_{ij} &\in \arg\max_{u_{ij}} \mathcal{L}^{o}_{u_i} \; =\arg\max_{u_{ij}} \; I(u_{ij}, X_j; X_i) - \lambda I(u_{ij}; s^*_{ij}), \\
        u^*_{ji} &\in \arg\max_{u_{ij}} \mathcal{L}^{o}_{u_j} \; =\arg\max_{u_{ji}} \; I(u_{ji}, X_i; X_j) - \lambda I(u_{ji}; s^*_{ji}).
        \end{aligned}
        \label{eq:eq_05}
    \end{equation}
    \vspace{-1em}
    \paragraph{From sequential to joint optimization} 
    % Notice now that $\mathcal{L}_{s_i}, \mathcal{L}_{s_j}$ in Eqs. \ref{eq:eq_02}-\ref{eq:eq_03} are identical to the optimal objectives $\mathcal{L}^{o}_{s_i}$ and $\mathcal{L}^{o}_{s_j}$ respectively. For the unique objectives , $\mathcal{L}_{u_i}$, $\mathcal{L}_{u_j}$, as we do not have access to the converged shared representations \(s^{*}_{ij}\) and \(s^{*}_{ji}\) of the sequential formulation, we utilize the intermediate representations \(s_{ij}\) and \(s_{ji}\). Accordingly, the redundancy terms \(I(u_{ij}; s^{*}_{ij})\) and \(I(u_{ji}; s^{*}_{ji})\) are replaced by \(I(u_{ij}; s_{ji})\) and \(I(u_{ji}; s_{ij})\), respectively. These substitutions transform the sequential construction to be optimized as a single coupled objective.
    % Consider now, an optimal solution \(\theta^*_{ij} = (u^*_{ij}, u^*_{ji}, s^*_{ij}, s^*_{ji})\) of the ideal problem, described in Eqs. \eqref{eq:eq_04}-\eqref{eq:eq_05}, and let $\hat{\theta}_{ij}$, be an optimal solution of our joint objective. 
    For the case when MNI holds, Theorem \eqref{theo: the_01}, demonstrates that any step-by-step optimal solution, is also a global maximizer of our joint objective.

    \begin{grayoutlinebox}
        \begin{theorem}
            \textit{Exact optimality under attainable MNI: }
            Let \(\theta^*_{ij} = (u^*_{ij}, u^*_{ji}, s^*_{ij}, s^*_{ji})\) be an optimal solution of the step-by-step optimization problem defined in Eqs.~\eqref{eq:eq_04} and \eqref{eq:eq_05}. 
            If the MNI criterion is attainable and \(\alpha > \lambda\), then \(\theta^*_{ij}\) is a global optimizer of the joint objective defined in Eqs.~\eqref{eq:eq_01} - \eqref{eq:eq_03}. Equivalently,
            \[
            \theta^*_{ij} \in \arg\max_{\theta_{ij}}\mathcal{J}(\theta_{ij}).
            \]
            \label{theo: the_01}
        \end{theorem}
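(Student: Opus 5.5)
We will prove the claim by computing an explicit upper bound on \(\mathcal{J}(\theta_{ij})\) that holds for every feasible \(\theta_{ij}\). We then show that \(\theta^*_{ij}\) attains this bound. Throughout we use the Markov structure imposed by the architecture: \(s_{ij},u_{ij}\) depend on \(X_j\) only through \(X_i\), and \(s_{ji},u_{ji}\) depend on \(X_i\) only through \(X_j\).

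\textbf{Step 1: bounding the shared terms.} By the chain rule, \(I(s_{ij};X_i)=I(s_{ij};X_j)+I(s_{ij};X_i\mid X_j)\), since \(s_{ij}\perp X_j\mid X_i\). It follows that \(I(s_{ij};X_j)\le I(X_i;X_j)\). Hence \(\mathcal{L}_{s_i}\le I(X_i;X_j)\), with equality if and only if \(s_{ij}\) is sufficient for \(X_j\) and minimal, i.e.\ \(I(s_{ij};X_i\mid X_j)=0\). This is exactly the MNI condition. Under attainable MNI, the sequential optimum \(s^*_{ij}\) of Eq.~\eqref{eq:eq_04} achieves this bound, and symmetrically for \(s^*_{ji}\). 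Consequently \(\alpha(\mathcal{L}_{s_i}+\mathcal{L}_{s_j})\le 2\alpha I(X_i;X_j)\), attained by \(\theta^*_{ij}\).

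\textbf{Step 2: bounding the unique terms.} Clearly \(\mathcal{L}_{u_i}\le I(u_{ij},s_{ji};X_i)\le H(X_i)\), or \(I(X_i;X_i)\) in the representation sense. The bound is attained when \((u_{ij},s_{ji})\) jointly carries all of \(X_i\) and \(I(u_{ij};s_{ji})=0\).

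We must check that \(\theta^*\) achieves this. Under MNI, \(s^*_{ji}\) captures exactly the shared information \(I(X_i;X_j)\) and contains nothing of \(X_j\) beyond it. Therefore \(I(s^*_{ji};X_i)=I(X_i;X_j)=I(s^*_{ij};X_i)\), and \(s^*_{ij}\) and \(s^*_{ji}\) are information-equivalent with respect to \(X_i\). The sequential \(u^*_{ij}\) of Eq.~\eqref{eq:eq_05} maximizes \(I(u_{ij},X_j;X_i)-\lambda I(u_{ij};s^*_{ij})\). At optimum it is the residual \(X_i\)-information independent of \(s^*_{ij}\): it satisfies \(I(u^*_{ij};s^*_{ij})=0\) and \(I(u^*_{ij},X_j;X_i)=H(X_i)\). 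Using the equivalence of \(s^*_{ij}\) and \(s^*_{ji}\) relative to \(X_i\), we transfer these properties to get \(I(u^*_{ij};s^*_{ji})=0\). Since \(s^*_{ji}\) is sufficient for the part of \(X_i\) explained by \(X_j\), we also get \(I(u^*_{ij},s^*_{ji};X_i)=I(u^*_{ij},X_j;X_i)=H(X_i)\). Thus \(\mathcal{L}_{u_i}(\theta^*)\) meets its upper bound, and likewise \(\mathcal{L}_{u_j}\).

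\textbf{Step 3: combining, and the main obstacle.} The main obstacle is that Steps 1 and 2 bound each term separately, yet the terms are coupled through \(s_{ji}\), which appears in both \(\mathcal{L}_{s_j}\) and \(\mathcal{L}_{u_i}\). One must rule out a \(\theta\) that sacrifices shared optimality to inflate the unique terms. In fact the unique bound \(H(X_i)\) does not depend on \(s\), so this trade-off should not arise. To be safe, however, we will write \(\mathcal{J}(\theta)\le 2\alpha I(X_i;X_j)+H(X_i)+H(X_j)\) as a sum of termwise bounds. We will then show directly that any deviation of \(s_{ji}\) that changes \(I(u_{ij},s_{ji};X_i)\) or \(I(u_{ij};s_{ji})\) costs at least as much in \(\alpha\mathcal{L}_{s_j}\) as it gains, weighted by \(\lambda\). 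Concretely, leaking an amount \(\delta\) of non-shared information into \(s_{ji}\) lowers \(\mathcal{L}_{s_j}\) by \(\beta\delta\ge\delta\), while the penalty term changes by at most \(\lambda\delta\). The condition \(\alpha>\lambda\), with \(\beta\ge1\) as in the IB setting, makes such deviations unprofitable.

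Since \(\theta^*\) attains the combined upper bound, it lies in \(\arg\max\mathcal{J}\). The delicate part is making the information-equivalence transfer in Step 2 rigorous. We expect to handle it by assuming MNI yields \(s^*_{ij}\) and \(s^*_{ji}\) as functions of a common minimal sufficient statistic of \(X_i\) and \(X_j\).
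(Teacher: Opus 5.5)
Your Step 1 is fine. The gap is in Step 2. Your bound $\mathcal{L}_{u_i}\le H(X_i)$ depends on neither $u$ nor $s$, so the whole proof hinges on $\theta^*$ attaining it. That means you need the sequential optimizer to satisfy $I(u^*_{ij};s^*_{ij})=0$ and $I(u^*_{ij},X_j;X_i)=H(X_i)$.

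You assert this (``the residual $X_i$-information independent of $s^*_{ij}$'') without proof, and the hypotheses do not supply it. Attainable MNI constrains only the shared part. It says nothing about whether some $u_{ij}$, computed from $X_i$ alone, is independent of the shared part and together with it recovers all of $X_i$. In the paper's continuous setting the claim is not even meaningful. There $H(X_i)$ is a differential entropy, so $\mathcal{L}_{u_i}\le H(X_i)$ fails in general (take $u_{ij}=X_i$), while $I(X_i;X_i)=\infty$ makes the bound vacuous.

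For finite alphabets the claim can likely be established, but only with machinery you do not provide. Attainable MNI forces a G\'acs--K\"orner common part $K$ with $X_i\perp X_j\mid K$, and the functional representation lemma then gives $u\perp K$ with $X_i$ a function of $(u,K)$. When no such perfect complement exists, $u^*_{ij}$ genuinely trades $I(u,X_j;X_i)$ against $\lambda I(u;s^*_{ij})$. Then $\theta^*$ can sit strictly below your bound and nothing follows. This is exactly the regime where the coupling through $s_{ji}$ matters.

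Step 3 does not close the gap. The ``leak $\delta$'' accounting is heuristic and imports $\beta\ge 1$, which is not a hypothesis. Note also that $\alpha>\lambda$ never enters your argument rigorously; your route only covers the perfect-complement case. The transfer you flag as delicate is in fact easy: $I(u_{ij};s^*_{ji})\le I(u_{ij};X_j)\le I(u_{ij};s^*_{ij})$. This follows from $u_{ij}\leftarrow X_i\leftrightarrow X_j\rightarrow s^*_{ji}$ and from MNI giving $I(X_i;X_j\mid s^*_{ij})=0$.

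The missing idea, which is the paper's route, is to avoid any closed-form optimal value and instead compare against the sequential unique objective as a function of $u$. For every $\theta$, two facts hold:
\[
I(u_{ij},s_{ji};X_i)\le I(u_{ij},X_j;X_i),
\qquad
I(u_{ij};X_j)-I(u_{ij};s_{ji})=I(u_{ij};X_j\mid s_{ji})\le I(X_i;X_j)-I(X_i;s_{ji}).
\]
Together they give
\[
\mathcal{L}_{u_i}\le \underbrace{I(u_{ij},X_j;X_i)-\lambda I(u_{ij};X_j)}_{\mathbf{B}_{u_i}(u_{ij})}+\lambda I(X_i;X_j)-\lambda I(X_i;s_{ji}).
\]
The $s_{ji}$-dependent remainder is absorbed into the shared term:
\[
\alpha\mathcal{L}_{s_j}-\lambda I(X_i;s_{ji})=(\alpha-\lambda)I(s_{ji};X_i)-\alpha\beta I(s_{ji};X_j\mid X_i)\le(\alpha-\lambda)I(X_i;X_j).
\]
This is exactly where $\alpha\ge\lambda$ is used. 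The result is an $s$-free bound valid for all $\theta$:
\[
\mathcal{J}(\theta)\le 2\alpha I(X_i;X_j)+\mathbf{B}_{u_i}(u_{ij})+\mathbf{B}_{u_j}(u_{ji}).
\]
Under MNI this bound is tight at $s=s^*$ for every $u$. The unique-term slacks vanish because $I(X_i;X_j\mid s^*_{ji})=0$, and the shared-term slacks vanish by the MNI equalities. MNI also gives $I(u_{ij};X_j)=I(u_{ij};s^*_{ij})$, so $\mathbf{B}_{u_i}=\mathcal{L}^o_{u_i}$, which $u^*_{ij}$ maximizes by definition. No description of $u^*$ and no existence of a perfect complement is ever required.
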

        \vspace{-1.5em}
        \begin{proof}
            \renewcommand{\qedsymbol}{}
             See Appendix \ref{proofs}
        \end{proof}
    \end{grayoutlinebox}
    Moreover, our guarantees extend beyond the idealized setting where the MNI criterion holds. 
    This is essential, as real multimodal data rarely admit such a clean separation: shared and modality-specific factors can be inherently entangled. (Further discussion of MNI is provided in Appendix~\ref{information_theory_background}.) Therefore, when MNI is unattainable, we show that the maximum value of our joint objective is no worse than the optimal value of the ideal problem, up to a tractable constant, as stated in Theorem \eqref{theo: the_02}. 
    % still satisfies near-optimality guarantees. Specifically, Theorem \ref{theo: the_02} states that for any maximizer of the joint objective $\hat{\theta}_{ij}$, the achieved objective value is lower bounded by the ideal step-by-step optimum up to an additive error of \(2(1+\lambda)\delta_c\).
    \begin{grayoutlinebox}
        \begin{theorem}
            \textit{Near-optimality under unattainable MNI: }
            Let $\theta^*_{ij} = (u^*_{ij}, u^*_{ji}, s^*_{ij}, s^*_{ji})$ be an optimal solution of the ideal step-by-step optimizer as stated in Eqs. \eqref{eq:eq_04} - \eqref{eq:eq_05}, and $\hat{\theta}_{ij}$ the corresponding optimal solution of the joint optimization problem, as stated in Eqs. \eqref{eq:eq_01} - \eqref{eq:eq_03}. If MNI is unattainable, any maximizer $\hat{\theta}_{ij}$ of the joint objective satisfies
            \[ 
            \mathcal{J}(\hat{\theta}_{ij}) \geq \mathcal{J}^o(\theta^*_{ij}) - 2 (1 + \lambda) \delta_c,
            \]
            where, 
            \[
            \mathcal{J}^o(\theta^*_{ij}) = \alpha \mathcal{L}^o_{s_i}(\theta^*_{ij}) + \alpha \mathcal{L}^o_{s_j}(\theta^*_{ij}) + \mathcal{L}^o_{u_i}(\theta^*_{ij}) + \mathcal{L}^o_{u_j}(\theta^*_{ij}),
            \]
            and $\delta_c$ denotes the information-gap term introduced in \citet{wang_information_2025}, which quantifies the trade-off between coverage and redundancy in the shared representations.
            \label{theo: the_02}
        \end{theorem}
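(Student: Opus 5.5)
Since $\hat{\theta}_{ij}$ maximizes $\mathcal{J}$, it suffices to exhibit one feasible $\theta_{ij}$ with $\mathcal{J}(\theta_{ij}) \ge \mathcal{J}^o(\theta^*_{ij}) - 2(1+\lambda)\delta_c$. The natural candidate is the ideal step-by-step solution itself, $\theta^*_{ij}$; then $\mathcal{J}(\hat{\theta}_{ij}) \ge \mathcal{J}(\theta^*_{ij})$, and everything reduces to comparing $\mathcal{J}(\theta^*_{ij})$ with $\mathcal{J}^o(\theta^*_{ij})$ term by term.

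\textbf{Shared terms.} These are immediate: $\mathcal{L}_{s_i}$ and $\mathcal{L}_{s_j}$ in Eqs.~\eqref{eq:eq_02}--\eqref{eq:eq_03} coincide with $\mathcal{L}^o_{s_i}$ and $\mathcal{L}^o_{s_j}$ of Eq.~\eqref{eq:eq_04}. The $\alpha$-weighted parts therefore agree exactly, and all of the error comes from the unique objectives. By symmetry it suffices to show
\[
\mathcal{L}_{u_i}(\theta^*_{ij}) \ge \mathcal{L}^o_{u_i}(\theta^*_{ij}) - (1+\lambda)\delta_c
\]
and then double the bound.

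\textbf{Coverage term.} Compare $I(u^*_{ij}, s^*_{ji}; X_i)$ with $I(u^*_{ij}, X_j; X_i)$. Since $s^*_{ji}$ is a stochastic function of $X_j$, the data processing inequality gives the wrong direction. So I would use the chain rule to write
\[
I(u^*_{ij}, X_j; X_i) - I(u^*_{ij}, s^*_{ji}; X_i) \le I(X_j; X_i \mid u^*_{ij}, s^*_{ji}),
\]
which is at most $I(X_j; X_i \mid s^*_{ji})$ up to the usual conditioning subtleties. I would then invoke the definition of $\delta_c$ from \citet{wang_information_2025}: as the gap between the optimal shared representation and full coverage of $I(X_i; X_j)$, it should satisfy $I(X_i; X_j) - I(s^*_{ji}; X_i) \le \delta_c$. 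Hence the coverage loss is at most $\delta_c$.

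\textbf{Redundancy term.} Compare $\lambda I(u^*_{ij}; s^*_{ji})$ with $\lambda I(u^*_{ij}; s^*_{ij})$. Under MNI, $s^*_{ij}$ and $s^*_{ji}$ carry the same information, so the two terms would coincide. Without MNI, I would bound their difference via the triangle-type inequality
\[
I(u; s) \le I(u; s') + H(s \mid s')
\]
or, more sharply, via the Markov structure $s^*_{ij} - X_i - X_j - s^*_{ji}$ to bound the discrepancy by the residual $I(s^*_{ji}; X_j \mid X_i)$ plus the coverage gap. Both quantities should again be controlled by $\delta_c$, giving a loss of at most $\lambda\delta_c$.

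\textbf{Main obstacle.} This redundancy step is where I expect the difficulty. Mutual information lacks a clean triangle inequality, so the argument will likely need the specific characterization of $\delta_c$ from \citet{wang_information_2025}, namely the bound $I(X_i; X_j) - I(s^*; X) \le \delta_c$ together with bounded leakage. If that fails, a fallback is to choose a different candidate for the unique component: take $u_{ij} = u^*_{ij}$ but replace $s_{ji}$ by a representation adjusted so that its mutual information with $u^*_{ij}$ matches, absorbing the mismatch into $\delta_c$.

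Summing the two unique-term losses and doubling by symmetry gives the total $2(1+\lambda)\delta_c$, and combining with the exact agreement of the shared terms proves the claim.
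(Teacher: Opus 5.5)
Your reduction matches the paper's: use $\mathcal{J}(\hat{\theta}_{ij})\ge\mathcal{J}(\theta^*_{ij})$, cancel the shared terms, and charge $\delta_c$ to each coverage gap and $\lambda\delta_c$ to each redundancy gap. Your coverage step is correct; the paper simply cites Proposition~4 of \citet{wang_information_2025} for it. The ``conditioning subtlety'' is harmless because $u^*_{ij}$ depends on $X_i$ only, so $I(X_j;X_i\mid u^*_{ij},s^*_{ji})\le I(X_j;X_i,u^*_{ij}\mid s^*_{ji})=I(X_j;X_i\mid s^*_{ji})=I(X_i;X_j)-I(X_i;s^*_{ji})\le\delta_c$.

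The genuine gap is the redundancy step. You do not actually prove it, and neither suggested route can work. The bound $I(u;s)\le I(u;s')+H(s\mid s')$ leaves an entropy $H(s^*_{ji}\mid s^*_{ij})$ about which $\delta_c$ says nothing; for continuous representations it is a differential entropy without even a useful sign. The Markov-chain route brings in the leakage $I(s^*_{ji};X_j\mid X_i)$, but $\delta_c$ controls only the coverage gap. The leakage is the quantity traded off against coverage, and with MNI unattainable it does not become small when $\delta_c$ does, so it cannot be absorbed into $\lambda\delta_c$. Your fallback of altering $s_{ji}$ would also destroy the exact cancellation of the shared terms that your argument relies on.

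The missing idea is to not compare $s^*_{ij}$ with $s^*_{ji}$ at all, but to pass through $X_j$. Since $I(u_{ij};s_{ji})$ enters $\mathcal{J}$ with a minus sign, you need an upper bound on it, and here data processing goes the right way: $I(u^*_{ij};s^*_{ji})\le I(u^*_{ij};X_j)$ by the chain $u^*_{ij}-X_i-X_j-s^*_{ji}$. Then use $I(u;X_j)-I(u;s)=I(u;X_j\mid s)-I(u;s\mid X_j)$ and drop the last (nonnegative) term:
\[
I(u^*_{ij};X_j)-I(u^*_{ij};s^*_{ij}) \le I(u^*_{ij};X_j\mid s^*_{ij}) \le I(X_i;X_j\mid s^*_{ij}) = I(X_i;X_j)-I(X_j;s^*_{ij}) \le \delta_c .
\]
The second inequality uses that $u^*_{ij}$ and $s^*_{ij}$ are both computed from $X_i$, so $u^*_{ij}\perp X_j\mid (X_i,s^*_{ij})$. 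The equality uses $s^*_{ij}\perp X_j\mid X_i$.

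So the redundancy gap is controlled by the coverage constraint on $s^*_{ij}$ with respect to $X_j$, just as your coverage gap is controlled by the constraint on $s^*_{ji}$; no leakage term appears. This gives $I(u^*_{ij};s^*_{ji})-I(u^*_{ij};s^*_{ij})\le\delta_c$, and with symmetry the $2(1+\lambda)\delta_c$ bound follows.
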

        \begin{proof}
            \renewcommand{\qedsymbol}{}
             See Appendix \ref{proofs}
        \end{proof}
    \end{grayoutlinebox}
    
    Together, Theorems~\eqref{theo: the_01} and~\eqref{theo: the_02} establish that our coupled objective is not merely a practical surrogate, but a formally grounded solution for the ideal step-by-step solution. This is not just a convenience, but rather a prerequisite for scalability beyond two modalities, where a sequential treatment of all shared and unique components becomes computationally prohibitive. By reconciling joint optimization with formal guarantees under both attainable and unattainable MNI, we offer a robust solution that scales without sacrificing theoretical integrity.
    
\section{Experiments} 
    In this section we evaluate RePercENT across three core axes: i) the trade-off between representation quality and scalability as the number of modalities grows, ii) the value of the resulting granular decomposition relative to alignment-based approaches and multi-encoder disentanglement variants, and iii) robustness to missing modalities. 
    
    \subsection{Experimental setup} 
     % For each sample, 
    \paragraph{Datasets}
    Our analysis spans a challenging synthetic setup, and two real-world settings, covering figurative language and oncology. For the synthetic experiments, we generate each \(Z_i \in \mathbb{R}^{E}\), following Definition~\ref{def: 2.2}, by sampling independently from a normal distribution all the relevant atomic representations \(z_A \sim \mathcal{N}(\mathbf{0},\sigma^2 I_{d_\ell}), ~i \in A\). This construction provides direct access to the ground-truth unique and shared representations \(\mathbf{u}_{ij}\) and \(\mathbf{s}_{ij}\). Unlike prior work~\citep{liang_factorized_2023, wang_information_2025}, we further map each \(Z_i\) into a sequence representation. Importantly, this creates a more realistic setting in which information is distributed across multiple embeddings, as in image patches or text tokens, rather than compressed into a single pooled vector. Further details are provided in Appendix~\ref{synthetic_data_creation}.
% For the synthetic experiments, we generate \(N\) representations $Z_i$, for each modality $i$, by sampling every atomic representation \(z_A \sim \mathcal{N}(\mathbf{0},\sigma^2 I_{d_\ell}), ~i \in A\). Following Definition~\ref{def: 2.2}, each \(Z_i \in \mathbb{R}^{D}\) is obtained by concatenating all sampled atoms with \(i\in A\). Since the atoms are explicitly known, the ground-truth unique and shared representations \(\mathbf{u}_{ij}\) and \(\mathbf{s}_{ij}\) are directly available. While prior synthetic settings operate directly on the vector representation~\citep{liang_factorized_2023, wang_information_2025}, we further map each \(Z_i\) into a sequence representation \(\mathbf{Z}_i \in \mathbb{R}^{S_i \times D}\). Importantly, this creates a more realistic setting in which information is distributed across multiple embeddings, as in image patches or text tokens, rather than compressed into a single pooled vector. Further details are provided in Appendix~\ref{appendix_E.1}. 
    For figurative language, we employ the \textbf{IRFL} benchmark~\citep{yosef2023irfl}, covering multiple forms of figurative speech, including idioms, metaphors, and similes. We retain all figurative image-caption pairs along with their literal definitions, yielding three modalities per sample: \texttt{Image}, \texttt{Caption}, and \texttt{Definition}. We split the data into 2,594 training and 786 test samples, ensuring that no image or caption is shared across the two sets.
    Lastly, to test RePercENT in a challenging clinical setting, we consider a subset of the TCGA multimodal oncology cohort, processed by \textbf{HONeYBEE}~\citep{Tripathi2025.04.22.25326222}. This covers 10 cancer types and contains pre-extracted embeddings across four modalities, including \texttt{Clinical} records, \texttt{Pathology} reports, \texttt{Whole-Slide Images} (WSI), and \texttt{Molecular} data. For each modality we group all the available embeddings associated with the same patient, and split them into \(4{,}585\) train and \(1{,}147\) test patients.
    
    \paragraph{Baselines} We compare RePercENT against disentanglement based approaches, including MLP, gMLP~\citep{liu2021pay}, and GRU~\citep{chung_empirical_2014} variants with separate representation-specific encoders, using the same pre-extracted embeddings and training regime.
    % allowing us to isolate the effect of RePercENT’s explicit architectural routing of shared and unique information.
    We additionally benchmark our framework against CLIP baselines, including zero-shot CLIP, projection-head, and end-to-end fine-tuning. Trained with the standard CLIP contrastive objective~\citep{pmlr-v139-radford21a}, these variants provide a direct comparison between alignment-based approaches and disentangled decomposition. In the oncology setting, we use the original HONeYBEE embeddings, assessing whether the learned decomposition improves the utility of existing biomedical representations. Finally, for missing-modality robustness, we compare against diverse fusion baselines over WSI and molecular embeddings, spanning conventional late averaging, mean-imputation fusion, and models explicitly trained for missing inputs via masking and dropout.
    
    \paragraph{Evaluation protocol}
    To assess the quality of the extracted representations, we conduct a linear probing experiment on the synthetic setting. We associate each ground-truth representation $\mathbf{u}_{ij}$ and $\mathbf{s}_{ij}$, with binary labels $y_{u_{ij}}$ and $y_{s_{ij}}$ respectively, via a non-linear deterministic projection. The learned representation $\hat{\mathbf{u}}_{ij}$, should accurately predict $y_{u_{ij}}$, i.e. $100\%$ in the ideal case, while remaining uninformative about $y_{s_{ij}}$, i.e. near chance-level $50\%$, while $\hat{\mathbf{s}}_{ij}$ should exhibit the opposite pattern. We summarize each model's deviation from the ideal behavior with the quantity $\Delta_{model}$, defined as the average mismatch from these target accuracies (full definition provided in Appendix \ref{delta_metrics}).
    
    For IRFL, each test sample consists of one figurative text description paired with one correct image match along with three distractors drawn from partially literal or random images (see Figure \ref{fig: irfl_detection_task}). We measure the fraction of samples where the correct text-image cosine similarity exceeds all distractor similarities and report top-1 accuracy. For the disentanglement models, we compute similarity using the shared \texttt{Image}--\texttt{Caption} representations, while for the CLIP variants, we directly compare the aligned \texttt{Image}--\texttt{Caption} embeddings.

     For the TCGA dataset, we train our model using all four available modalities and evaluate our granular disentangled representations on the cancer-type classification task, through linear probing. During inference, we consider exclusively \texttt{WSI} and \texttt{Molecular} data and omit clinical and pathology reports, as they may encode explicit information revealing cancer types, e.g. tumor grade. For the considered pair, we further simulate missingness by progressively reducing \texttt{WSI} availability.
     \begin{wrapfigure}{r}{0.4\textwidth}
        \vspace{-1.25em}
        \centering
        \includegraphics[width=0.98\linewidth]{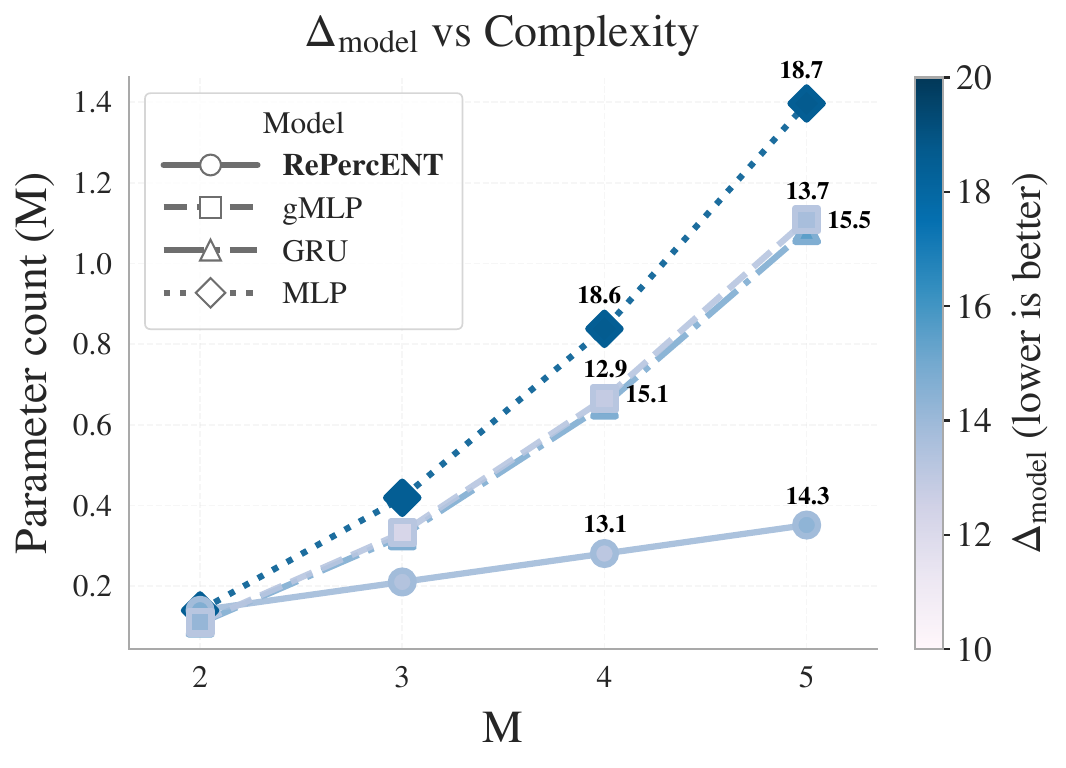}
    
        % \vspace{0.3em}
        \includegraphics[width=\linewidth]{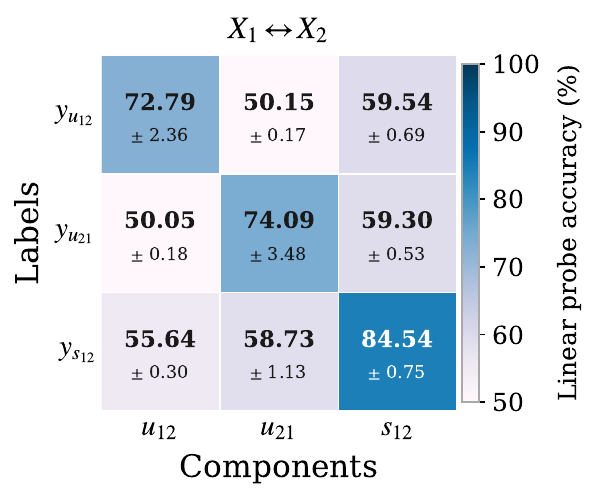}
    
        \caption{
        \textbf{Top:} Synthetic performance across modality count and parameter budgets. RePercENT achieves competitive scores at substantially lower model complexity.
        \textbf{Bottom:} Linear-probe confusion matrix for \(M=2\).
        }
        \label{Figure03: results_synthetic}
        \vspace{-4em}
    \end{wrapfigure}
    %  \begin{figure}
    %     \centering
    %     \begin{subfigure}[t]{0.33\textwidth}
    %         \centering
    %         \includegraphics[width=\textwidth]{figures/delta_to_ideal_modalities_params_heatmap.pdf}
    %     \end{subfigure}
    %     \hspace{2em}
    %     \begin{subfigure}[t]{0.3\textwidth}
    %         \centering
    %         \includegraphics[width=\textwidth]{figures/summary_pairwise_confusion_matrices_repercent_2m.pdf}
    %     \end{subfigure}
    %     \caption{\textbf{Left:} Model performance on the synthetic dataset across numbers of modalities and parameter budgets. RePercENT achieves competitive scores at significantly lower model complexity. \textbf{Right:} Confusion matrix for the linear probe accuracy, for the case of two modalities.}
    %     \label{Figure03: results_synthetic}
    %     \vspace{-1.5em}
    % \end{figure}
    \vspace{-1em}
    \subsection{Results}
    \paragraph{Scalability and component recovery}
    Figure~\ref{Figure03: results_synthetic} reports each model’s deviation from ideal disentanglement, \(\Delta_{\text{model}}\) (see Appendix \ref{delta_metrics}), w.r.t. parameter count as the number of modalities grows. For the multi-encoder baselines, scaling increases only the number of encoders while keeping per-component capacity fixed, whereas RePercENT scales by adding latent slots and disentanglement modules. Notably, RePercENT is the only method with linear scaling in \(M\), while maintaining competitive disentanglement performance. The \(M=2\) confusion matrix further shows strong intra-component prediction and limited cross-component leakage, indicating effective separation of unique and shared components. A detailed breakdown in Appendix~\ref{synthetic_dataset_experiments} shows that only RePercENT and gMLP recover all components, while GRU and MLP suffer representation collapse as \(M\) increases. This underlines that successful component recovery depends on both the objective and the architecture. Accordingly, Table~\ref{tab:model_complexity_irfl} in Appendix \ref{scaling_complexity_irfl} shows that gMLP requires roughly \(2\times\) more parameters and \(7\times\) more FLOPS than RePercENT without improving figurative language detection performance.
    \begin{table}
    \caption{IRFL detection task and RePercENT ablation results over 5 random seeds. Top-1 accuracy (\%), reported as (mean \(\pm\) std). SE denotes semantic encodings and GSA denotes group slot attention.}
    \label{tab:irfl_results}
    \footnotesize
    \setlength{\tabcolsep}{4pt}
    \renewcommand{\arraystretch}{0.95}
    \resizebox{\textwidth}{!}{
    \begin{tabular}{@{}lcccccc@{}}
        \toprule
        \textbf{Model} & \textbf{SE} & \textbf{GSA} & \textbf{Idioms} & \textbf{Simile} & \textbf{Metaphor--\scriptsize OoD} & \textbf{Overall} \\
        \midrule

        \multicolumn{7}{@{}l}{\textit{Baselines}} \\
        CLIP-ViT-B/32 \scriptsize{(zero-shot)}
            & -- & --
            & 16.00
            & 45.49
            & 23.42
            & 29.14 \\
        CLIP-ViT-B/32 \scriptsize{(projection-only FT)}
            & -- & --
            & $20.4 \pm 1.0$
            & \underline{$68.38 \pm 1.1$}
            & $31.59 \pm 1.0$
            & $41.41 \pm 0.7$ \\
        CLIP-ViT-B/32 \scriptsize{(end-to-end FT)}
            & -- & --
            & $22.5 \pm 1.8$
            & $\mathbf{70.18 \pm 0.6}$
            & $29.61 \pm 1.0$
            & $41.73 \pm 1.0$ \\
        gMLP
            & -- & --
            & $\mathbf{33.70 \pm 2.4}$
            & $54.37 \pm 2.1$
            & $28.23 \pm 2.4$
            & $38.52 \pm 0.7$ \\
        GRU
            & -- & --
            & $36.10 \pm 3.1$
            & $48.81 \pm 3.0$
            & \underline{$33.69 \pm 1.9$}
            & $39.46 \pm 1.1$ \\

        \midrule
        \multicolumn{7}{@{}l}{\textit{architectural ablations}} \\
        RePercENT
            & \xmark & \xmark
            & $28.20 \pm 5.0$
            & $63.97 \pm 2.0$
            & $27.87 \pm 3.96$
            & $40.30 \pm 1.7$ \\
        RePercENT
            & \xmark & \tmark
            & $30.20 \pm 4.3$
            & $63.11 \pm 2.9$
            & $30.45 \pm 3.38$
            & $41.56 \pm 1.34$ \\
        RePercENT
            & \tmark & \xmark
            & $32.60 \pm 2.2$
            & $61.88 \pm 0.9$
            & $31.95 \pm 5.7$
            & $42.35 \pm 2.24$ \\
        \rowcolor{gray!12}
        \textbf{RePercENT}
            & \tmark & \tmark
            & \underline{$33.60 \pm 5.3$}
            & $61.88 \pm 3.1$
            & $\mathbf{35.56 \pm 3.4}$
            & $\mathbf{44.07 \pm 0.8}$ \\

        \bottomrule
        \end{tabular}
        }
        \vspace{-1.5em}
    \end{table}
    \begin{table}
        \centering
        \caption{Cancer type prediction accuracy (\%) across TCGA cohorts. In each modality block, the table compares baseline HONeYBEE embeddings with the derived disentangled components (mean ± std), highlighting the absolute performance gains, i.e., $\Delta$ \textbf{RePercent}.}
        \label{tab:cancer_type_classification}
        \setlength{\tabcolsep}{3pt}
        \renewcommand{\arraystretch}{1.05}
        \resizebox{\textwidth}{!}{%
        \begin{tabular}{lcccccccccc}
            \toprule
            \textbf{Representation} & \textbf{BRCA} & \textbf{COAD} & \textbf{GBM} & \textbf{HNSC} & \textbf{KIRC} & \textbf{LGG} & \textbf{LUAD} & \textbf{LUSC} & \textbf{OV} & \textbf{PRAD} \\
            \midrule
            \multicolumn{11}{l}{\textit{WSI}} \\
            Honeybee
            & \underline{95.4} & 32.2 & \underline{41.5} & 18.3 & 48.6 & 22.6 & 26.0 & 14.3 & 36.2 & 29.0 \\
            $U_{\mathrm{wsi},\mathrm{mol}}$
            & 79.2 $\pm$ 1.8 & \underline{39.3 $\pm$ 3.7} & 39.8 $\pm$ 2.5 & 28.5 $\pm$ 3.2 & 61.3 $\pm$ 4.2 & 28.8 $\pm$ 2.6 & 24.2 $\pm$ 2.5 & 22.7 $\pm$ 1.3 & 47.8 $\pm$ 3.1 & 37.8 $\pm$ 1.9 \\
            $S_{\mathrm{wsi},\mathrm{mol}}$
            & 81.6 $\pm$ 1.5 & 37.2 $\pm$ 4.8 & 39.8 $\pm$ 2.0 & 26.7 $\pm$ 3.0 & 61.1 $\pm$ 2.3 & 24.3 $\pm$ 2.3 & 26.0 $\pm$ 2.6 & 20.8 $\pm$ 3.1 & 46.9 $\pm$ 2.5 & 34.8 $\pm$ 2.8 \\
            $D_{\mathrm{wsi},\mathrm{mol}}$
            & 79.3 $\pm$ 1.8 & 39.1 $\pm$ 2.7 & 40.2 $\pm$ 1.5 & \underline{28.7 $\pm$ 2.2} & \underline{63.4 $\pm$ 2.1} & \underline{31.0 $\pm$ 2.8} & \underline{27.8 $\pm$ 2.2} & \underline{23.3 $\pm$ 1.5} & \underline{51.2 $\pm$ 1.4} & \underline{38.8 $\pm$ 2.8} \\
            \textbf{$\Delta$ RePercENT}
            & \cellcolor{red!8}\textbf{-13.8}
            & \cellcolor{green!8}\textbf{+7.1}
            & \cellcolor{red!8}\textbf{-1.3}
            & \cellcolor{green!8}\textbf{+10.4}
            & \cellcolor{green!8}\textbf{+14.8}
            & \cellcolor{green!8}\textbf{+8.4}
            & \cellcolor{green!8}\textbf{+1.8}
            & \cellcolor{green!8}\textbf{+9.0}
            & \cellcolor{green!8}\textbf{+15.0}
            & \cellcolor{green!8}\textbf{+9.8} \\
            \midrule
    
            \multicolumn{11}{l}{\textit{Molecular}} \\
            Honeybee
            & 69.8 & 21.8 & \underline{100.0} & 31.7 & 2.8 & 2.9 & \underline{58.0} & 51.0 & 65.5 & 74.0 \\
            $U_{\mathrm{mol},\mathrm{clin}}$
            & 67.9 $\pm$ 0.9 & 32.9 $\pm$ 3.7 & 98.1 $\pm$ 0.4 & 66.7 $\pm$ 1.9 & 60.6 $\pm$ 2.9 & 50.0 $\pm$ 2.1 & 51.0 $\pm$ 3.4 & 65.3 $\pm$ 3.8 & \underline{80.3 $\pm$ 1.4} & 91.0 $\pm$ 0.7 \\
            $S_{\mathrm{mol},\mathrm{clin}}$
            & 67.5 $\pm$ 2.9 & 33.6 $\pm$ 2.7 & 98.3 $\pm$ 0.0 & 66.0 $\pm$ 1.5 & 60.4 $\pm$ 3.1 & 49.2 $\pm$ 1.6 & 50.8 $\pm$ 4.3 & 66.7 $\pm$ 1.7 & 78.5 $\pm$ 2.9 & 91.8 $\pm$ 0.5 \\
            $D_{\mathrm{mol},\mathrm{clin}}$
            & \underline{70.0 $\pm$ 1.4} & \underline{37.5 $\pm$ 3.1} & 98.3 $\pm$ 0.0 & \underline{67.9 $\pm$ 1.1} & \underline{62.2 $\pm$ 1.6} & \underline{52.8 $\pm$ 1.8} & 55.4 $\pm$ 5.2 & \underline{72.0 $\pm$ 2.8} & 79.5 $\pm$ 1.4 & \underline{92.0 $\pm$ 0.0} \\
            \textbf{$\Delta$ RePercENT}
            & \cellcolor{green!8}\textbf{+0.2}
            & \cellcolor{green!8}\textbf{+15.7}
            & \cellcolor{red!8}\textbf{-1.7}
            & \cellcolor{green!8}\textbf{+36.2}
            & \cellcolor{green!8}\textbf{+59.4}
            & \cellcolor{green!8}\textbf{+49.9}
            & \cellcolor{red!8}\textbf{-2.6}
            & \cellcolor{green!8}\textbf{+21.0}
            & \cellcolor{green!8}\textbf{+14.8}
            & \cellcolor{green!8}\textbf{+18.0} \\
            \bottomrule
        \end{tabular}}
        \vspace{-1em}
    \end{table}
    \paragraph{Disentanglement performance}
    Table \ref{tab:irfl_results} outlines the results for the IRFL task. The reported performance suggests that, overall, disentanglement improves over the fine-tuned CLIP variants, and RePercENT outperforms all models in terms of overall accuracy, achieving additionally the highest Out-of-Distribution (OoD) performance. Comparing our method with alignment approaches, we observe that isolating the shared component yields superior performance, especially for idioms and metaphors, but is not ideal for similes. This is consistent with the linguistic nature of the task, as similes are typically more explicit and structurally constrained, favoring direct cross-modal alignment. By contrast, our principled decomposition appears to benefit idioms and mainly metaphors since they are more abstract and context-dependent. The ablation further motivates RePercENT's key structural mechanisms: removing both semantic encodings and group slot attention gives the weakest performance, while enabling either component improves accuracy. Using both results in the best performance, improving overall accuracy from \(40.30\) to \(44.07\), with large gains on idioms (\(28.2 \to 33.60\)) and metaphors (\(27.87 \to 35.56\)).
    
    Table~\ref{tab:cancer_type_classification} compares the original HONeYBEE embeddings with the RePercENT decompositions, where \(U_{ij}\), \(S_{ij}\), and \(D_{ij}=U_{ij}\oplus S_{ij}\) denote the unique, shared, and complete decomposition of a target modality \(i\) relative to modality \(j\). RePercENT improves prediction accuracy over HONeYBEE embeddings for most cancer types, with the largest gains observed on \texttt{Molecular} data. Remarkably, for KIRC and LGG, where raw molecular embeddings are nearly uninformative, the decomposed representations recover substantially higher accuracy, suggesting that cross-modal conditioning exposes predictive structure that is hidden in the original embedding space. Gains are smaller when HONeYBEE is already highly predictive, such as WSI for BRCA or molecular data for GBM. 
    Notably, the full decomposition $D_{ij}$ consistently outperforms both $U_{ij}$ and $S_{ij}$ individually, across most cancer types, challenging the assumption of multi-view redundancy. 
   \begin{wrapfigure}{r}{0.36\textwidth}
        % \vspace{1em}
        \centering
        \includegraphics[width=\linewidth]{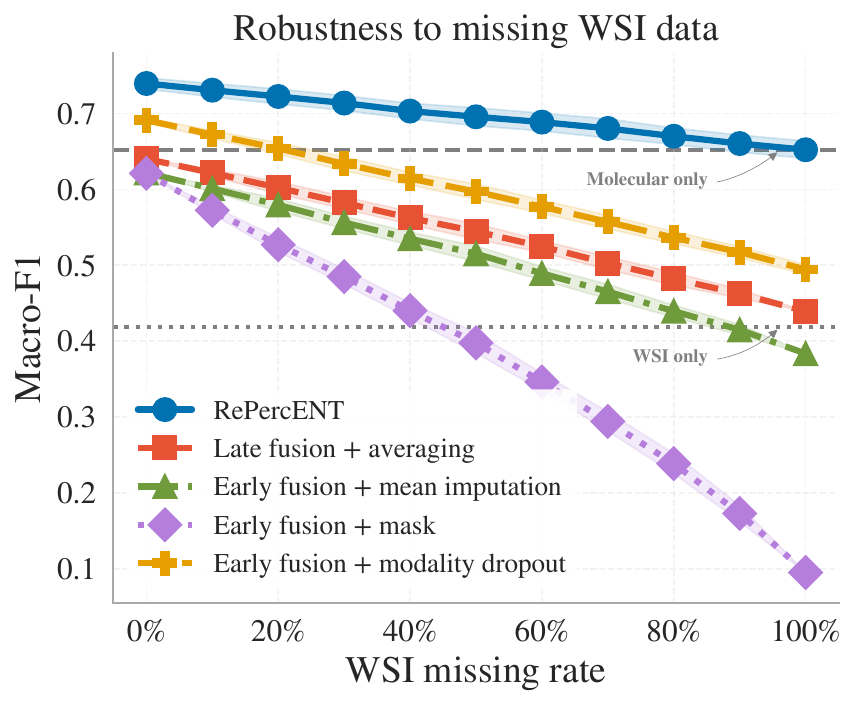}
        \caption{
        We reduce \texttt{WSI} availability while preserving \texttt{Molecular} data. Fusion baselines degrade sharply, whereas RePercENT remains robust.
        }
        \label{fig:honeybee_missingness}
        \vspace{-3em}
    \end{wrapfigure}
    This indicates the need for explicitly modeling the complementary information present among these modalities. 
    % \vspace{-0.8em}
   \paragraph{Robustness to missing modalities} 
   Figure~\ref{fig:honeybee_missingness} examines the robustness of different models as the fraction of available \texttt{WSI} samples decreases. All fusion baselines (see Appendix \ref{missing_modality_baselines}) exhibit a clear performance drop, including those explicitly trained with modality dropout. 
   % This suggests that simply making fusion models missingness-aware is not sufficient when the model still relies on jointly combining modalities at inference. 
   In contrast, RePercENT maintains substantially higher Macro-F1 across all missingness levels. This behavior follows from its inference structure, which allows, both the unique information of the \texttt{Molecular} w.r.t. the \texttt{WSI} as well as the shared between the two modalities, to remain fully accessible. 
   % Therefore, missing modalities do not collapse the full inference pathway, but only remove the information rooted in the absent modality. 
   Notably, whenever \texttt{WSI} information is available, RePercENT improves over both single-modality references, further supporting the argument that there is significant complementary information between the modalities.
\section{Conclusion}
    As multimodal intelligence aims to derive representations that capture the world's complex multimodal reality, it becomes essential to decouple the nuanced interactions across diverse modalities. In this work, we introduce RePercENT, and demonstrate that, in settings involving more than two modalities, disentangled representations are i) beneficial and ii) computationally efficient to learn, while iii) remaining theoretically grounded. Through an extensive validation of our framework across synthetic, figurative language, and complex biomedical benchmarks, we demonstrate competitive performance across diverse modalities, foundation model backbones and tasks, while scaling seamlessly as the number of modalities grows. Ultimately, we believe that Re\textit{Perc}ENT, via its flexible structure and granular representations, will help pave the way towards a more comprehensive, multimodal \textit{Perc}eption.

\newpage

\bibliographystyle{unsrtnat}
\bibliography{references}

\clearpage

%%%%%%%%%%%%%%%%%%%%%%%%%%%%%%%%%%%%%%%%%%%%%%%%%%%%%%%%%%%%

\appendix

\startcontents[appendix]
\section*{Appendix Contents}
\printcontents[appendix]{}{1}{\setcounter{tocdepth}{2}}

\setcounter{theorem}{0}

\clearpage

\section{Useful Notation}
\label{appendix_0}

\begin{table}[h]
    \centering
    \small 
    \caption{Notation summary.}
    \label{tab:notation}
    \begin{tabularx}{\textwidth}{lX} % X column automatically wraps and fills width
    \toprule
    \textbf{Symbol} & \textbf{Meaning} \\
    \midrule
    \rowcolor{gray!10} \multicolumn{2}{l}{\textit{Sets and Indices}} \\
    $M$ & Total number of modalities \\
    $\mathcal{M}$ & Modality index set $\{1,\dots,M\}$ \\
    $\mathcal{E}$ & Set of all unique modality pairs $\{(i,j) \in \mathcal{M}^2 : i < j\}$ \\
    $A \subseteq \mathcal{M}$ & A non-empty subset of modalities \\
    $\mathcal{A}^{u}_{ij}, \mathcal{A}^{s}_{ij}$ & Index sets for atomic components unique to $i$ (w.r.t $j$) or shared by $\{i, j\}$ \\
    
    \midrule
    \rowcolor{gray!10} \multicolumn{2}{l}{\textit{Representations and Components}} \\
    $X_i$ & Input modality $i$ \\
    $Z_i$ & Composite latent representation for modality $i$ \\
    $z_A$ & Atomic representation associated with subset $A$ \\
    $\mathbf{u}_{ij}, \mathbf{s}_{ij}$ & Unique and shared latent components for modality $i$ relative to modality $j$ \\
    $\theta_{ij}$ & Optimization variable as a collection of all pairwise components $(u_{ij},u_{ji},s_{ij},s_{ji})$ \\
    
    \midrule
    \rowcolor{gray!10} \multicolumn{2}{l}{\textit{Architectural Elements}} \\
    $\mathcal{D}_i$ & Disentangling module for modality $i$ \\
    $\mathcal{H}_i$ & Latent-slot array ($\mathbb{R}^{N\times D_l}$) \\
    $h_{ik}$ & The $k$-th individual row of $\mathcal{H}_i$ \\
    $\phi_i$ & Mapping function from latent slots to pairwise components for modality $i$\\
    
    \midrule
    \rowcolor{gray!10} \multicolumn{2}{l}{\textit{Hyperparameters}} \\
    $\alpha$ & Weighting parameter between unique and shared losses \\
    $\beta$ & Penalty parameter for shared component coverage \\
    $\lambda$ & Penalty parameter for redundancy between unique and shared components \\
    \bottomrule
    \end{tabularx}
\end{table}

\section{Supplementary Related Work}
\label{supplementatry_rel_work}
    Multimodal representation learning has revolutionized the way we process and integrate information coming from different modalities~\citep{Liang2022FoundationsT, Krones2024ReviewOM}. 
    Recent advances in multimodal foundation models~\citep{team2023gemini, Li2022BLIPBL, Singh2021FLAVAAF} reflect a growing trend toward flexible, general-purpose AI systems. However, these models typically require large amounts of paired data and extensive pre-training, making them difficult to apply in domains, such as medicine, where paired multimodal data are limited.

    % A notable example is clinical practice, where despite recent progress in biomedical foundation models~\citep{Moor2023MedFlamingoAM, Lee2019BioBERTAP}, large-scale paired pre-training remains difficult due to costly, privacy-sensitive, and unevenly available data across modalities. 
    
    To address this limitation, several approaches have been proposed, such as STRUCTURE~\citep{Grger2025WithLD} and FuseMix~\citep{Vouitsis2023DataEfficientMF}, that align representations into a shared space using frozen foundation models in low-data regimes. Complementary, training-free approaches have also been explored. For instance, Centered Kernel Alignment (CKA)~\citep{Kornblith2019SimilarityON} provides a way to compare representation spaces without additional training, while ASIF~\citep{Norelli2022ASIFCD} constructs a common multimodal space from paired anchor data. On the other hand, UniAlign by \citet{11095214}, improves scalability by using a single encoder to align diverse modalities. 

    While alignment approaches improve efficiency and exploit the expressive power of foundation models, they mainly capture shared cross-modal information. Fusion methods overcome this \citep{hemker_healnet_2024, Chen2023AGS, Xu2023MultimodalOT}, by leveraging joint representations that additionally encode complementary information across modalities. ImageBind~\citep{Girdhar2023ImageBindOE} is another notable example that learns a joint embedding space for six different modalities, showing that not all modality-pairs need to be observed. Additionally, \citet{liang_high-modality_2023} propose HighMMT that learns efficient-joint representations, by scaling multimodal transformers to many modalities and measuring modality and interaction heterogeneity to guide parameter sharing.

    Despite their success, fusion methods usually result to less interpretable representations, and face a difficulty in handling missing modalities. A promising, recent direction to address this is through disentanglement \citep{10.1109/TPAMI.2024.3420937}, where the different modality-interactions are explicitly encoded in separate representations. For example, DIMAF~\citep{eijpe_disentangled_nodate} models shared and specific components between spatial transcriptomics and histology, improving downstream performance and interpretability. \citet{gao_disentangled_nodate}, DrFuse~\citep{yao_drfuse_2024}, and DRIM~\citep{robinet_drim_2024} further explore disentangled fusion under incomplete-modality settings, while FactorCL~\citep{liang_factorized_2023} formalizes multi-view redundancy and separates shared from unique task-relevant information. More recently, \citet{wang_information_2025} propose information-theoretic criteria for controlled two-modality disentanglement.

\section{Limitations and future directions}
\label{limiations_and_future_work}
    Although RePercENT provides a scalable and theoretically grounded framework for high-modality disentanglement, it also reveals several promising directions for future extensions. Our formulation focuses on pairwise unique and shared components, which provide a provably sufficient granularity for modeling the interactions considered in this work. While this choice comes with favorable scaling and formal guarantees, future work includes modeling richer, higher order interactions among three or more modalities through more expressive decompositions.

    In addition, our practical implementation depends on the quality of the pre-extracted modality embeddings. Using foundation models provides strong representations and enables a plug-and-play design; however, it also inherits the trade-off between expressivity and compression~\citep{tishby_information}. This motivates future work on end-to-end training, in applications with sufficient data, which could yield more complete representations. Another promising direction is to dynamically balance the objectives across modality pairs, potentially improving optimization efficiency and the quality of the learned disentangled components. Finally, having a structured disentangled decomposition opens an exciting path toward more interpretable multimodal models, motivating future work to examine how these components encode, exchange, and use information to shape the model's predictions for different downstream tasks.

\section{Impact Statement}
\label{broader_impacts}
The primary objective of this paper is to advance multimodal representation learning under a flexible framework that extracts disentangled pairwise unique and shared components, with applications spanning figurative language, and biomedical research. While this development has the potential to bring about both positive and negative societal or ethical impacts, particularly in areas like biomedical research, we currently do not foresee any immediate societal concerns associated with the proposed methodology.

\section{Information Theory Background}
\label{information_theory_background}

In this section we include useful definitions, and key background elements on which we build upon our information theory objectives. 

\subsection{Basic quantities and identities}
We begin with some basic notions from classical information theory as well as some key inequalities that have been essential for the proofs that follow.

\paragraph{Mutual information (MI)}

Let \((X, Y)\) be a set of continuous random variables over the space \(\mathcal{X} \times\mathcal{Y}\). The mutual information $I(X;Y)$ is defined as:
\begin{equation*}
    I(X; Y) = \iint_{\mathcal{X} \times \mathcal{Y}} p_{X,Y}(x, y) \log \left( \frac{p_{X,Y}(x, y)}{p_X(x) p_Y(y)} \right) \, dx \, dy = D_{KL}(p_{X, Y} \mathrel{\Vert} p_Xp_Y)
\end{equation*}
where $p_{X, Y}$ is the joint probability density function of $X, Y$, $p_X$ and $p_Y$ are the marginal probability density functions of $X$ and $Y$ respectively, and $D_{KL}$ is the \textit{Kullback-Leibler divergence}. Intuitively, MI measures how much knowing one of these variables reduces uncertainty about the other.

\paragraph{Conditional mutual information (CMI)}

Let \(X, Y\) and \(Z\) be three continuous random variables over the spaces $\mathcal{X}$, $\mathcal{Y}$ and $\mathcal{Z}$. The conditional mutual information (CMI) $I(X;Y \mid Z)$ is denoted as:

\begin{align*}
     I(X; Y) &= \iiint_{\mathcal{X} \times \mathcal{Y} \times \mathcal{Z}} p_{X,Y, Z}(x, y, z) \log \left( \frac{p_{X, Y, Z}(x, y, z) p_Z(z)}{p_{X, Z}(x, z)p_{Y, Z}(y, z)} \right) \, dx \, dy \, dz \\
     &= \int_\mathcal{Z} D_{KL}(p_{(X, Y) \mid Z} \mid \mid p_{X\mid Z} \otimes p_{Y \mid Z}) \, dz,
\end{align*}

where all the conditional, marginal and joint probability distributions are denoted as $p$ with the corresponding subscript. Similarly to the MI, CMI expresses the mutual information between $X$ and $Y$ when conditioning on a third variable $Z$.

\paragraph{Useful properties}

We present below several useful identities \& inequalities, used extensively in the following proofs.
\begin{itemize}
    \item $I(X ; Y) \ge 0$, non-negativity of MI
    \item $I(X;Y \mid Z) \ge 0$, non-negativity of CMI
    \item $I(X;Y) = I(Y;X)$, symmetry
    \item $I(X;Y, Z) = I(X;Y \mid Z) + I(X;Z)$, chain rule of MI
    \item $I(X;Y)-I(X;Z)= I(X;Y\mid Z)-I(X;Z\mid Y)$
\end{itemize}
The last one is derived, by applying the chain rule of MI in its two equivalent forms.

\subsection{Multi-view redundancy}
We extend the notion of multi-view redundancy, proposed by \cite{liang_factorized_2023}, to the general multimodal setting as follows:
    \begin{grayoutlinebox}
        \begin{definition}
            \textit{Multi-view redundancy}: 
            Let $\mathcal{X} =\{X_1, X_2, \dots, X_M\}$ be a set of $M$ modalities, and $Y$ a downstream task. Let also, $X_{-i} =\{X_1, \dots,X_{i-1}, X_{i + 1}, \dots X_M\}$ represent the set of all modalities except $X_i$. We say that multi-view redundancy holds, if there exists sufficiently small $\epsilon > 0$, such that: 
            $$ I(X_{-i}; Y \mid X_i) < \epsilon, \quad \forall i \in \mathcal{M}. $$
        \end{definition}
    \end{grayoutlinebox}

This formulation implies that each $X_i$ captures sufficient information to perform the task $Y$ and any complementary information from the remaining modalities offers insignificant predictive gain.

\subsection{Minimum Necessary Information}

In the simpler unimodal case, assume a dataset \((X, Y)\) with observations $X$ and target labels $Y$. Given the Markov structure $Z \leftarrow X \leftrightarrow Y$, we want to derive representations $Z$ that capture useful information from $X$ that is relevant to task $Y$~\citep{tishby_information}. As introduced by \cite{fischer_conditional_2020}, the \textit{Minimum Necessary Information} (MNI) criterion 
for a representation $Z$ is satisfied, when i) the representation $Z$ attains all the \textit{necessary} information from $X$ to perform $Y$, and ii) given all the possible representations $Z$ that solve the task $Y$, the desired $Z$ needs to be \textit{minimal}, i.e. achieve $\inf_{Z \in \mathcal{Z}}I(Z;X, Y)$. In other words, at the MNI point it holds:
\begin{equation}
    I(X;Y) = I(X;Z) = I(Y;Z)
    \label{eq: unimodal_mni_point}
\end{equation}

When extending this idea to the multimodal self-supervised setting, according to \cite{wang_information_2025}, let us take the observations \((X_i, X_j)\) from two separate modalities. We would like to derive the representations $s_{ij}$ and $s_{ji}$, respecting the Markov structures: \(s_{ij} \leftarrow X_i \leftrightarrow X_j\) and $s_{ji} \leftarrow X_j \leftrightarrow X_i$, that optimally balance between expressivity and compression. In other words, $s_{ij}$ is desired to capture all the necessary information from $X_i$ that is shared with $X_j$, while minimizing redundancy with $X_i$. The same holds for $s_{ji}$. Formally, when the MNI point is attainable, $s_{ij}$ and $s_{ji}$ satisfy,
\begin{equation*}
    I(s_{ij};X_i) = I(s_{ij};X_j) = I(X_i;X_j), \qquad I(s_{ji};X_i) = I(s_{ji};X_j) = I(X_i;X_j).
    \label{eq: multimodal_mni_point}
\end{equation*}

\subsection{Information gap under unattainable MNI}

In real world scenarios, the MNI point described in Eq. \ref{eq: multimodal_mni_point} is not always attainable, and therefore \cite{wang_information_2025} propose the following definition for extracting the  optimal shared representations $s_{ij}$ and $s_{ji}$:
\begin{equation}
    \begin{aligned}
        s^*_{ij}
        &\in \arg\min_{s_{ij}} I(s_{ij}; X_i \mid X_j), && \text{s.t. } I(X_i; X_j) - I(s_{ij}; X_j) \leq \delta_c, \\
        s^*_{ji}
        &\in \arg\min_{s_{ji}} I(s_{ji}; X_j \mid X_i), && \text{s.t. } I(X_i; X_j) - I(s_{ji}; X_i) \leq \delta_c .
    \end{aligned}
\label{eq: shared_unattainable_mni}
\end{equation}

Minimizing $I(s_{ij}; X_i \mid X_j)$ penalizes information retained by $s_{ij}$ about $X_i$ that is not explained by $X_j$. The constraint, on the other hand, ensures that \(s_{ij}\) preserves nearly all shared information between \(X_i\) and \(X_j\), up to an information gap \(\delta_c\). The same interpretation applies symmetrically to \(s_{ji}\). Notice that if MNI was always attainable, we could substitute $I(X_i; X_j) - I(s_{ij}; X_j) \leq \delta_c$ with $I(X_i; X_j) = I(s_{ij}; X_j)$, as suggested by \cite{10.1109/TPAMI.2017.2784440}. However when MNI is unattainable, this leads to a suboptimal solution and therefore the parameter $\delta_c$ is introduced, which controls the gap between capturing the complete shared information and having a compressed representation. Using the Lagrangian formulation of constrained problem in Eq. \eqref{eq: shared_unattainable_mni}, we end up with the first part of the optimal step-by-step optimization objective:
\begin{equation}
    \begin{aligned}
    s^*_{ij} &\in \arg\max_{s_{ij}} \mathcal{L}^{o}_{s_i} \; =\arg\max_{s_{ij}} \; I(s_{ij}; X_j) - \beta\, I(s_{ij}; X_i \mid X_j), \\
    s^*_{ji} &\in \arg\max_{s_{ji}} \mathcal{L}^{o}_{s_j} \; =\arg\max_{s_{ji}} \; I(s_{ji}; X_i) - \beta\, I(s_{ji}; X_j \mid X_i).
    \end{aligned}
    \label{eq:eq_04_repeat}
\end{equation}
Here, \(\beta>0\) controls the trade-off between preserving cross-modal shared information and removing modality-specific information. As shown by \cite{wang_information_2025}, when MNI is attainable, then for any positive $\beta$, maximizing the objective $\mathcal{L}^o_{i}$ achieves MNI, and there exists a bijective mapping from $\beta$ in $\mathcal{L}^o_{s_i}$ to the value of the information constraint $\delta_c$ as defined in Eq. \eqref{eq: shared_unattainable_mni}.

 % ves, that optimally balance relevance and redundancy in the shared representation, while promoting coverage and disentanglement in the modality-specific representation. More precisely, for a modality pair \(i, j\), then each observed pair of the two modalities \(X_i, X_j\), is generated from two underlying modality-specific representations $u_{ij}$ and $u_{ji}$ respectively, that contain information unique to each modality, as well as a shared representation $s$ that is common to both. During inference, we assume the Markov structures, \(\hat{s}_{ij} \leftarrow X_i \leftrightarrow X_j\) and \(\hat{s}_{ji} \leftarrow X_j \leftrightarrow X_i\), i.e. we extract both the shared and unique representations from both modalities independently. 

\section{Proofs}
\label{proofs}
    \subsection{Proof of Theorem \ref{theo: the_01}}
    \begin{grayoutlinebox}
        \begin{theorem}
            \textit{Exact optimality under attainable MNI: }
            Let \(\theta^*_{ij} = (u^*_{ij}, u^*_{ji}, s^*_{ij}, s^*_{ji})\) be an optimal solution of the step-by-step optimization problem defined in Eqs.~\eqref{eq:eq_04} and \eqref{eq:eq_05}. 
            If the MNI criterion is attainable and \(\alpha > \lambda\), then \(\theta^*_{ij}\) is a global optimizer of the joint objective defined in Eqs.~\eqref{eq:eq_01} - \eqref{eq:eq_03}. Equivalently,
            \[
            \theta^*_{ij} \in \arg\max_{\theta_{ij}}\mathcal{J}(\theta_{ij}).
            \]
        \end{theorem}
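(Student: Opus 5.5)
The plan is to produce a universal upper bound on $\mathcal{J}(\theta_{ij})$ over all feasible $\theta_{ij}$, and then check that $\theta^*_{ij}$ attains it. The only structural facts used are those fixed by the architecture: $(u_{ij},s_{ij})$ are generated from $X_i$ alone and $(u_{ji},s_{ji})$ from $X_j$ alone. This gives the Markov chains $s_{ij}\leftarrow X_i\leftrightarrow X_j$ and $u_{ij}\leftarrow X_i\leftrightarrow X_j$, together with their mirror images.

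\textbf{Step 1 (shared terms).} By the data-processing inequality along $s_{ij}\leftarrow X_i\leftrightarrow X_j$, we have $I(s_{ij};X_j)\le I(X_i;X_j)$. Since also $I(s_{ij};X_i\mid X_j)\ge 0$, it follows that $\mathcal{L}_{s_i}\le I(X_i;X_j)$ for every $s_{ij}$. Under attainable MNI, the result of \citet{wang_information_2025} recalled in Appendix~\ref{information_theory_background} shows that $s^*_{ij}$ reaches the MNI point: $I(s^*_{ij};X_j)=I(s^*_{ij};X_i)=I(X_i;X_j)$ and $I(s^*_{ij};X_i\mid X_j)=0$. Hence $\mathcal{L}_{s_i}(\theta^*_{ij})=I(X_i;X_j)$, and the symmetric statement holds for $s^*_{ji}$. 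I will record the shared deficit $\Delta_{s_j}:=I(X_i;X_j)-I(s_{ji};X_i)\ge 0$, since it is the quantity that will later absorb the losses in the unique terms.

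\textbf{Step 2 (unique terms).} Because $s_{ji}$ depends only on $X_j$, data processing gives $I(u_{ij},s_{ji};X_i)\le I(u_{ij},X_j;X_i)$. I then decompose $I(u_{ij},X_j;X_i)=I(X_j;X_i)+I(u_{ij};X_i\mid X_j)$. At MNI, $s^*_{ji}$ is sufficient for $X_j$ with respect to $X_i$, so I expect $I(u_{ij},s^*_{ji};X_i)=I(u_{ij},X_j;X_i)$ for every $u_{ij}$. I also expect the MNI shared codes $s^*_{ij}$ and $s^*_{ji}$ to carry the same information: the chain rule with $I(s^*_{ij};X_i\mid X_j)=0$ should give $I(u;s^*_{ij})=I(u;s^*_{ji})$ for any $u$ generated from $X_i$. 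Together these make $\mathcal{L}_{u_i}(\theta^*_{ij})=\mathcal{L}^o_{u_i}(\theta^*_{ij})$, which is the optimal value of the ideal second step.

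\textbf{Step 3 (comparison, the main obstacle).} For an arbitrary $\theta_{ij}$, the joint unique term can exceed the ideal one only through the redundancy penalty. A poorly chosen $s_{ji}$ can make $I(u_{ij};s_{ji})$ smaller than $I(u_{ij};s^*_{ji})$. I would therefore aim for the key inequality
\[
I(u_{ij};s^*_{ji})-I(u_{ij};s_{ji})\le I(X_i;X_j)-I(s_{ji};X_i)=\Delta_{s_j}.
\]
My first attempt would use the chain rule on $I(u_{ij};s_{ji},s^*_{ji})$ together with the Markov chain $u_{ij}\leftarrow X_i\leftrightarrow X_j\to s_{ji}$, bounding the difference by $I(u_{ij};s^*_{ji}\mid s_{ji})\le I(X_i;s^*_{ji}\mid s_{ji})$. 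At MNI the last quantity should reduce to $I(X_i;X_j)-I(X_i;s_{ji})$.

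Granting this inequality, summing over both directions gives
\[
\mathcal{J}(\theta_{ij})\le \mathcal{J}(\theta^*_{ij})-(\alpha-\lambda)\bigl(\Delta_{s_i}+\Delta_{s_j}\bigr).
\]
The nonnegative terms $\beta I(\cdot;\cdot\mid\cdot)$ only lower $\mathcal{J}(\theta_{ij})$ further, and the information terms $I(u_{ij},s_{ji};X_i)$ are bounded using Step 2. This is exactly where the hypothesis $\alpha>\lambda$ enters: it guarantees that any saving on the redundancy penalty is outweighed by the corresponding loss in the shared objective. Hence $\mathcal{J}(\theta_{ij})\le\mathcal{J}(\theta^*_{ij})$, which is the claim. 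The delicate part is justifying the key inequality, and in particular the identifications at the MNI point, rigorously for continuous variables.
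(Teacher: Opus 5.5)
Your proposal is correct and is essentially the paper's proof: you bound $I(u_{ij},s_{ji};X_i)$ by $I(u_{ij},X_j;X_i)$, bound the saving in the redundancy penalty by the shared deficit $I(X_i;X_j)-I(X_i;s_{ji})$, absorb that deficit into $\alpha\mathcal{L}_{s_j}$ using $\alpha>\lambda$, and use MNI to make every step tight at $\theta^*_{ij}$ (the paper measures the saving against $I(u_{ij};X_j)$ rather than $I(u_{ij};s^*_{ji})$, and the two coincide at MNI). Your explicit identity $I(u;s^*_{ij})=I(u;s^*_{ji})$ supplies a step the paper leaves implicit when it equates $\max\mathbf{B}_{u_i}$ with the ideal second-step problem; note that proving it also needs the sufficiency conditions $I(X_i;X_j\mid s^*_{ij})=I(X_i;X_j\mid s^*_{ji})=0$, not only $I(s^*_{ij};X_i\mid X_j)=0$.
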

    \end{grayoutlinebox}

    \begin{proof}
    Let \((s^*_{ij}, s^*_{ji})\) and \((u^*_{ij}, u^*_{ji})\) denote the optimal shared and unique representations obtained from the step-by-step optimization in Eqs.~\eqref{eq:eq_04} and \eqref{eq:eq_05} respectively. We proceed by deriving an upper bound on the joint objective and then showing that this bound is attained at \(\theta^*\) when MNI is attainable.
    
    We begin with the term \(\mathcal{L}_{u_i}\) in the joint objective:
    \begin{align}
        \mathcal{L}_{u_i}
        &= I(u_{ij}, s_{ji}; X_i) - \lambda I(u_{ij}; s_{ji}) \notag\\
        &= I(u_{ij}, s_{ji}; X_i) - \lambda I(u_{ij}; s_{ji}) \pm \lambda I(u_{ij}; X_j) \notag\\
        &\leq I(u_{ij}, X_j; X_i) - \lambda I(u_{ij}; X_j)
        + \lambda \bigl[I(u_{ij}; X_j) - I(u_{ij}; s_{ji})\bigr].
        \label{eq: ap_eq_01_clean}
    \end{align}
    The inequality follows from
    \[
    I(u_{ij}, s_{ji}; X_i) \le I(u_{ij}, X_j; X_i),
    \]
    which holds due to the Markov relation \(s_{ji} \leftarrow X_j \leftrightarrow X_i\).
    
    Next, we use the identity
    \begin{equation}
        I(u_{ij}; X_j) - I(u_{ij}; s_{ji})
        =
        I(u_{ij}; X_j \mid s_{ji}) - I(u_{ij}; s_{ji} \mid X_j).
        \label{eq: ap_eq_02_clean}
    \end{equation}
    Since the joint distribution factorizes according to the Markov structure
    \[
    s_{ji} \leftarrow X_j \leftrightarrow X_i \rightarrow u_{ij},
    \]
    we have \(u_{ij} \perp s_{ji} \mid X_j\), and therefore
    \[
    I(u_{ij}; s_{ji} \mid X_j) = 0.
    \]
    Hence Eq.~\eqref{eq: ap_eq_02_clean} becomes
    \begin{align}
        I(u_{ij}; X_j) - I(u_{ij}; s_{ji})
        &= I(u_{ij}; X_j \mid s_{ji}) \notag\\
        &\le I(X_i; X_j \mid s_{ji}),
        \qquad \qquad \quad\text{by } u_{ij} \leftarrow X_i \leftrightarrow X_j \notag\\
        &= I(X_i; X_j, s_{ji}) - I(X_i; s_{ji}) \notag\\
        &= I(X_i; X_j) - I(X_i; s_{ji}),
        \quad \quad \text{by } s_{ji} \leftarrow X_j \leftrightarrow X_i.
        \label{eq: ap_eq_03_clean}
    \end{align}
    Substituting Eq.~\eqref{eq: ap_eq_03_clean} into Eq.~\eqref{eq: ap_eq_01_clean} yields
    \begin{align}
        \mathcal{L}_{u_i}
        &\le
        \Bigl[I(u_{ij}, X_j; X_i) - \lambda I(u_{ij}; X_j)\Bigr]
        + \lambda I(X_i; X_j) - \lambda I(X_i; s_{ji}) \notag\\
        &= \mathbf{B}_{u_i} - \lambda I(X_i; s_{ji}) + c,
        \label{eq: ap_eq_04_clean}
    \end{align}
    where
    \[
    \mathbf{B}_{u_i} := I(u_{ij}, X_j; X_i) - \lambda I(u_{ij}; X_j),
    \qquad
    c := \lambda I(X_i; X_j) \ge 0.
    \]
    
    By symmetry, the same argument gives
    \begin{equation}
        \mathcal{L}_{u_j}
        \le
        \mathbf{B}_{u_j} - \lambda I(X_j; s_{ij}) + c,
        \label{eq: ap_eq_05_clean}
    \end{equation}
    where
    \[
    \mathbf{B}_{u_j} := I(u_{ji}, X_i; X_j) - \lambda I(u_{ji}; X_i).
    \]
    
    We now combine these bounds in the full joint objective:
    \begin{align}
        \mathcal{J}
        &= \alpha(\mathcal{L}_{s_i} + \mathcal{L}_{s_j}) + \mathcal{L}_{u_i} + \mathcal{L}_{u_j} \notag\\
        &\le
        \alpha(\mathcal{L}_{s_i} + \mathcal{L}_{s_j})
        + \mathbf{B}_{u_i} + \mathbf{B}_{u_j} + 2c
        - \lambda I(X_i; s_{ji}) - \lambda I(X_j; s_{ij}).
        \label{eq: ap_eq_06_clean}
    \end{align}
    
    Define the modified shared objectives
    \begin{equation}
        \tilde{\mathcal{L}}_{s_i}
        := \alpha \mathcal{L}_{s_i} - \lambda I(X_j; s_{ij}),
        \label{eq: ap_eq_07_clean}
    \end{equation}
    and
    \begin{equation}
        \tilde{\mathcal{L}}_{s_j}
        := \alpha \mathcal{L}_{s_j} - \lambda I(X_i; s_{ji}).
        \label{eq: ap_eq_08_clean}
    \end{equation}
    Using the definition of \(\mathcal{L}_{s_i}\), we obtain
    \begin{align}
        \tilde{\mathcal{L}}_{s_i}
        &= \alpha\bigl[I(s_{ij}; X_j) - \beta I(s_{ij}; X_i \mid X_j)\bigr] - \lambda I(X_j; s_{ij}) \notag\\
        &= (\alpha-\lambda) I(s_{ij}; X_j) - \alpha\beta I(s_{ij}; X_i \mid X_j) \notag\\
        &\le (\alpha-\lambda) I(X_i; X_j),
        \qquad \text{for } \alpha \ge \lambda,
        \label{eq: ap_eq_09_clean}
    \end{align}
    where the last step follows from
    \[
    I(s_{ij}; X_j) \le I(X_i; X_j)
    \]
    by the assumed Markov structure, together with the non-negativity of conditional mutual information.
    
    By symmetry,
    \begin{equation}
        \tilde{\mathcal{L}}_{s_j}
        \le (\alpha-\lambda) I(X_i; X_j).
        \label{eq: ap_eq_10_clean}
    \end{equation}
    
    Substituting Eqs.~\eqref{eq: ap_eq_09_clean} and \eqref{eq: ap_eq_10_clean} into Eq.~\eqref{eq: ap_eq_06_clean} gives
    \begin{equation}
        \mathcal{J}
        \le
        2(\alpha-\lambda) I(X_i; X_j)
        + \mathbf{B}_{u_i} + \mathbf{B}_{u_j} + 2c,
        \qquad \forall\, u_{ij},u_{ji}.
        \label{eq: ap_eq_11_clean}
    \end{equation}
    
    We now show that this bound is attained at the step-by-step shared optimum when MNI is attainable. Since \(s^*_{ij}\) and \(s^*_{ji}\) satisfy the MNI criterion, we have
    \[
    I(s^*_{ij}; X_j) = I(s^*_{ij}; X_i) = I(X_i; X_j),
    \qquad
    I(s^*_{ji}; X_i) = I(s^*_{ji}; X_j) = I(X_i; X_j),
    \]
    and therefore
    \[
    I(s^*_{ij}; X_i \mid X_j)=0,
    \qquad
    I(s^*_{ji}; X_j \mid X_i)=0
    \]
    as well as,
    \[
    I(u_{ij}; X_j \mid s^*_{ji})= I(X_i; X_j \mid s^*_{ji}),
    \qquad
    I(u_{ji}; X_i \mid s^*_{ij})= I(X_j; X_i \mid s^*_{ij}).
    \]
    Moreover, Proposition~3 by \citet{wang_information_2025} implies that, in the attainable-MNI regime,
    \[
    I(u_{ij}, s^*_{ji}; X_i) = I(u_{ij}, X_j; X_i),
    \qquad
    I(u_{ji}, s^*_{ij}; X_j) = I(u_{ji}, X_i; X_j).
    \]
    It follows that all inequalities above become equalities when \((s_{ij},s_{ji})=(s^*_{ij},s^*_{ji})\). Hence
    \begin{align}
        \mathcal{J}
        &=
        2(\alpha-\lambda) I(X_i; X_j)
        + \mathbf{B}_{u_i} + \mathbf{B}_{u_j} + 2c \notag\\
        &=
        2\alpha I(X_i; X_j) + \mathbf{B}_{u_i} + \mathbf{B}_{u_j},
        \qquad c=\lambda I(X_i;X_j).
        \label{eq: ap_eq_12_clean}
    \end{align}
    
    Since the term \(2\alpha I(X_i;X_j)\) is constant with respect to \(u_{ij}\) and \(u_{ji}\), maximizing \(\mathcal{J}\) is equivalent to maximizing
    \[
    \mathbf{B}_{u_i} + \mathbf{B}_{u_j}.
    \]
    Furthermore, \(\mathbf{B}_{u_i}\) depends only on \(u_{ij}\), while \(\mathbf{B}_{u_j}\) depends only on \(u_{ji}\). Therefore the maximization decouples as
    \[
    \arg\max_{u_{ij},u_{ji}} \bigl(\mathbf{B}_{u_i} + \mathbf{B}_{u_j}\bigr)
    =
    \arg\max_{u_{ij}} \mathbf{B}_{u_i}
    \times
    \arg\max_{u_{ji}} \mathbf{B}_{u_j}.
    \]
    By construction, this is exactly the optimization problem for the unique components in Eq.~\eqref{eq:eq_03} under attainable MNI. Therefore the step-by-step optimizer \((u^*_{ij},u^*_{ji})\) also maximizes the joint surrogate objective. Consequently,
    \[
    \theta^*_{ij} \in \arg\max_{\theta_{ij}}\mathcal{J}(\theta_{ij}),
    \]
    which proves the claim.
    \end{proof}
    
    \subsection{Proof of Theorem \ref{theo: the_02}}
    \begin{grayoutlinebox}
        \begin{theorem}
            \textit{Near-optimality under unattainable MNI: }
            Let $\theta^*_{ij} = (u^*_{ij}, u^*_{ji}, s^*_{ij}, s^*_{ji})$ be an optimal solution of the ideal step-by-step optimizer as stated in Eqs. \eqref{eq:eq_04} - \eqref{eq:eq_05}, and $\hat{\theta}_{ij}$ the corresponding optimal solution of the joint optimization problem, as stated in Eqs. \eqref{eq:eq_01} - \eqref{eq:eq_03}. If MNI is unattainable, any maximizer $\hat{\theta}_{ij}$ of the joint objective satisfies
            \[ 
            \mathcal{J}(\hat{\theta}_{ij}) \geq \mathcal{J}^o(\theta^*_{ij}) - 2 (1 + \lambda) \delta_c,
            \]
            where, 
            \[
            \mathcal{J}^o(\theta^*_{ij}) = \alpha \mathcal{L}^o_{s_i}(\theta^*_{ij}) + \alpha \mathcal{L}^o_{s_j}(\theta^*_{ij}) + \mathcal{L}^o_{u_i}(\theta^*_{ij}) + \mathcal{L}^o_{u_j}(\theta^*_{ij}),
            \]
            and $\delta_c$ denotes the information-gap term introduced in \citet{wang_information_2025}, which quantifies the trade-off between coverage and redundancy in the shared representations.
        \end{theorem}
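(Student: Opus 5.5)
Since $\hat{\theta}_{ij}$ maximizes $\mathcal{J}$, we have $\mathcal{J}(\hat{\theta}_{ij}) \ge \mathcal{J}(\theta^*_{ij})$. It therefore suffices to show
\[
\mathcal{J}(\theta^*_{ij}) \ge \mathcal{J}^o(\theta^*_{ij}) - 2(1+\lambda)\delta_c ,
\]
that is, to evaluate the joint objective at the step-by-step solution and compare it term by term with the ideal objective.

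\textbf{Shared terms.} These coincide exactly: $\mathcal{L}_{s_i}(\theta^*_{ij})=\mathcal{L}^o_{s_i}(\theta^*_{ij})$ and $\mathcal{L}_{s_j}(\theta^*_{ij})=\mathcal{L}^o_{s_j}(\theta^*_{ij})$. The $\alpha$-weighted parts thus cancel, and no condition on $\alpha$ is needed here.

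\textbf{Unique terms.} The whole loss comes from comparing $\mathcal{L}_{u_i}(\theta^*)$ with $\mathcal{L}^o_{u_i}(\theta^*)$, and symmetrically for $j$. I would split this into a coverage part and a redundancy part.

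\emph{Coverage part.} By the chain rule and the Markov chain $s_{ji}\leftarrow X_j \leftrightarrow X_i$,
\[
I(u^*_{ij},X_j;X_i)-I(u^*_{ij},s^*_{ji};X_i)=I(X_i;X_j\mid u^*_{ij},s^*_{ji}) .
\]
I aim to bound this by $I(X_i;X_j\mid s^*_{ji}) = I(X_i;X_j)-I(X_i;s^*_{ji})$, which is at most $\delta_c$ by the constraint in Eq.~\eqref{eq: shared_unattainable_mni}. The reduction needs $I(X_i;X_j\mid u,s)\le I(X_i;X_j\mid s)$, which is not automatic. The cleanest route is
\[
I(X_i;X_j\mid u,s)=I(X_i;X_j\mid s)-I(u;X_j\mid s)+I(u;X_j\mid X_i,s),
\]
where the last term vanishes because $u_{ij}$ is a function of $X_i$. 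This gives the desired inequality.

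\emph{Redundancy part.} We need $\lambda\bigl[I(u^*_{ij};s^*_{ji})-I(u^*_{ij};s^*_{ij})\bigr]\le\lambda\delta_c$. I would pass through $X_j$ using the identity already established in the proof of Theorem~\ref{theo: the_01}:
\[
I(u;X_j)-I(u;s_{ji})=I(u;X_j\mid s_{ji})\ge 0 ,
\]
so $I(u;s_{ji})\le I(u;X_j)$. Next I need $I(u;X_j)\le I(u;s^*_{ij})+\delta_c$. Since $s^*_{ij}$ is a function of $X_i$ and $u$ is also a function of $X_i$, write
\[
I(u;X_j)\le I(u;s^*_{ij})+I(u;X_j\mid s^*_{ij})
\quad\text{and}\quad
I(u;X_j\mid s^*_{ij})\le I(X_i;X_j\mid s^*_{ij})=I(X_i;X_j)-I(s^*_{ij};X_j)\le\delta_c ,
\]
the equality using $s^*_{ij}\leftarrow X_i\leftrightarrow X_j$ and the final bound being the constraint.

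\textbf{Combining.} Each unique term loses at most $(1+\lambda)\delta_c$, and the two symmetric unique terms give the factor $2$. The shared terms contribute no loss, which completes the bound.

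\textbf{Main obstacle.} I expect the redundancy comparison to be the hardest step, because it mixes $s^*_{ji}$ (a function of $X_j$) with $s^*_{ij}$ (a function of $X_i$). The detour through $I(u;X_j)$ is what makes it work. If that detour fails in the form written, I would instead exploit the symmetry of the constraints on $s^*_{ij}$ and $s^*_{ji}$ to rewrite the comparison. I would also check carefully that the $\delta_c$ in Eq.~\eqref{eq: shared_unattainable_mni} refers to the MI with the other modality in each case, as required above.
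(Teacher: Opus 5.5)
Your proposal is correct and follows the paper's proof. It makes the same reduction to $\mathcal{J}(\hat\theta_{ij})\ge\mathcal{J}(\theta^*_{ij})$, cancels the shared terms exactly, splits each unique-term gap into a coverage part and a redundancy part, and uses the same redundancy chain $I(u^*_{ij};s^*_{ji})\le I(u^*_{ij};X_j)\le I(u^*_{ij};s^*_{ij})+I(X_i;X_j\mid s^*_{ij})$ with $I(X_i;X_j\mid s^*_{ij})\le\delta_c$. The only difference is the coverage gap $\Delta_{u_i}$: the paper bounds it by citing Proposition~4 of Wang et al., whereas you prove it directly via $I(X_i;X_j\mid u^*_{ij},s^*_{ji})\le I(X_i;X_j\mid s^*_{ji})=I(X_i;X_j)-I(X_i;s^*_{ji})\le\delta_c$, which is valid and makes the proof self-contained.
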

    \end{grayoutlinebox}

    \begin{proof}
    Assume now that MNI is unattainable. Let
    \[
    \theta^*_{ij} = (u^*_{ij}, u^*_{ji}, s^*_{ij}, s^*_{ji})
    \]
    denote the solution of the ideal step-by-step optimization problem defined in
    Eqs.~\eqref{eq:eq_02} and \eqref{eq:eq_03}, i.e.,
    \[
    \begin{aligned}
    s^*_{ij}
    &\in \arg\max_{s_{ij}} \mathcal{L}^{o}_{s_i}
     = \arg\max_{s_{ij}} \Bigl[I(s_{ij}; X_j) - \beta\, I(s_{ij}; X_i \mid X_j)\Bigr], \\
    s^*_{ji}
    &\in \arg\max_{s_{ji}} \mathcal{L}^{o}_{s_j}
     = \arg\max_{s_{ji}} \Bigl[I(s_{ji}; X_i) - \beta\, I(s_{ji}; X_j \mid X_i)\Bigr], \\
    u^*_{ij}
    &\in \arg\max_{u_{ij}} \mathcal{L}^{o}_{u_i}
     = \arg\max_{u_{ij}} \Bigl[I(u_{ij}, X_j; X_i) - \lambda I(u_{ij}; s^*_{ij})\Bigr], \\
    u^*_{ji}
    &\in \arg\max_{u_{ji}} \mathcal{L}^{o}_{u_j}
     = \arg\max_{u_{ji}} \Bigl[I(u_{ji}, X_i; X_j) - \lambda I(u_{ji}; s^*_{ji})\Bigr].
    \end{aligned}
    \]
    
    We define the corresponding ideal reweighted value at \(\theta^*_{ij}\) as
    \begin{align*}
        \mathcal{J}^o(\theta^*_{ij})
        &= \alpha I(s^*_{ij};X_j) - \alpha \beta I(s^*_{ij};X_i\mid X_j) \\
        &\quad + \alpha I(s^*_{ji};X_i) - \alpha \beta I(s^*_{ji};X_j\mid X_i) \\
        &\quad + I(u^*_{ij}, X_j;X_i) - \lambda I(u^*_{ij};s^*_{ij}) \\
        &\quad + I(u^*_{ji}, X_i;X_j) - \lambda I(u^*_{ji};s^*_{ji}).
    \end{align*}
    Equivalently,
    \[
    \mathcal{J}^o(\theta^*_{ij})
    =
    \alpha \mathcal{L}^o_{s_i}(\theta^*_{ij})
    + \alpha \mathcal{L}^o_{s_j}(\theta^*_{ij})
    + \mathcal{L}^o_{u_i}(\theta^*_{ij})
    + \mathcal{L}^o_{u_j}(\theta^*_{ij}).
    \]
    
    Next, for an arbitrary feasible point
    \[
    \theta_{ij} = (u_{ij}, u_{ji}, s_{ij}, s_{ji}),
    \]
    the joint objective is
    \begin{align*}
        \mathcal{J}(\theta_{ij})
        &= \alpha I(s_{ij};X_j) - \alpha \beta I(s_{ij};X_i\mid X_j) \\
        &\quad + \alpha I(s_{ji};X_i) - \alpha \beta I(s_{ji};X_j\mid X_i) \\
        &\quad + I(u_{ij}, s_{ji};X_i) - \lambda I(u_{ij};s_{ji}) \\
        &\quad + I(u_{ji}, s_{ij};X_j) - \lambda I(u_{ji};s_{ij}).
    \end{align*}
    Let
    \[
    \hat{\theta}_{ij} \in \arg\max_{\theta_{ij}} \mathcal{J}(\theta_{ij}).
    \]
    By definition of \(\hat{\theta}_{ij}\), we have
    \begin{equation}
        \mathcal{J}(\hat{\theta}_{ij}) \geq \mathcal{J}(\theta^*_{ij}),
        \qquad
        \hat{\theta}_{ij} \in \arg\max_{\theta_{ij}} \mathcal{J}(\theta_{ij}).
        \label{eq: ap_eq_14_clean}
    \end{equation}
    
    Evaluating the joint objective at \(\theta^*_{ij}\), we obtain
    \begin{align*}
        \mathcal{J}(\theta^*_{ij})
        &= \alpha I(s^*_{ij};X_j) - \alpha \beta I(s^*_{ij};X_i\mid X_j) \\
        &\quad + \alpha I(s^*_{ji};X_i) - \alpha \beta I(s^*_{ji};X_j\mid X_i) \\
        &\quad + I(u^*_{ij}, s^*_{ji};X_i) - \lambda I(u^*_{ij};s^*_{ji}) \\
        &\quad + I(u^*_{ji}, s^*_{ij};X_j) - \lambda I(u^*_{ji};s^*_{ij}).
    \end{align*}
    
    We now consider the difference between the ideal reweighted value and the joint objective at \(\theta^*_{ij}\):
    \begin{align*}
        \mathcal{J}^o(\theta^*_{ij}) - \mathcal{J}(\theta^*_{ij})
        &= \Bigl[I(u^*_{ij}, X_j;X_i) - I(u^*_{ij}, s^*_{ji};X_i)\Bigr] \\
        &\quad + \Bigl[I(u^*_{ji}, X_i;X_j) - I(u^*_{ji}, s^*_{ij};X_j)\Bigr] \\
        &\quad + \lambda \Bigl[I(u^*_{ij};s^*_{ji}) - I(u^*_{ij};s^*_{ij})\Bigr] \\
        &\quad + \lambda \Bigl[I(u^*_{ji};s^*_{ij}) - I(u^*_{ji};s^*_{ji})\Bigr].
    \end{align*}
    For brevity, we define
    \[
    \Delta_{u_i}
    := I(u^*_{ij}, X_j;X_i) - I(u^*_{ij}, s^*_{ji};X_i),
    \qquad
    \Lambda_{u_i}
    := I(u^*_{ij};s^*_{ji}) - I(u^*_{ij};s^*_{ij}),
    \]
    and similarly
    \[
    \Delta_{u_j}
    := I(u^*_{ji}, X_i;X_j) - I(u^*_{ji}, s^*_{ij};X_j),
    \qquad
    \Lambda_{u_j}
    := I(u^*_{ji};s^*_{ij}) - I(u^*_{ji};s^*_{ji}).
    \]
    Then
    \[
    \mathcal{J}^o(\theta^*_{ij}) - \mathcal{J}(\theta^*_{ij})
    =
    \Delta_{u_i} + \Delta_{u_j} + \lambda(\Lambda_{u_i} + \Lambda_{u_j}).
    \]
    
    We next bound the terms \(\Lambda_{u_i}\) and \(\Lambda_{u_j}\). From the Markov relation
    \[
    s_{ji} \leftarrow X_j \leftrightarrow X_i \rightarrow u_{ij},
    \]
    we have
    \begin{equation}
        I(u^*_{ij}; s^*_{ji}) \leq I(u^*_{ij};X_j),
        \label{eq: ap_eq_15a_clean}
    \end{equation}
    and therefore
    \begin{equation}
        I(u^*_{ij}; s^*_{ji}) - I(u^*_{ij};s^*_{ij})
        \leq I(u^*_{ij};X_j) - I(u^*_{ij};s^*_{ij}).
        \label{eq: ap_eq_15b_clean}
    \end{equation}
    Hence
    \begin{align}
        \Lambda_{u_i}
        &\leq I(u^*_{ij};X_j) - I(u^*_{ij};s^*_{ij}) \notag\\
        &= I(u^*_{ij};X_j\mid s^*_{ij}) - I(u^*_{ij};s^*_{ij}\mid X_j) \notag\\
        &\leq I(u^*_{ij};X_j\mid s^*_{ij}) \notag\\
        &\leq I(X_i;X_j\mid s^*_{ij}),
        \qquad \text{by } u_{ij} \leftarrow X_i \leftrightarrow X_j \notag\\
        &= I(X_j;X_i,s^*_{ij}) - I(X_j;s^*_{ij}) \notag\\
        &= I(X_j;X_i) - I(X_j;s^*_{ij}) \notag\\
        &\leq \delta_c.
        \label{eq: ap_eq_16_clean}
    \end{align}
    The last inequality follows from the definition of the ideal shared optimization problem: since \(s^*_{ij}\) is an optimal solution of Eq.~\eqref{eq:eq_02}, it satisfies
    \[
    I(X_j;X_i) - I(X_j;s^*_{ij}) \leq \delta_c.
    \]
    By the same argument, we also obtain
    \begin{equation}
        \Lambda_{u_j} \leq \delta_c.
        \label{eq: ap_eq_16b_clean}
    \end{equation}
    
    Furthermore, by Proposition~4 of \citet{wang_information_2025}, the terms \(\Delta_{u_i}\) and \(\Delta_{u_j}\) are each upper bounded by \(\delta_c\). Therefore,
    \begin{equation}
        \mathcal{J}^o(\theta^*_{ij}) - \mathcal{J}(\theta^*_{ij})
        \leq 2\delta_c + 2\lambda \delta_c.
        \label{eq: ap_eq_17_clean}
    \end{equation}
    
    Combining Eqs.~\eqref{eq: ap_eq_14_clean} and \eqref{eq: ap_eq_17_clean}, we conclude that
    \[
    \mathcal{J}(\hat{\theta}_{ij})
    \geq
    \mathcal{J}^o(\theta^*_{ij}) - 2(1+\lambda)\delta_c.
    \]
    proving near-optimality of the joint optimization objective with respect to the ideal step-by-step optimization problem.
    \end{proof}

\section{Framework Implementation Details}
\label{framework_details}
\subsection{Slot assignment in latent array \texorpdfstring{$\mathcal{H}$}{H}}
The flexibility of our framework, permits any bijective mapping $\phi_i$ between the different rows of each $\mathcal{H}_i$ and the corresponding components. This mapping may vary across modalities, as each disentangling module operates independently. Nevertheless, for each modality, $\phi_i$ should cover exactly once all the expected pairwise interactions, excluding self-interactions, and remain consistent throughout both training and inference. For our implementation, we adapt the following definition:

\begin{definition} We define the bijective mapping $\phi_i$ as:
\[
\phi_i : \mathcal{H}_i \rightarrow \{u_{ij}, s_{ij}\}_{j \neq i}, \quad \forall i \in \mathcal{M}.
\]
Let $k$ denote the row index of $\mathcal{H}_i$ corresponding to the $k$-th slot of $\mathcal{H}_i$, $h_{ik}$. For every modality $j$, we associate each $h_{ik}$, with either a unique component $u_{ij}$ or a shared component $s_{ij}$ involving modality $i$. We also set $j = \lceil k/2 \rceil$, so that:

\begin{itemize}
    \item If $k$ is odd:
    \[
    \phi_i(h_{ik}) =
    \begin{cases}
    u_{ij} & j < i \\
    u_{i,j+1} & j > i
    \end{cases}
    \]
    \item If $k$ is even:
    \[
    \phi_i(h_{ik}) =
    \begin{cases}
    s_{ij} & j < i \\
    s_{i,j+1} & j \ge i
    \end{cases}
    \]
\end{itemize}

This construction systematically indexes all shared and unique components involving modality $i$ while excluding self-interactions.
\label{def: mapping_def}
\end{definition}

\begin{figure}[ht]
    \centering
    \includegraphics[width=0.8\linewidth]{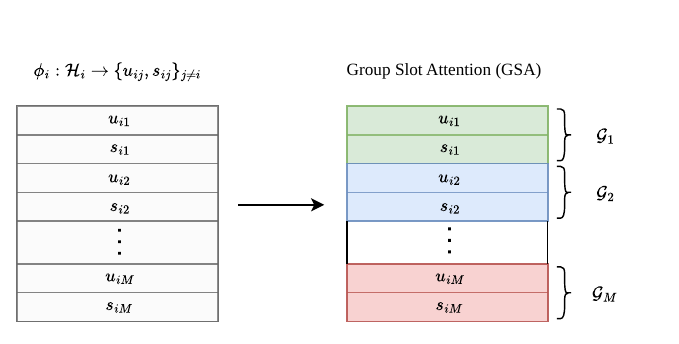}
    \caption{Illustration of the bijective mapping $\phi_i$ and Group Slot Attention. \textbf{Left:} Visualizes the mapping defined in Definition \ref{def: mapping_def}, where each latent slot, $h_{ik}$, is assigned to a specific unique or shared component, while in the \textbf{Right:} the grouping mechanism is observed, where each pair \((u_{ij}, s_{ij})\) belongs in a separate group $\mathcal{G}_j$.}
    \label{fig:gsa_mapping}
\end{figure}

\subsection{Group slot attention}
In grouped slot attention, slots are partitioned into disjoint groups, and competition is enforced only within each group rather than across all slots. In our setting, we select the group size equal to 2, resulting in exactly $M - 1$ groups, such that every group contains exactly two slots, e.g. the pair (\(u_{ij}, s_{ij}\)) corresponding to the unique and shared components for the modality pair (\(i, j\)). For each input token, attention scores are normalized across the two slots in the group, so that the two slots compete to explain that particular token. As a result, each token distributes its information only between the paired slots, encouraging them to specialize into complementary roles while remaining independent of other groups. 

Formally, for each cross-attention mechanism at iteration $t$, we compute
\[
Q = \tilde{\mathcal{H}}^t_{i} W^T_Q \in \mathbb{R}^{N \times D_h}, \qquad K = Z_i W^T_K \in \mathbb{R}^{D_h \times D} \qquad V = Z_i W^T_V \in \mathbb{R}^{D_h \times D}
\]
representing the query, key and value respectively, where $W_Q \in \mathbb{R}^{D_h \times D_h}, \text{and } W_K, W_v \in \mathbb{R}^{T \times D_h}$ are learnable parameter matrices. Given the mapping defined in \ref{def: mapping_def}, the queries are arranged into contiguous pairs of size two. We compute the standard query-key similarities, but normalize the attention only over slots within the same group. If $\mathcal{G}_m(n)$ denotes the $m$-th group containing slot $n$, then
\[
\mathbf{A}_{n, m} = \text{softmax}_{n^{'} \in \mathcal{G}_m(n)} \frac{Q_{n^{'}} K^T_{m}}{\sqrt{D_h}}.
\]
Afterwards, we re-normalize across the token dimension,
\[
\hat{\mathbf{A}}_{n, m} = \frac{\mathbf{A}_{n, m}}{\sum_{j=1}^{T}\mathbf{A}_{n, j} + \epsilon}, \qquad \epsilon > 0.
\]

Lastly, slot representations are updated via the standard attention readout
\[
\tilde{\mathcal{H}}^{t + 1}_i = \hat{\mathbf{A}} V.
\]
Both the bijective mapping as well as the Group Slot Attention are illustrated in Figure \ref{fig:gsa_mapping} for additional intuition.

\subsection{Training objectives}
\label{training_tractable_objectives}
During training, in practice we optimize the losses in Eqs. \eqref{eq:eq_tract_1}, \eqref{eq:eq_tract_2}. We model the $s_{ij} \sim p(\cdot \mid X_i)$ and $s_{ji} \sim p(\cdot \mid X_j)$ as von-Mises-Fisher (vMF) distributions, i.e. $s_{ij} \sim \text{vMF}(\mu (X_i), \kappa)$ and $s_{ji} \sim \text{vMF}(\mu(X_j), \kappa)$. The $\text{vMF}$ distribution represents data on a hypershere, where $\mu$ is the mean direction and $\kappa$ controls the concentration around that direction. For $I(s_{ij};X_j)$ we use the InfoNCE objective~\citep{Oord2018RepresentationLW} as follows:
\[
    \mathcal{L}_{s_i}^{\mathrm{INCE}}
    =
    \mathbb{E}_{s_{ij},\,s_{ji}^{+},\,\{s_{ji,k}^{-}\}_{k=1}^{N}}
    \left[
    -\log
    \frac{
    \exp\left( s_{ij}^{\top}s_{ji}^{+}/\tau \right)
    }{
    \exp\left( s_{ij}^{\top}s_{ji}^{+}/\tau \right)
    +
    \sum_{k=1}^{N}
    \exp\left( s_{ij}^{\top}s_{ji,k}^{-}/\tau \right)
    }
    \right].
\]
where $\tau$ is the temperature parameter, $s^+_{ji}$ is the positive anchor, $s^-_{ji, k}$ the negatives and $N$ is the total number of negative samples. A similar formulation follows by symmetry for the case of $\mathcal{L}^{INCE}_{s_{ji}}$. For the $KL$-divergence terms, as we have modeled the representations through $\text{vMF}$, the quantity $\mathcal{L}_{KL}$ obtains the exact form:
\[
\mathcal{L}_{KL} = \mathbb{E}_{x_i, x_j}\left[ \mu(X_i)^T\mu(X_j)\right]
\]
and accordingly for the case of $s_{ji}$. Analogously, for \(\mathcal{L}^{\mathrm{INCE}}_{u_i}\), we employ an InfoNCE objective using augmented views of \(X_i\) and \(X_j\). Specifically, we concatenate the unique representation \(u_{ij}\) with the shared representation \(s_{ji}\),
\[
\tilde{z} = u_{ij} \oplus s_{ji},
\]
and contrast it against the representation \(\tilde{z}_{\mathrm{a}}^{+}\) obtained from the augmented views of the same sample:
\[
\mathcal{L}_{u_i}^{\mathrm{INCE}}
=
\mathbb{E}_{\tilde{z},\,\tilde{z}_{\mathrm{a}}^{+},\,\{\tilde{z}_{k}^{-}\}_{k=1}^{N}}
\left[
-\log
\frac{
\exp\left( \tilde{z}^{\top}\tilde{z}_{\mathrm{a}}^{+}/\tau \right)
}{
\exp\left( \tilde{z}^{\top}\tilde{z}_{\mathrm{a}}^{+}/\tau \right)
+
\sum_{k=1}^{N}
\exp\left( \tilde{z}^{\top}\tilde{z}_{k}^{-}/\tau \right)
}
\right].
\]
Here, \(N\) denotes the number of negative samples, \(\tilde{z}_{\mathrm{a}}^{+}\) is the positive counterpart obtained from the augmented views of the same sample, and \(\tilde{z}_{k}^{-}\) denotes the \(k\)-th negative sample. Note that as discussed by \cite{10.5555/3495724.3496297}, the augmentation design within this setting is an important practical consideration and should be carefully chosen, so that all the desired information components are preserved within the augmented views. Lastly, the leakage between the unique and shared components is regularized through a cross-covariance orthogonality penalty. Concretely, let \(\bar{U}_{ij} \in \mathbb{R}^{B \times D}\) and \(\bar{S}_{ji}\in \mathbb{R}^{B \times D}\) denote the row-wise $\ell_2$-normalized and feature-wise standardized batch representations of \(u_{ij}\) and \(s_{ji}\), respectively. We define
\[
\mathcal{L}_{\mathrm{xcov}}(u_{ij},s_{ji}) = \frac{1}{D} \left\| \frac{1}{B} \bar{U}_{ij}^{\top}\bar{S}_{ji}\right\|_{F},
\]
where \(B\) is the number of samples, \(D\) is the representation dimension, and \(\|\cdot\|_{F}\) denotes the Frobenius norm.

\section{Experimental Details}

\subsection{Synthetic data creation}
\label{synthetic_data_creation}

% Each $W_{i, s}$ is designed to select and transform a specific subset of the atomic representations $z_A$, ensuring that while individual tokens $\mathbf{Z}_{i, s}$ contain partial information, the full sequence $\mathbf{Z}_i$ preserves the complete composite signal. 

For each atomic representation defined in \ref{def:atomic_representation}, we begin by sampling \(N\) instances for all \(2^M - 1\) latent factors independently from a normal distribution, i.e. \(z_A \sim \mathcal{N}(0, \sigma^2 I_{d_\ell})\). We then concatenate the sampled factors on a per-modality basis, according to Definition \ref{def: 2.2}, to construct \(Z_i\) for all \(i \in \mathcal{M}\). To avoid introducing structural biases related to dimensionality or distribution, we use the same \(\sigma\) and \(d_\ell\) for all sampled atomic representations. In addition, since we conduct experiments across varying numbers of modalities, we set \(d_\ell = \frac{E}{2^{M-1}}\), so that \(E = \dim Z_i\) remains constant regardless of \(M\).

Having obtained \(Z_i \in \mathbb{R}^{E}\), we next augment this vector representation to mimic sequence-style embeddings, yielding \(Z_i \in \mathbb{R}^{S \times E}\), as in practice foundation models typically produce patch- or token-level representations. To this end, for each latent factor we additionally sample a base transform \(U_A \sim \mathcal{N}(0, \sigma_U^2 I_{d_\ell})\), together with a small sequence-specific variation \(\Delta_{A,s} \sim \mathcal{N}(0, \sigma_\Delta^2 I)\), where \(\sigma_\Delta \ll \sigma_U\). We then place these transformation matrices in block-diagonal form over all \(s \in S\), defining \(U_{i,s} = \mathrm{blockdiag}(U_{A} + \Delta_{A,s})\) for all \(i \in \mathcal{M}\).

Moreover, we sample binary masks \(\mathbf{M}_{i,s} = [m^s_1, \dots, m^s_{2^{M-1}}]\), with \(m^s_j \in \{0,1\}\), so that not all atomic representations are accessible at every sequence position. Intuitively, this mimics the fact that, for example, not all image patches contain all information components of an image. As a result, \(\mathbf{M}_i \in \{0,1\}^{S \times 2^{M-1}}\). We apply this mask pointwise to the constructed transformations, yielding \(\tilde{U}_{i,s} = \mathbf{M}_{i,s} \odot U_{i,s}\). Finally, we define modality-specific rotations \(R_i\) for all \(i \in \mathcal{M}\), and set \(W_{i,s} = R_i \tilde{U}_{i,s}\). Each modality’s final representation is then given by \(\mathbf{Z}_{i,s} = \tanh(\gamma W_{i,s} Z_i)\).

\begin{algorithm}[ht!]
\caption{Synthetic generation of multimodal factorized representations}
\label{alg:synthetic_data}
\begin{algorithmic}[1]

\Require number of modalities $M$, sequence length $S$, embedding dimension $E$, variance parameters $\sigma, \sigma_U, \sigma_\Delta$, scaling factor $\gamma$

\State Let $\mathcal{A}$ denote the set of all non-empty modality subsets, $|\mathcal{A}| = 2^M - 1$

\State Set latent dimension per atomic factor:
\[
d_\ell = \frac{E}{2^{M-1}}
\]

\For{each atomic factor $A \in \mathcal{A}$}
    \State Sample atomic latent representation:
    \[
    z_A \sim \mathcal{N}(0, \sigma^2 I_{d_\ell})
    \]
\EndFor

\For{each modality $i \in \mathcal{M}$}
    \State Construct modality representation by concatenating relevant atomic factors:
    \[
    Z_i = \text{concat}\{ z_A \mid A \ni i \}, \quad Z_i \in \mathbb{R}^{E}
    \]

    \For{each sequence position $s = 1,\dots,S$}

        \For{each atomic factor $A \ni i$}
            \State Sample base transform:
            \[
            U_A \sim \mathcal{N}(0, \sigma_U^2 I_{d_\ell})
            \]

            \State Sample small sequence-specific variation:
            \[
            \Delta_{A,s} \sim \mathcal{N}(0, \sigma_\Delta^2 I_{d_\ell}), \quad \sigma_\Delta \ll \sigma_U
            \]
        \EndFor

        \State Construct block-diagonal transformation:
        \[
        U_{i,s} =
        \text{blockdiag}(U_A + \Delta_{A,s})
        \]

        \State Sample binary accessibility mask:
        \[
        \mathbf{M}_{i,s} \in \{0,1\}^{2^{M-1}}
        \]

        \State Apply mask:
        \[
        \tilde U_{i,s} =
        \mathbf{M}_{i,s} \odot U_{i,s}
        \]

        \State Apply modality-specific rotation:
        \[
        W_{i,s} =
        R_i \tilde U_{i,s}
        \]

        \State Generate sequence embedding:
        \[
        Z_{i,s} =
        \tanh(\gamma W_{i,s} Z_i)
        \]

    \EndFor

\EndFor

\State \Return $\{Z_i\}_{i \in \mathcal{M}}$, where $Z_i \in \mathbb{R}^{S \times E}$

\end{algorithmic}
\end{algorithm}

\paragraph{Evaluation details} 
\label{delta_metrics}
We evaluate disentanglement performance by measuring whether each learned component $(\hat{\mathbf{u}}_{ij}, \hat{\mathbf{s}}_{ij})$ predicts the label associated with its corresponding ground-truth component $(\mathbf{u}_{ij}, \mathbf{s}_{ij})$, while remaining uninformative about labels associated with other cross components. To assess this behavior, we report confusion matrices for each model across all values of $M$.

For a more holistic metric, we validate each model's performance based on its deviation from the ideal performance using $\Delta_{model}$, which is defined as follows:
\[
\Delta_{\text{model}}
=
\frac{1}{4}\Bigl(
\underbrace{|A_{\hat{\mathbf{u}}_{ij}\to y_{u_{ij}}}-100|+|A_{\hat{\mathbf{s}}_{ij}\to y_{s_{ij}}}-100|}_{\text{intra-component prediction}}
+
\underbrace{|A_{\hat{\mathbf{u}}_{ij}\to y_{s_{ij}}}-50|+|A_{\hat{\mathbf{s}}_{ij}\to y_{u_{ij}}}-50|}_{\text{cross-component leakage}}
\Bigr),
\]
where $A_{\mathbf{u}_{ij}\to y_{u_{ij}}} \coloneqq \mathbb{E}[Acc(\mathbf{u}_{ij}\to y_{u_{ij}})]$. 

\paragraph{Training and Hyperparameter specifications}

For the experiments, we test for the number of modalities $M=\{2, 3, 4, 5\}$. For every case, we generate $N=150.000$ training samples per modality and we set $E = 64$ and $S = 8$. We also choose $\sigma^2_{U} = 0.5$ and $\sigma_{\Delta} = 0.005 \ll \sigma_{U}$. For the scaling factor of the non-linearity we select $\gamma = 0.3$.

The architectural specifications of each model are demonstrated in Table \ref{tab:training_models_synthetic} (a). In order to perform a fair scaling experiment, as $M$ grows, the encoder backbone of each disentanglement module for RePercENT, changes only in terms of the number of the required latent slots in $\mathcal{H}$. Specifically, there are 2 dedicated slots for the case of $M = 2$, 4 for the case of $M = 3$ etc. For the baselines, the capacity per separate encoder remains the same.

For the training specifications, see Table \ref{tab:training_models_synthetic} (b). For every $M$ the generated dataset is split into training, validation, and test sets with a ratio of $0.7/0.1/0.2$, respectively. For the disentanglement loss parameters we fix the values of $\alpha$ and $\beta$. For the disentanglement loss, we keep $\alpha$ and $\beta$ fixed throughout training. The coefficient $\lambda$ is annealed using an exponential scheduler and updated once per training iteration. Lastly, each model is trained for $150$ epochs to ensure convergence. We select the best checkpoint based on the lowest validation loss observed during training and report its performance on the test set.

All models are trained on three different train/test splits, using two random seeds for each split. We aggregate the results across all runs and report the average detection accuracy as mean $\pm$ standard deviation.

\begin{table}[ht]
\centering
\small
\caption{
Architecture and training specifications used for the synthetic dataset experiments. For each model, table (a) reports the single encoder parameters, which remain fixed across the different values of $M$. The training specifications are the same across all models.
}
\label{tab:training_models_synthetic}
\begin{minipage}[t]{0.48\linewidth}
\centering
\caption*{\textbf{(a) Model architecture}}
\begin{tabular}{p{4.4cm}c}
\toprule
\textbf{Model $\sim$ Hyperparameter} & \textbf{Value} \\
\midrule

\rowcolor{gray!10} 
\multicolumn{2}{l}{\textit{RePercENT}}\\
Embedding input dim & $64$ \\
Embedding sequence length & $8$ \\
Depth in $\mathcal{D}_i$ & $3$ \\
Cross-attention heads & $2$ \\
Transformer latent heads & $2$ \\
Latent dim $D_l$ & $32$ \\

\midrule
\rowcolor{gray!10} 
\multicolumn{2}{l}{\textit{gMLP}}\\
Embedding input dim & $64$ \\
Embedding sequence length & $8$ \\
Feed-forward hidden dim & $64$ \\
Projection head dim & $32$ \\
Number of layers & $2$ \\
Activation & GELU \\

\midrule
\rowcolor{gray!10} 
\multicolumn{2}{l}{\textit{GRU}}\\
Embedding input dim & $64$ \\
Hidden dim & $64$ \\
Latent dim & $32$ \\
Number of layers & $1$ \\
Activation & ReLU \\

\midrule
\rowcolor{gray!10} 
\multicolumn{2}{l}{\textit{MLP}}\\
Input dim & $512$ \\
Hidden dim & $64$ \\
Output dim & $32$ \\
Activation & ReLU \\

\bottomrule
\end{tabular}
\end{minipage}
\hfill
\begin{minipage}[t]{0.5\linewidth}
\centering
\caption*{\textbf{(b) Training specification}}
\begin{tabular}{lc}
\toprule
\textbf{Component} & \textbf{Value} \\
\midrule

\rowcolor{gray!10} 
\multicolumn{2}{l}{\textit{Disentanglement loss}} \\
$\alpha$ & $2.0$ \\
$\beta$ & $0.1$ \\
$\lambda$ start value & $0.1$ \\
$\lambda$ end value & $1.0$ \\
$\lambda$ annealing iterations & $8000$ \\
$\lambda$ start iteration & $800$ \\

\midrule
\rowcolor{gray!10} 
\multicolumn{2}{l}{\textit{Optimizer}} \\
Optimizer & Adam \\
Learning rate & $8 \times 10^{-4}$ \\
Weight decay & $10^{-4}$ \\

\midrule
\rowcolor{gray!10} 
\multicolumn{2}{l}{\textit{Training}} \\
Number of epochs & $150$ \\
Batch size & $1024$ \\

\bottomrule
\end{tabular}
\end{minipage}
\end{table}

\paragraph{Augmentations} During training, we augment the synthetic inputs using a simple stochastic augmentation module. Given an input tensor $Z$, the module returns an augmented tensor $Z_{\mathrm{a}}$ by applying one of the following transformations: Gaussian noise injection with scale $10^{-3}$, feature swapping, or random feature dropping. For random feature dropping, entries are masked according to a drop scale of $10$.

\subsection{IRFL overview}
\label{irfl_overview}
\paragraph{Dataset creation} The Image Recognition of Figurative Language (IRFL) dataset \citep{yosef2023irfl}. IRFL is a multimodal benchmark that comprises approximately 6,690 instances designed to test the comprehension of figurative language, including idioms, metaphors, and similes. Each text phrase is paired with images categorized by their relationship to the text. The latter falls in one of the categories: figurative, literal, partial literal, or figurative+literal. We extract the complete dataset including idioms, metaphors, and similes, and keep only the rows that correspond to figurative images and text. For instances where no definition is provided, we manually complete the missing annotations. Afterwards, each image is encoded using the OpenAI CLIP-ViT-B/32 vision transformer, while captions and definitions are encoded using the corresponding CLIP text transformer. We preserve the sequence-like structure of both modalities rather than using a single pooled embedding. For images, we retain the patch-level embeddings before the final projection layer. For text, we retain the contextual token embeddings from the final transformer layer instead of using only the pooled end-of-text (EOT) representation. We split the data into 2,594 training and 786 test samples, ensuring that no image or caption is shared across the two sets.

\paragraph{Detection task overview}
\label{detection_task_overview}
The figurative detection task evaluates Vision and Language Pre-Trained Models’ (VL-PTMs) ability to choose the image that best visualizes the meaning of a figurative expression. Figure \ref{fig: irfl_detection_task} provides two examples of the task, where even though there is shared information with most of the candidate images, only one describes best the figurative phrase. It is a very interesting task, as state-of-the-art VL models, such as CLIP-VIT-B/32, CLIP-RN50x64~\citep{pmlr-v139-radford21a}, BLIP~ \citep{Li2022BLIPBL} performed substantially worse than humans. 

\begin{figure}[ht]
    \centering
    \includegraphics[width=0.8\linewidth]{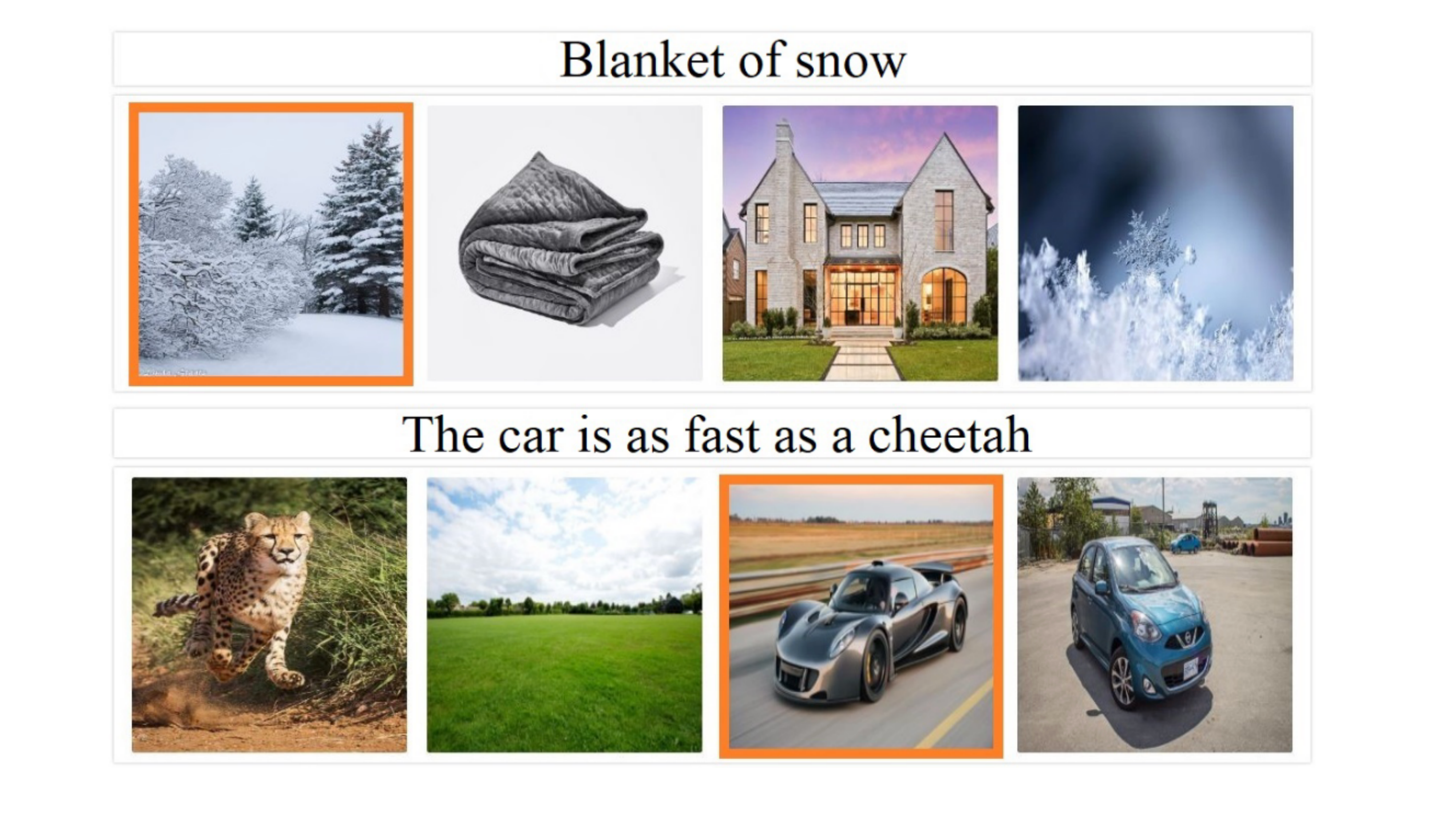}
    \caption{Adapted from \cite{yosef2023irfl}. Examples of the multimodal figurative language detection task for idiom, metaphor, and simile. The input is a figurative phrase and four candidate images (for idiom, we also show the definition). The correct answer is marked with an orange square.}
    \label{fig: irfl_detection_task}
\end{figure}

We evaluate all models in a zero-shot manner. Let $z_{\text{text}}$ denote the text query representation, derived from either the \texttt{Caption} alone or the concatenated \texttt{Caption} $\oplus$ \texttt{Definition}. For a given query, we compute the pairwise similarity between $z_{\text{text}}$ and the target figurative image representation $z_{\text{image}}$, as well as the representations of the three distractor images $z_{\text{dist}, i}$. A successful prediction requires the target similarity to strictly exceed all distractor similarities:
\[
\text{sim}(z_{\text{text}}, z_{\text{image}}) > \max_{i \in \{1,2,3\}} \text{sim}(z_{\text{text}}, z_{\text{dist}, i})
\]

\paragraph{Training and Hyperparameter specifications}

For the alignment approach, we fine-tuned OpenAI CLIP ViT-B/32 in two settings: projection-only and end-to-end. In the projection-only setting, the CLIP image and text encoders remain frozen, and only the final visual and text projection layers were trained using a learning rate of $2 \times 10^{-4}$. In the end-to-end setting, all CLIP parameters are trainable and fine-tuned using a learning rate of $10^{-5}$ to avoid overfitting. Training in both cases uses the standard symmetric CLIP contrastive loss, AdamW optimization, cosine learning-rate decay with $10\%$ warmup, and gradient clipping.

Similarly to the synthetic setup, all disentanglement baselines are trained under the same training protocol as RePercENT. The separate encoder architectures as well as the training parameters are reported in Table \ref{tab:training_irfl_models}. Note that the two text modalities, \texttt{Caption} and \texttt{Definition}, share the same encoder architecture but are encoded by separate encoder instances with independent parameters.

All models are trained with five independent random seeds, and we report detection accuracy as the mean $\pm$ standard deviation across runs.

\begin{table}[ht]
\centering
\small
\caption{
Architecture and training specifications used of the disentanglement models for the IRFL Detection task experiments. 
}
\label{tab:training_irfl_models}
\begin{minipage}[t]{0.68\linewidth}
\centering
\caption*{\textbf{(a) Model architecture}}
\begin{tabular}{p{4.4cm}cc}
\toprule
\textbf{Model $\sim$ Hyperparameter} 
& \makecell{\texttt{Image} \\ \textbf{Encoder}} 
& \makecell{\texttt{Text} \\ \textbf{Encoder}} \\
\midrule

\rowcolor{gray!10} 
\multicolumn{3}{l}{\textit{RePercENT}}\\
Embedding input dim & $768$ & $512$ \\
Embedding sequence length & $49$ & $77$ \\
Depth in $\mathcal{D}_i$ & $2$ & $2$ \\
Cross-attention heads & $2$ & $2$ \\
Transformer latent heads & $1$ & $1$ \\
Latent dim $D_l$ & $256$ & $256$ \\

\midrule
\rowcolor{gray!10} 
\multicolumn{3}{l}{\textit{gMLP}}\\
Embedding input dim & $768$ & $512$ \\
Embedding sequence length & $49$ & $77$ \\
Feed-forward hidden dim & $352$ & $352$ \\
Projection head dim & $256$  & $256$ \\
Number of layers & $2$ & $2$ \\
Activation & GELU & GELU \\

\midrule
\rowcolor{gray!10} 
\multicolumn{3}{l}{\textit{GRU}}\\
Embedding input dim & $768$ & $512$ \\
Hidden dim & $256$ & $256$ \\
Latent dim & $256$ & $256$ \\
Number of layers & $1$ & $1$ \\
Activation & ReLU & ReLU \\

\bottomrule
\end{tabular}
\end{minipage}
\hfill
\begin{minipage}[t]{0.3\linewidth}
\centering
\caption*{\textbf{(b) Training specification}}
\begin{tabular}{lc}
\toprule
\textbf{Component} & \textbf{Value} \\
\midrule

\rowcolor{gray!10} 
\multicolumn{2}{l}{\textit{Disentanglement loss}} \\
$\alpha$ & $6.0$ \\
$\beta$ & $2.0$ \\
$\lambda$ value & $4.0$ \\

\midrule
\rowcolor{gray!10} 
\multicolumn{2}{l}{\textit{Optimizer}} \\
Optimizer & Adam \\
Learning rate & $8 \times 10^{-4}$ \\
Weight decay & $10^{-4}$ \\

\midrule
\rowcolor{gray!10} 
\multicolumn{2}{l}{\textit{Training}} \\
Number of epochs & $40$ \\
Batch size & $256$ \\

\bottomrule
\end{tabular}
\end{minipage}
\end{table}

\paragraph{Augmentations} To facilitate training, and avoid the inference of the CLIP model, we additionally pre-compute the augmentations used for training. Specifically, we augment training images using a fixed set of visually conservative transformations: horizontal flip, vertical flip, Gaussian blur, Gaussian noise, and posterization. Flip-based augmentations are combined with a mild random resized crop using the scale range $[0.95, 1.0]$ and bicubic interpolation to avoid removing the main object. Horizontal and vertical flips are applied with probability $0.95$. Gaussian blur uses a kernel size of $27$, Gaussian noise uses mean $0$ and standard deviation $0.1$ with clipping, and posterization uses $3$ bits with probability $0.95$. We do not apply hue or saturation jitter, since color information can be essential for figurative language. Figure \ref{fig:image_augmentations_irfl} visualizes the result of two augmentations using gaussian noise and posterization. For the text augmentations, we only add minor formatting and neutral variation in order to fully preserve the context. During training, for each image or text, we sample one of the pre-computed embedding representations of their corresponding augmented views.
\begin{figure}[ht!]
    \centering
    \includegraphics[width=\textwidth]{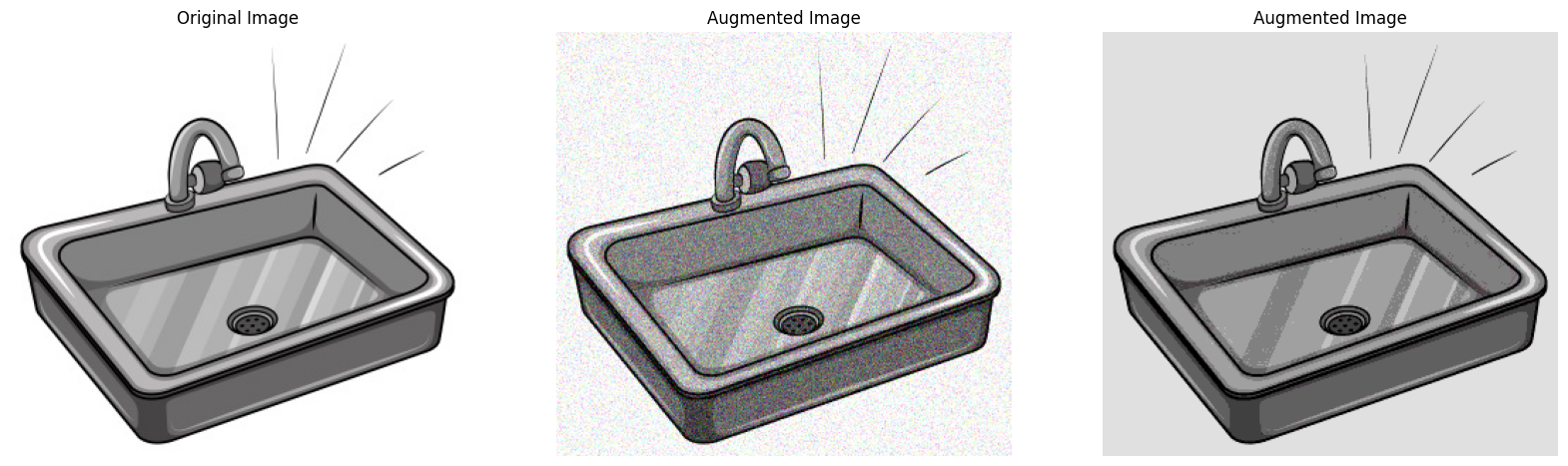}
    \caption{Example of two image augmentation variants used for the figurative language task. The original image from the IRFL dataset is depicted on the left and two additional augmented images are generated.}
    \label{fig:image_augmentations_irfl}
\end{figure}
\subsection{TCGA dataset extraction}
\label{tcga_dataset_extraction}
We utilize a pre-processed version of the The Cancer Genome Atlas (TCGA) multimodal oncology cohort~\citep{weinstein2013cancer} by ~\citet{Tripathi2025.04.22.25326222} containing in total $10.857$ patient records. For our experiments, we consider the four modalities and FM embeddings reported in Table \ref{tab:tcga_modalities}. We group for each modality all the embeddings on a patient level, and preserve only the patients that have all four modalities available. Lastly, guided by the cancer-type distribution shown in Figure \ref{fig:cancer_type_distribution}, we restrict our analysis to cancer types with sufficient sample representation, and retain the 10 most prevalent cancer types to ensure reliable evaluation. This results in a final set of $5{,}732$ patients. The selected cancer types, together with brief descriptions of their clinical relevance, are summarized in Table \ref{tab:tcga_cancer_types}.

\begin{table}[ht]
\centering
\small
\setlength{\tabcolsep}{3pt}
\renewcommand{\arraystretch}{0.92}
\caption{
HONeYBEE TCGA modality embeddings used in the oncology experiments.
}
\label{tab:tcga_modalities}
\begin{tabularx}{\linewidth}{@{}lcl>{\raggedright\arraybackslash}X@{}}
\toprule
\textbf{Modality} 
& \textbf{Dim.} 
& \textbf{Encoder} 
& \textbf{Description} \\
\midrule

\texttt{Clinical} 
& $1024$ 
& Qwen3~\citep{Yang2025Qwen3TR}
& Patient-level clinical information, including structured and unstructured records such as demographics, laboratory values, medications, and clinical narratives. \\

\texttt{Pathology} 
& $1024$ 
& Qwen3~\citep{Yang2025Qwen3TR}
& Free-text pathology reports describing diagnostic and histopathological findings, processed as clinical text. \\

\texttt{WSI} 
& $1024$ 
& UNI~\citep{Chen2023AGS}
& Whole-slide histopathology images processed through tissue detection, stain normalization, patch extraction, and patch-level feature extraction. \\

\texttt{Molecular} 
& $48$ 
& SeNMo~\citep{waqas2024senmo}
& Multi-omics molecular profiles, including gene expression, DNA methylation, somatic mutations, miRNA, and protein expression. \\

\bottomrule
\end{tabularx}
\end{table}

\begin{table}
    \centering
    \small 
    \caption{Selected TCGA cancer types utilized in the Honeybee experimental evaluation.}
    \label{tab:tcga_cancer_types}
    \begin{tabular}{ll} 
    \toprule
    \textbf{TCGA Code} & \textbf{Cancer Type Description} \\
    \midrule
    GBM  & Glioblastoma multiforme \\
    LGG  & Brain Lower Grade Glioma \\
    HNSC & Head and Neck squamous cell carcinoma \\
    LUAD & Lung adenocarcinoma \\
    LUSC & Lung squamous cell carcinoma \\
    BRCA & Breast invasive carcinoma \\
    OV   & Ovarian serous cystadenocarcinoma \\
    PRAD & Prostate adenocarcinoma \\
    COAD & Colon adenocarcinoma \\
    KIRC & Kidney renal clear cell carcinoma \\
    \bottomrule
    \end{tabular}
\end{table}

\begin{figure}[h]
    \centering
    \includegraphics[width=\textwidth]{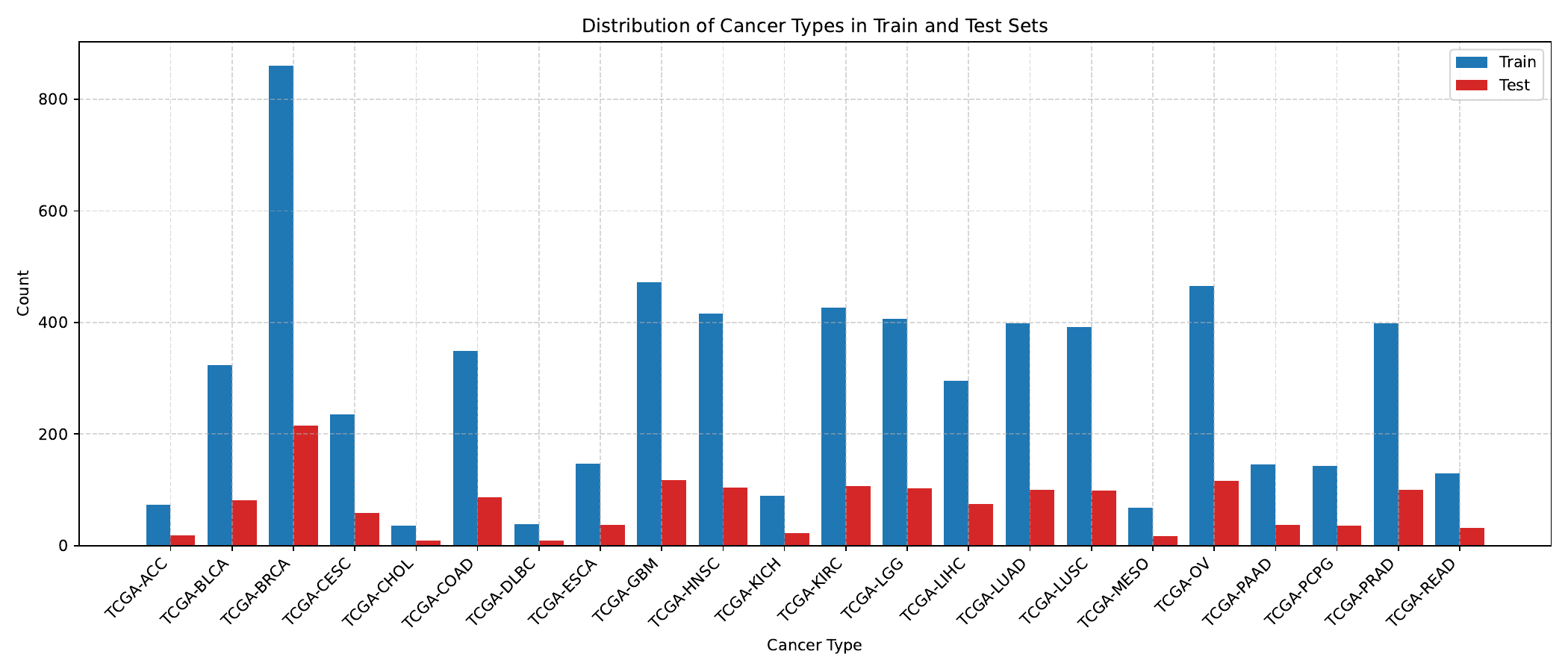}
    \caption{Cancer type patient count of the HONeYBEE extracted embeddings.}
    \label{fig:cancer_type_distribution}
\end{figure}

\paragraph{Training and Hyperparameter specifications}

We use all four modalities during training and extract both unique and shared representations for every modality pair. The specifications of the four RePercENT encoders are summarized in Table~\ref{tab:training_tcga} (a), while the corresponding training details are provided in Table~\ref{tab:training_tcga} (b). After training, for each extracted representation $\mathbf{u}_{ij}$ and $\mathbf{s}_{ij}$, as well as their concatenation, which together constitute the decomposition of modality $i$ with respect to modality $j$, we train linear probes and report cancer-type classification accuracy. Finally, we compare against linear probes trained on the original HONeYBEE embeddings, which represent the full modality-specific representations proposed by~\citet{Tripathi2025.04.22.25326222}.

\begin{table}[h]
\centering
\small
\setlength{\tabcolsep}{2.5pt}
\caption{
Architecture and training specifications used for the disentanglement models in the TCGA cohort experiments.
}
\label{tab:training_tcga}

\begin{minipage}[t]{0.69\textwidth}
\textbf{(a) Model architecture}

\vspace{0.5em}
\begin{tabular}{@{}p{3.1cm}cccc@{}}
\toprule
\textbf{Hyperparameters} 
& \makecell{\texttt{Clinical}\\\textbf{Encoder}} 
& \makecell{\texttt{Pathology}\\\textbf{Encoder}} 
& \makecell{\texttt{WSI}\\\textbf{Encoder}} 
& \makecell{\texttt{Molecular}\\\textbf{Encoder}} \\
\midrule

\rowcolor{gray!10} 
\multicolumn{5}{@{}l}{\textit{RePercENT}}\\
Embedding input dim  & $1024$ & $1024$ & $1024$ & $48$ \\
Input sequence length  & $1$    & $3$    & $16$   & $4$ \\
Depth in $\mathcal{D}_i$   & $8$    & $8$    & $8$    & $8$ \\
Cross-attention heads      & $2$    & $2$    & $2$    & $2$ \\
Transformer latent heads   & $1$    & $1$    & $1$    & $1$ \\
Latent dimension $D_l$     & $256$  & $256$  & $256$  & $256$ \\

\bottomrule
\end{tabular}
\end{minipage}
\hfill
\begin{minipage}[t]{0.3\textwidth}
\textbf{(b) Training specification}

\vspace{0.5em}
\begin{tabular}{@{}lc@{}}
\toprule
\textbf{Component} & \textbf{Value} \\
\midrule

\rowcolor{gray!10} 
\multicolumn{2}{@{}l}{\textit{Disentanglement loss}} \\
$\alpha$ & $5.0$ \\
$\beta$ & $0.5$ \\
$\lambda$ start value & $1.0$ \\
$\lambda$ end value & $3.0$ \\
$\lambda$ annealing iterations & $600$ \\
$\lambda$ start iteration & $200$ \\

\midrule
\rowcolor{gray!10} 
\multicolumn{2}{@{}l}{\textit{Optimizer}} \\
Optimizer & Adam \\
Learning rate & $2 \times 10^{-5}$ \\
Weight decay & $5 \times 10^{-4}$ \\

\midrule
\rowcolor{gray!10} 
\multicolumn{2}{@{}l}{\textit{Training}} \\
Number of epochs & $50$ \\
Batch size & $256$ \\

\bottomrule
\end{tabular}
\end{minipage}

\end{table}

\paragraph{Missing modality robustness setup}
\label{missing_modality_baselines}
For this experiment, we simulate missingness for the \texttt{WSI}-\texttt{Molecular} modality pair. At test time, we keep the \texttt{Molecular} data fully available and randomly drop WSI at varying missingness rates.

For RePercENT, we train separate linear probes on the decomposition of WSI w.r.t. Molecular, i.e., $D_{\mathrm{wsi}, \mathrm{mol}}$ and the extracted decomposition of Molecular w.r.t. WSI, i.e., $D_{\mathrm{mol}, \mathrm{wsi}}$ . At test time, we employ the simplest form of late fusion, were the predictions from the available modalities are combined by averaging their classifier scores.

We compare against the following four baselines trained on the original HONeYBEE modality embeddings:
\begin{itemize}
    \item \textbf{Late fusion + averaging:} We train separate linear probes on the raw WSI and molecular embeddings, and averages the prediction scores of the modalities available at test time.
    \item \textbf{Early fusion + mean imputation:} We train a single probe on the concatenated embeddings of the two modalities. During evaluation, we replace the WSI missing instances, with the corresponding train-set modality mean.
    \item \textbf{Early fusion + mask:} For this baseline, the linear probes are trained on concatenated modality embeddings augmented with binary modality-availability indicators. During training, both modalities are available, so the indicators are set to "1". At test time, an unavailable modality is zero-masked, and its corresponding availability indicator is set to zero.
    \item \textbf{Early fusion + modality dropout:} We use the same masked representation, but in this case the training set is augmented with randomly dropped modalities to expose the classifier to missing-modality patterns during training
\end{itemize}

All methods use class-balanced logistic-regression probes. We report results across 10 uniformly spaced \texttt{WSI} availability levels, ranging from fully available ($100\%$) to fully missing ($0\%$). At each availability level, performance is averaged over 10 independently sampled stochastic missingness masks.

\subsection{Computational Resources}
All experiments were run on a single NVIDIA H200 GPU with 140GB of GPU memory, 96 physical CPU cores, 192 logical threads, 1.5TB of RAM, and 7.2TB of disk storage. The software environment used Python 3.10.12. The most computationally demanding setting was the synthetic five-modality experiment, which required approximately $3$ hours under the training regime described in Table~\ref{tab:training_models_synthetic}. Experiments on the real-world datasets were substantially cheaper, with all runs completing in under $1$ hour.

\section{Additional Experiments}

\subsection{Synthetic Dataset}

\label{synthetic_dataset_experiments}
In this section, we provide detailed linear probe results for all the baselines on the synthetic dataset, for each value of $M\in \{2,3,4,5\}$.

For the case of $M=2$ (Figure \ref{fig:pairwise_confusion_2m}), most architectures demonstrate a baseline ability to isolate components, characterized by high intra-component prediction and minimal cross-component leakage. However, architectural behavior diverges significantly as complexity increases.

\paragraph{MLP and GRU} The MLP exhibits the weakest specialization, with significantly lower diagonal accuracy even in the two-modality regime. Conversely, while the GRU improves the recovery of intended factors---particularly shared representations---its stronger diagonal performance is often offset by elevated off-diagonal values, indicating persistent entanglement. As $M$ increases (Figures \ref{fig:pairwise_confusion_mlp_4m}, \ref{fig:pairwise_confusion_gru_5m}), the GRU undergoes a fundamental component collapse, successfully recovering only the shared factors, while the unique components lose all discriminative power.
    
\paragraph{gMLP and RePercENT} These models demonstrate the most robust disentanglement performance as $M$ grows. Both consistently achieve the disentanglement objective where baselines fail, maintaining high diagonal dominance and near-chance off-diagonal structures well beyond the two-modality setting.
    
\paragraph{Scalability of RePercENT} Notably, RePercENT achieves performance parity with the gMLP while strictly adhering to the proposed pairwise slot-based design. The preservation of a clean, diagonal-dominant structure across all modality pairs demonstrates that RePercENT successfully overcomes the scalability bottleneck, maintaining granular disentanglement in high-dimensional multimodal settings. Moreover, our results indicate that performance at $M=2$ is an unreliable predictor of architectural robustness in higher-dimensional spaces. As $M$ increases, the information-theoretic complexity of interactions grows exponentially, introducing "interference" and potential "component collapse" risks that are absent in simpler settings. 

\begin{figure}[ht]
    \centering

    \begin{subfigure}[t]{0.48\linewidth}
        \centering
        \includegraphics[width=\linewidth]{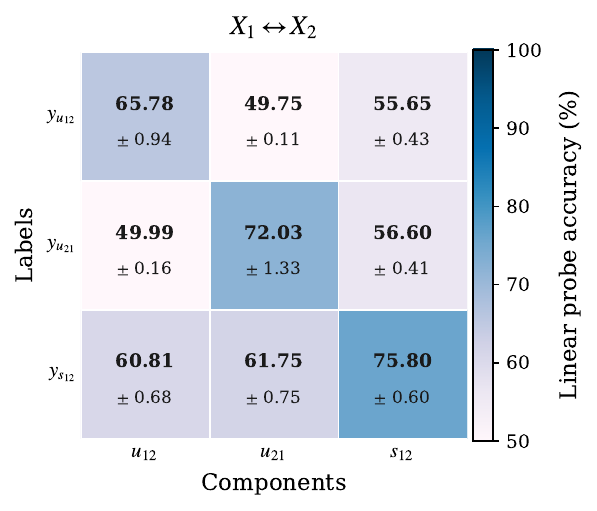}
        \caption{MLP}
        \label{fig:pairwise_confusion_mlp_2m}
    \end{subfigure}
    \hfill
    \begin{subfigure}[t]{0.48\linewidth}
        \centering
        \includegraphics[width=\linewidth]{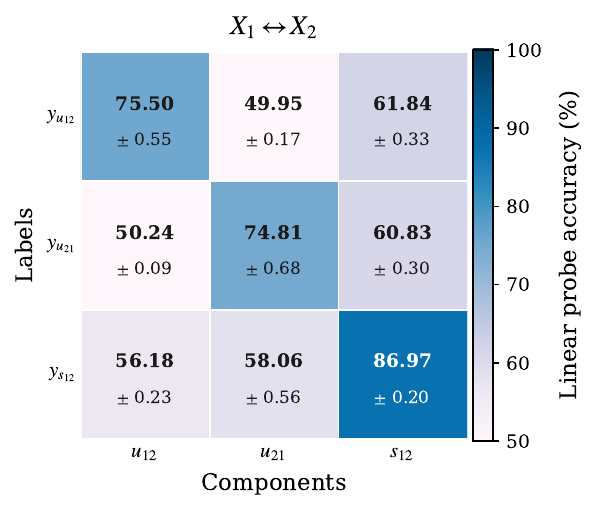}
        \caption{GRU}
        \label{fig:pairwise_confusion_gru_2m}
    \end{subfigure}

    \vspace{0.5em}

    \begin{subfigure}[t]{0.48\linewidth}
        \centering
        \includegraphics[width=\linewidth]{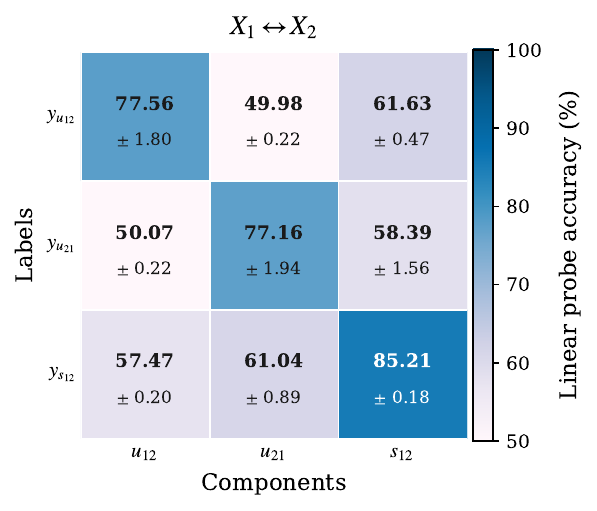}
        \caption{gMLP}
        \label{fig:pairwise_confusion_gmlp_2m}
    \end{subfigure}
    \hfill
    \begin{subfigure}[t]{0.48\linewidth}
        \centering
        \includegraphics[width=\linewidth]{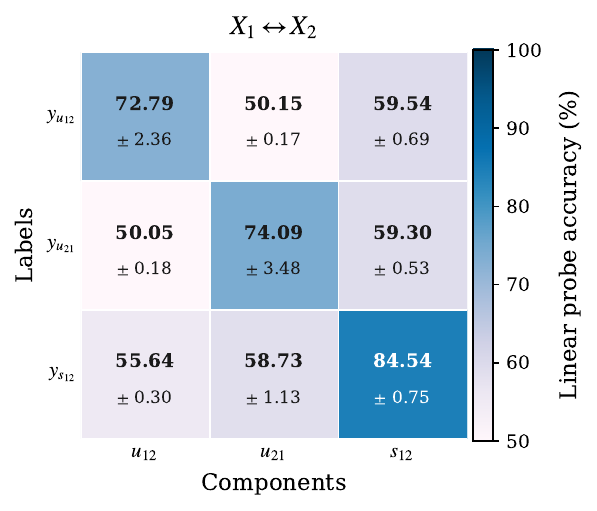}
        \caption{\textbf{RePercENT}}
        \label{fig:pairwise_confusion_repercent_2m}
    \end{subfigure}

    \caption{($\mathbf{M = 2}$) Pairwise confusion matrices for the synthetic setting with two modalities, shown for MLP, GRU, gMLP, and \textbf{RePercENT}. While all models largely separate unique and shared information, sequence-aware models yield stronger intra-component accuracy, as reflected by higher main-diagonal values.}
    \label{fig:pairwise_confusion_2m}
\end{figure}

\newpage

\begin{figure}[ht]
    \centering

    \begin{subfigure}[t]{\linewidth}
        \centering
        \includegraphics[width=0.9\linewidth]{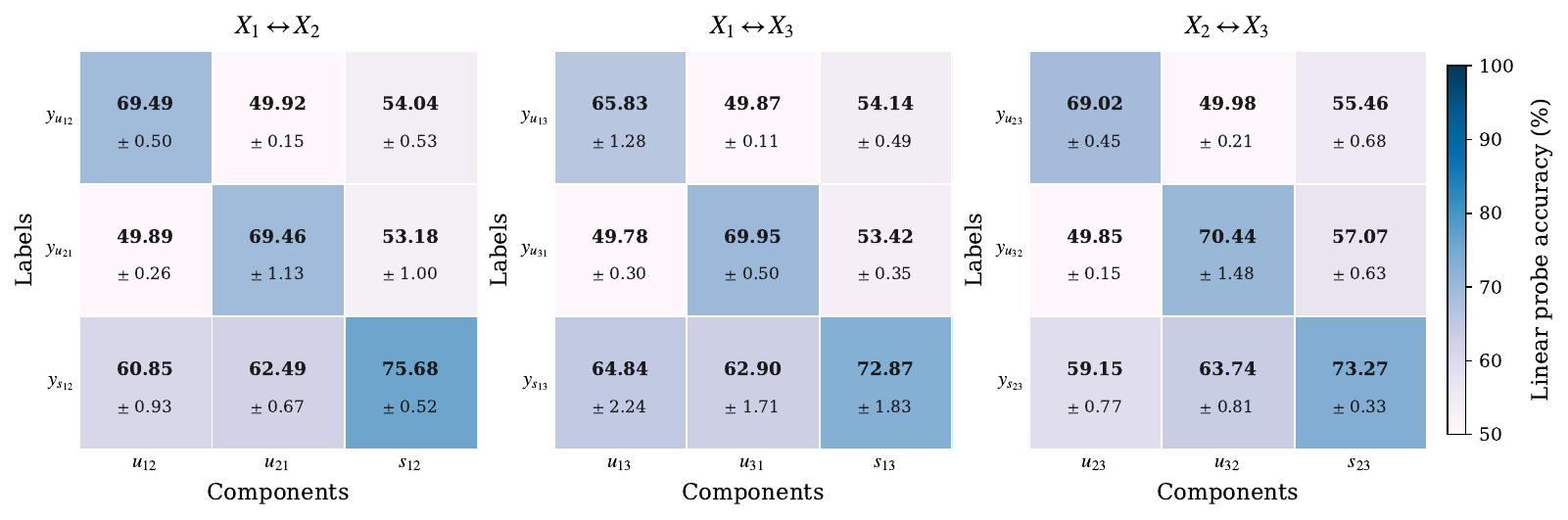}
        \caption{MLP}
        \label{fig:pairwise_confusion_mlp_3m}
    \end{subfigure}
    \vspace{0.5em}
    \begin{subfigure}[t]{\linewidth}
        \centering
        \includegraphics[width=0.9\linewidth]{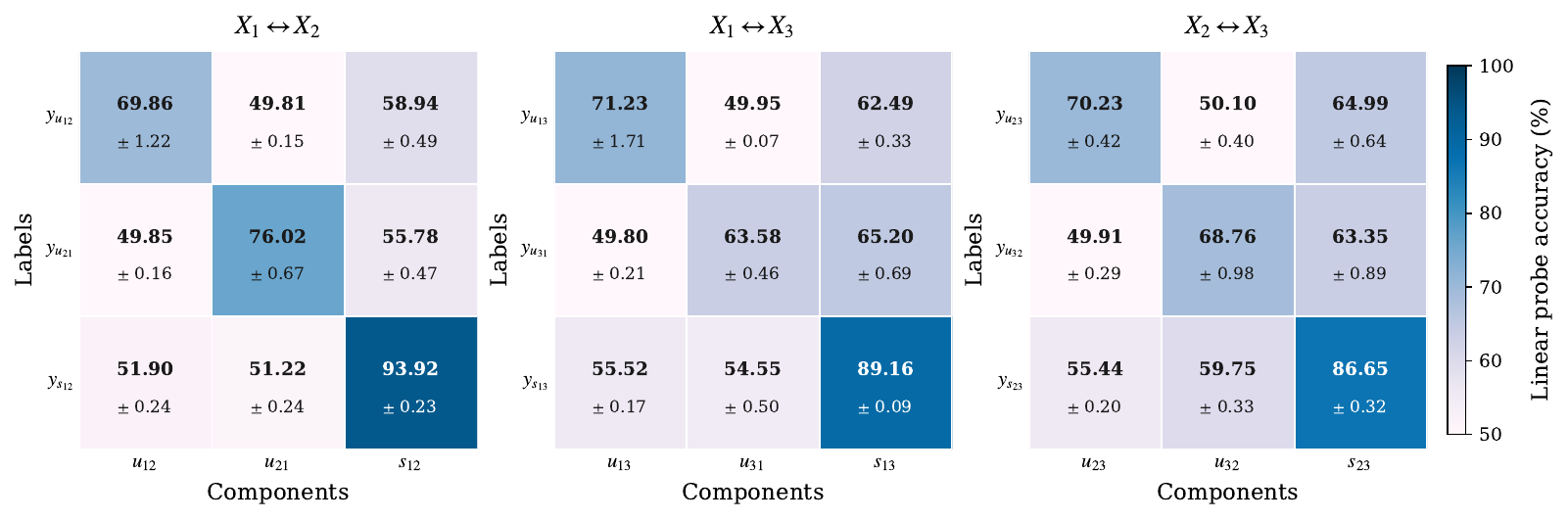}
        \caption{GRU}
        \label{fig:pairwise_confusion_gru_3m}
    \end{subfigure}

    \vspace{0.5em}

    \begin{subfigure}[t]{\linewidth}
        \centering
        \includegraphics[width=0.9\linewidth]{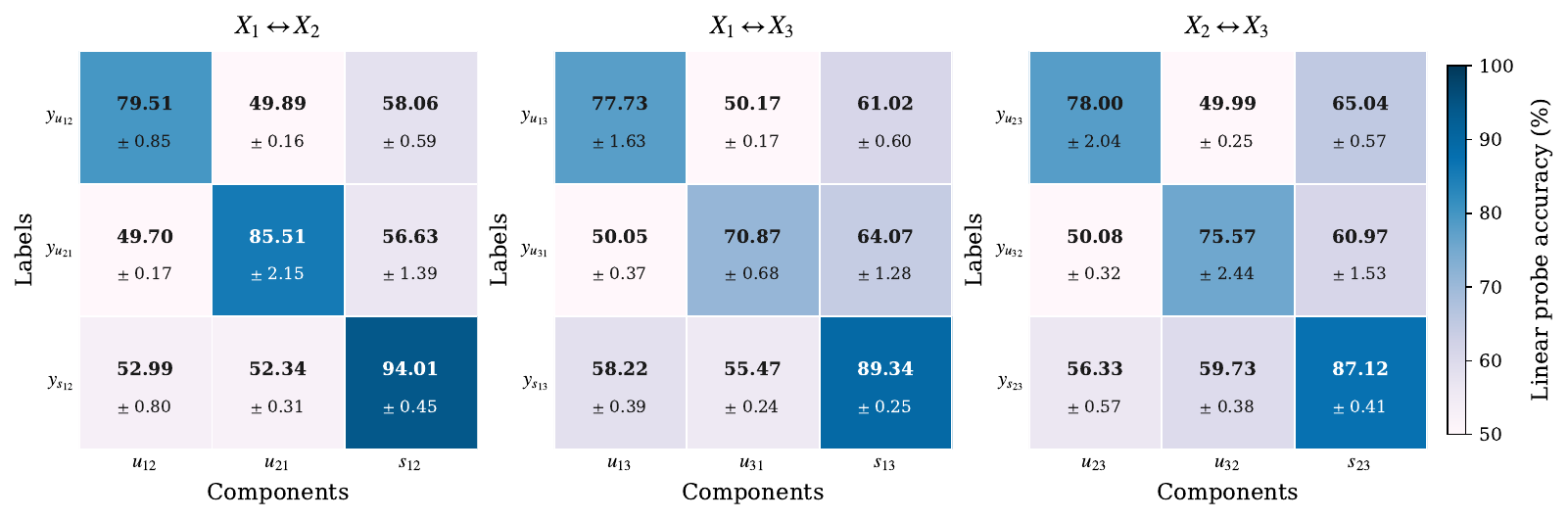}
        \caption{gMLP}
        \label{fig:pairwise_confusion_gmlp_3m}
    \end{subfigure}
    \vspace{0.5em}
    \begin{subfigure}[t]{\linewidth}
        \centering
        \includegraphics[width=0.9\linewidth]{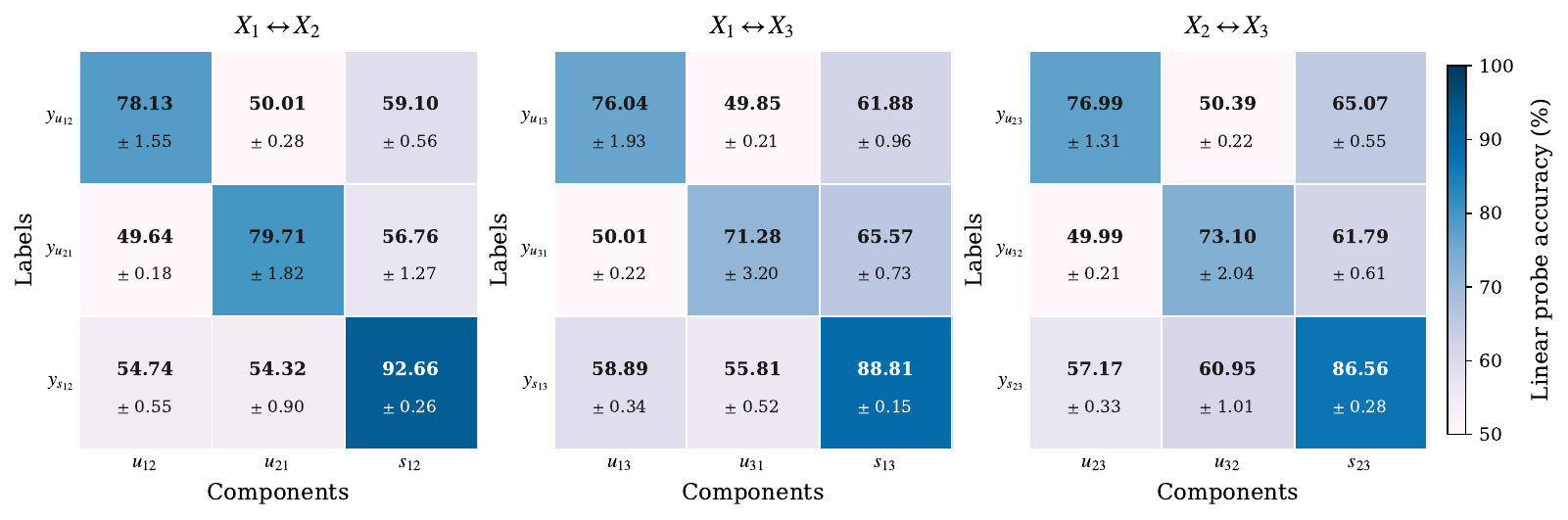}
        \caption{\textbf{RePercENT}}
        \label{fig:pairwise_confusion_repercent_3m}
    \end{subfigure}

    \caption{($\mathbf{M = 3}$) Pairwise confusion matrices for the synthetic setting with three modalities, shown for MLP, GRU, gMLP, and \textbf{RePercENT}. The performance of GRU and especially MLP degrades, while the gMLP and RePercent present robust disentanglement performance.}
    \label{fig:pairwise_confusion_3m}
\end{figure}

\newpage

\begin{figure}[ht]
    \centering

    \begin{subfigure}[t]{\linewidth}
        \centering
        \includegraphics[width=\linewidth]{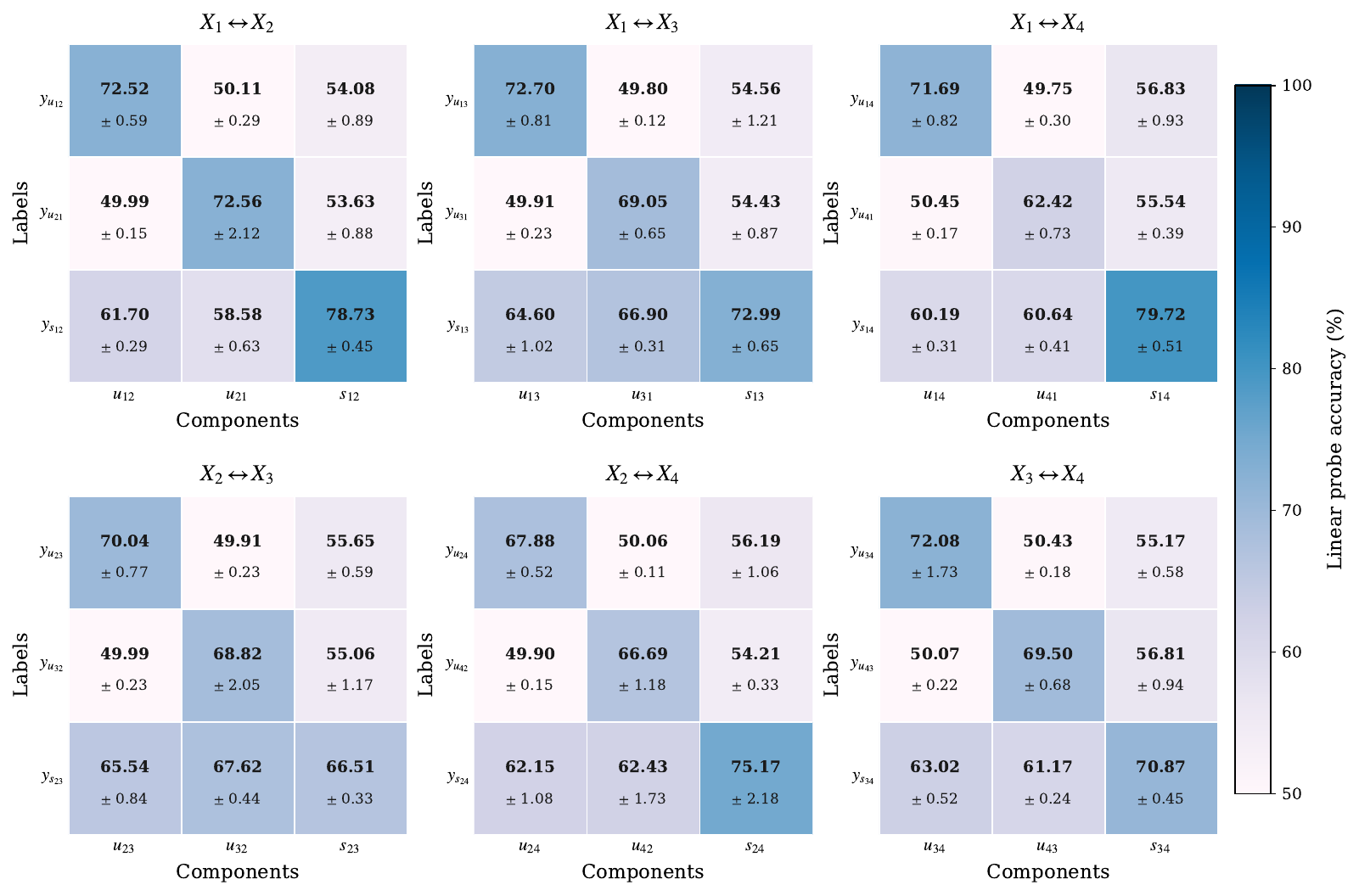}
        \caption{MLP}
        \label{fig:pairwise_confusion_mlp_4m}
    \end{subfigure}
    
    \vspace{1.5em}
     
    \begin{subfigure}[t]{\linewidth}
        \centering
        \includegraphics[width=\linewidth]{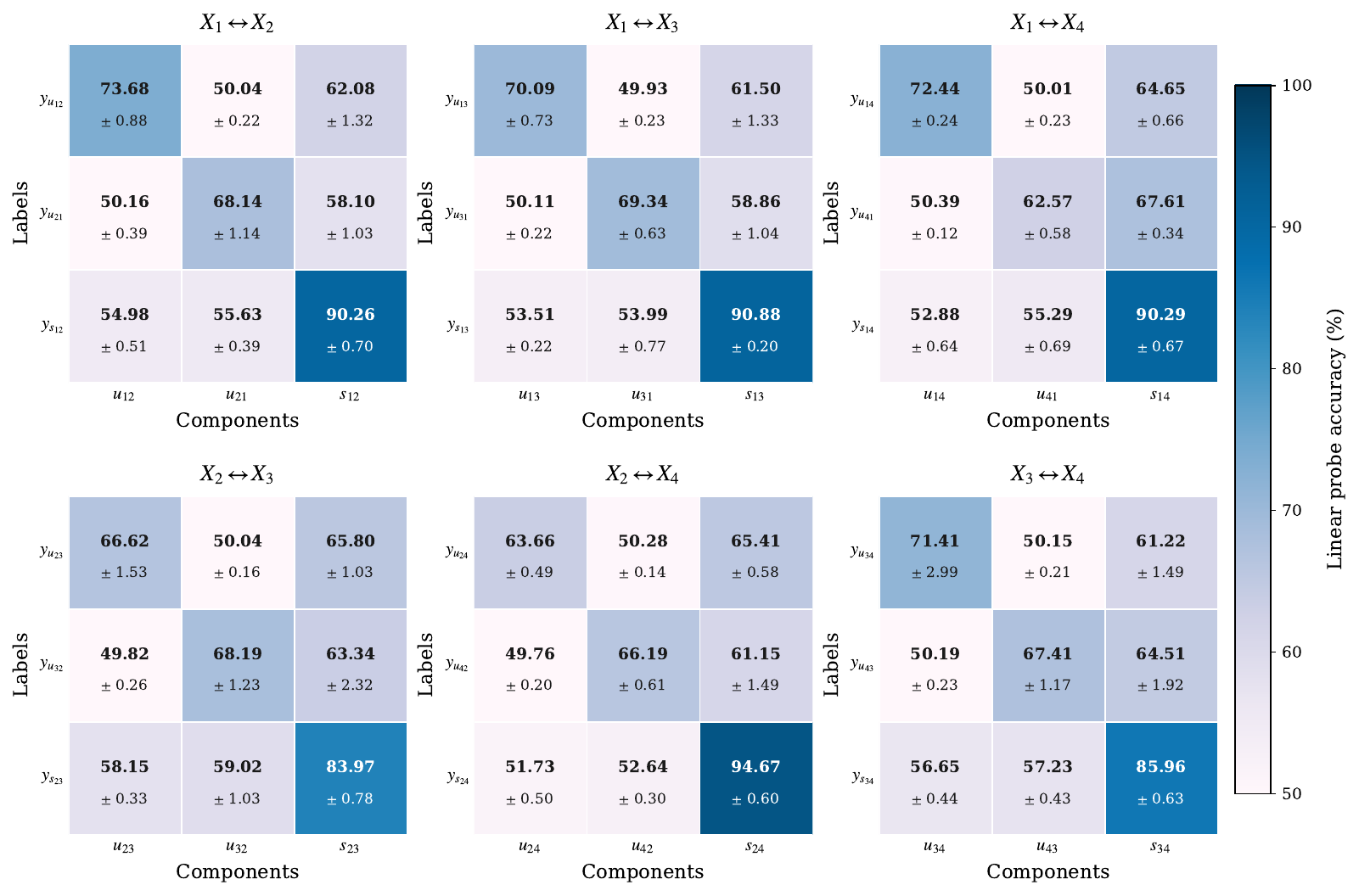}
        \caption{GRU}
        \label{fig:pairwise_confusion_gru_4m}
    \end{subfigure}
    \caption{($\mathbf{M = 4}$) Pairwise confusion matrices for the synthetic setting with four modalities, shown for MLP and GRU. The MLP is unable to recover the desired representations, as it exhibits substantial cross-component leakage and weak intra-component prediction, while the GRU preserves strong shared representations but weakly encodes unique components.}
    \label{fig:pairwise_confusion_mlp_gru_4m}
\end{figure}

\begin{figure}[ht]
    \centering

    \begin{subfigure}[t]{\linewidth}
        \centering
        \includegraphics[width=\linewidth]{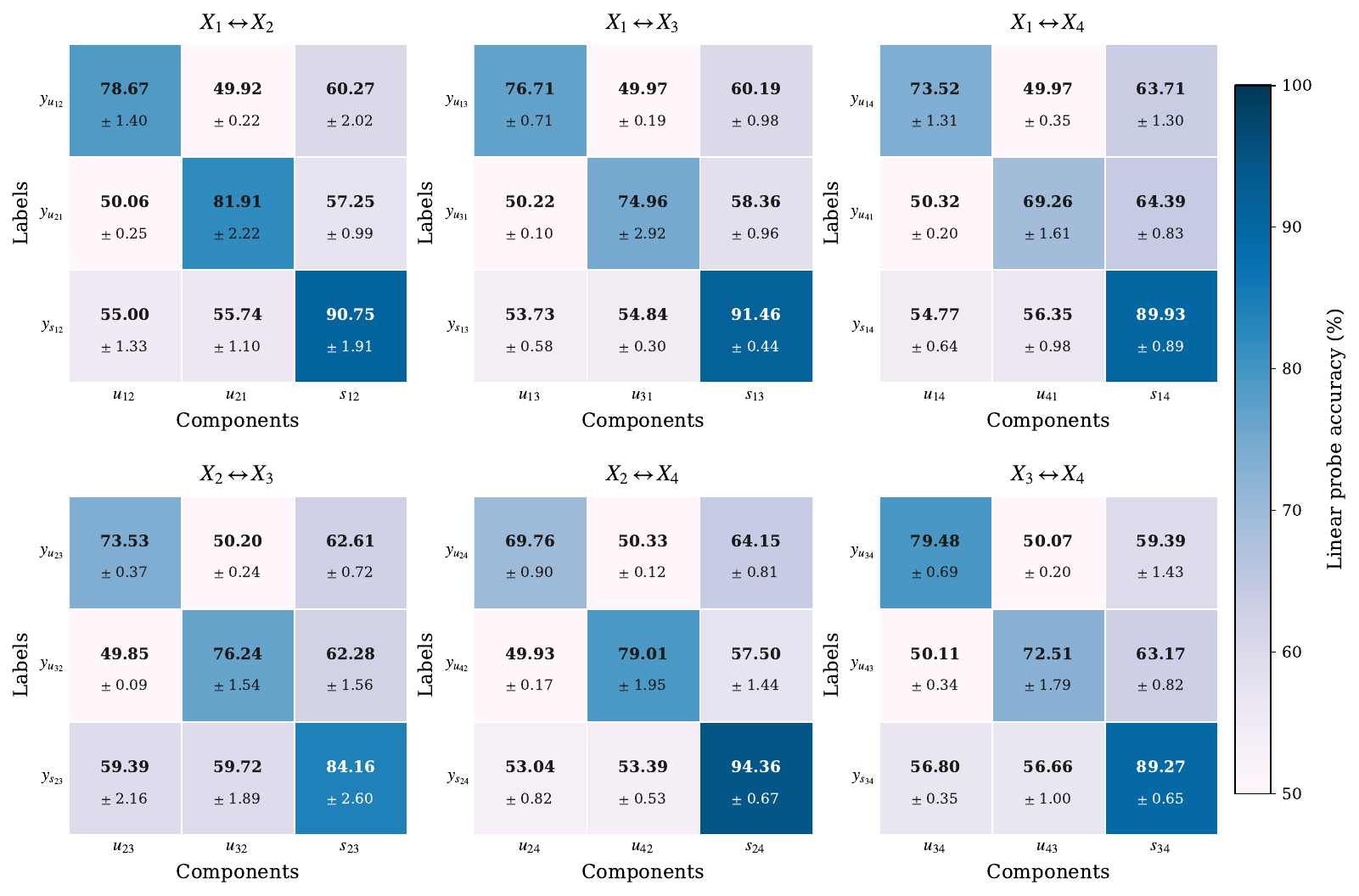}
        \caption{gMLP}
        \label{fig:pairwise_confusion_gmlp_4m}
    \end{subfigure}
    \vspace{0.5em}
    \begin{subfigure}[t]{\linewidth}
        \centering
        \includegraphics[width=\linewidth]{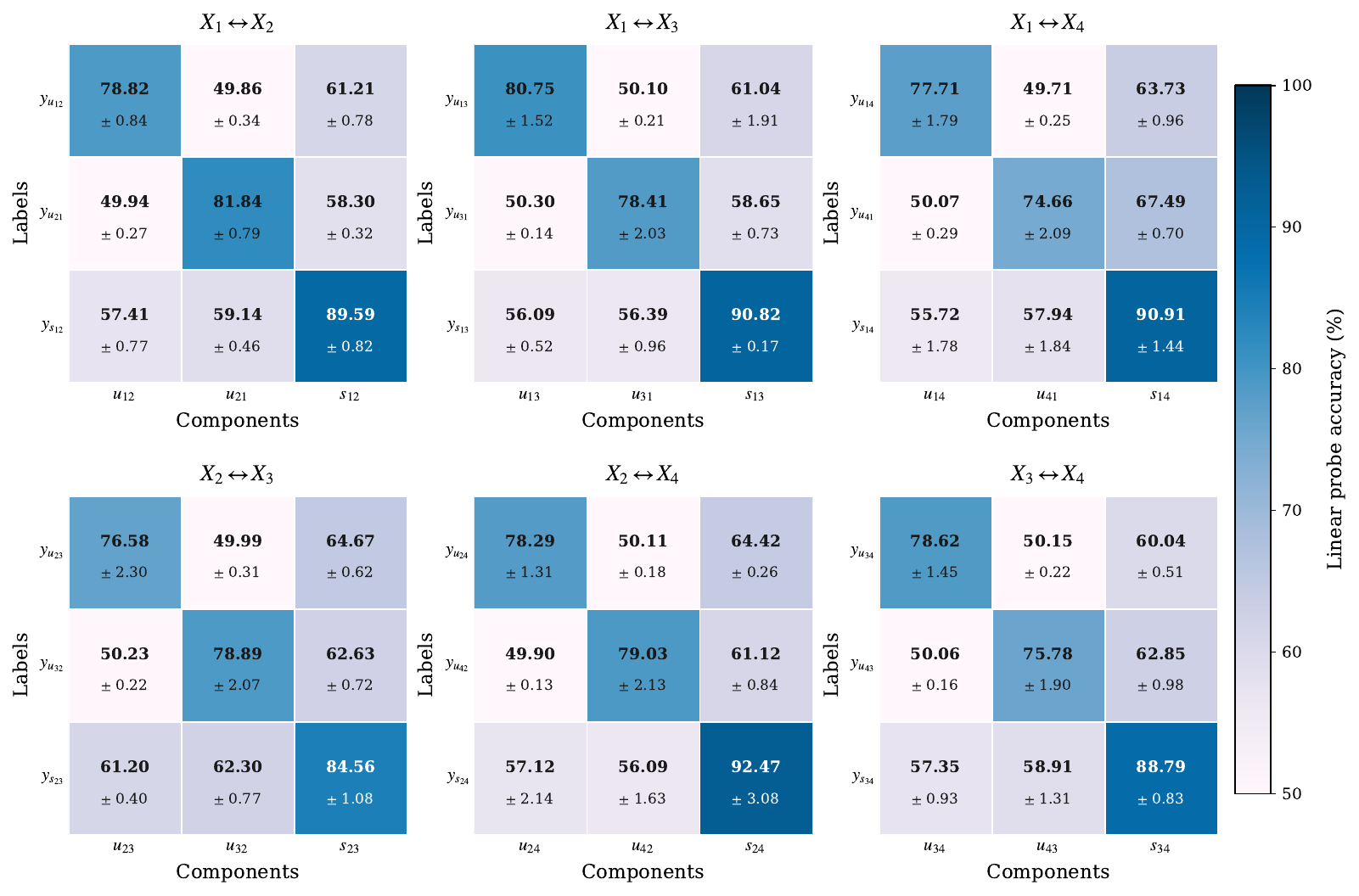}
        \caption{\textbf{RePercENT}}
        \label{fig:pairwise_confusion_repercent_4m}
    \end{subfigure}

    \caption{($\mathbf{M = 4}$) Pairwise confusion matrices for the synthetic setting with four modalities, shown for gMLP, and \textbf{RePercENT}. Both models achieve strong disentanglement, with RePercENT yielding slightly higher intra-component accuracy especially for the unique components, while gMLP exhibits marginally lower cross-component leakage.}
    \label{fig:pairwise_confusion_gmlp_repercent_4m}
\end{figure}

\newpage

\begin{figure}[ht]
    \centering

    \begin{subfigure}[t]{\linewidth}
        \centering
        \includegraphics[width=\linewidth]{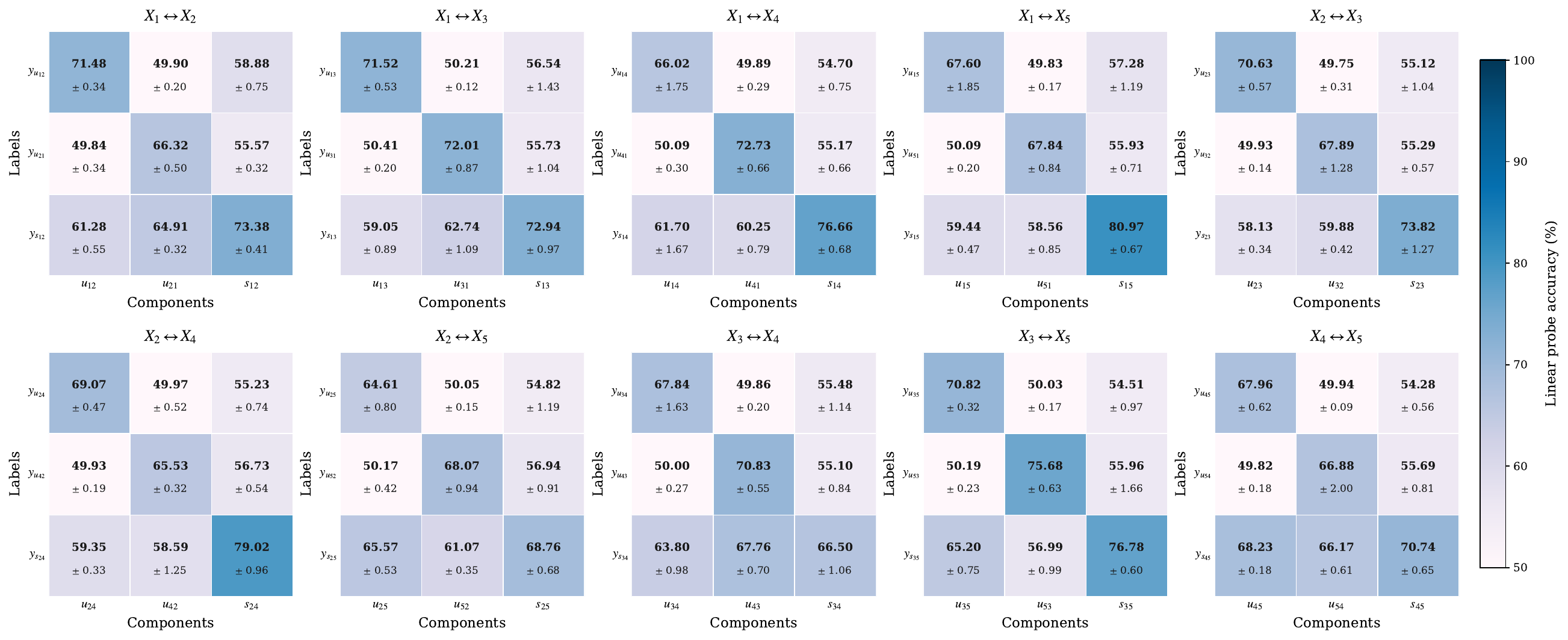}
        \caption{MLP}
        \label{fig:pairwise_confusion_mlp_5m}
    \end{subfigure}
    
    \vspace{1.5em}
     
    \begin{subfigure}[t]{\linewidth}
        \centering
        \includegraphics[width=\linewidth]{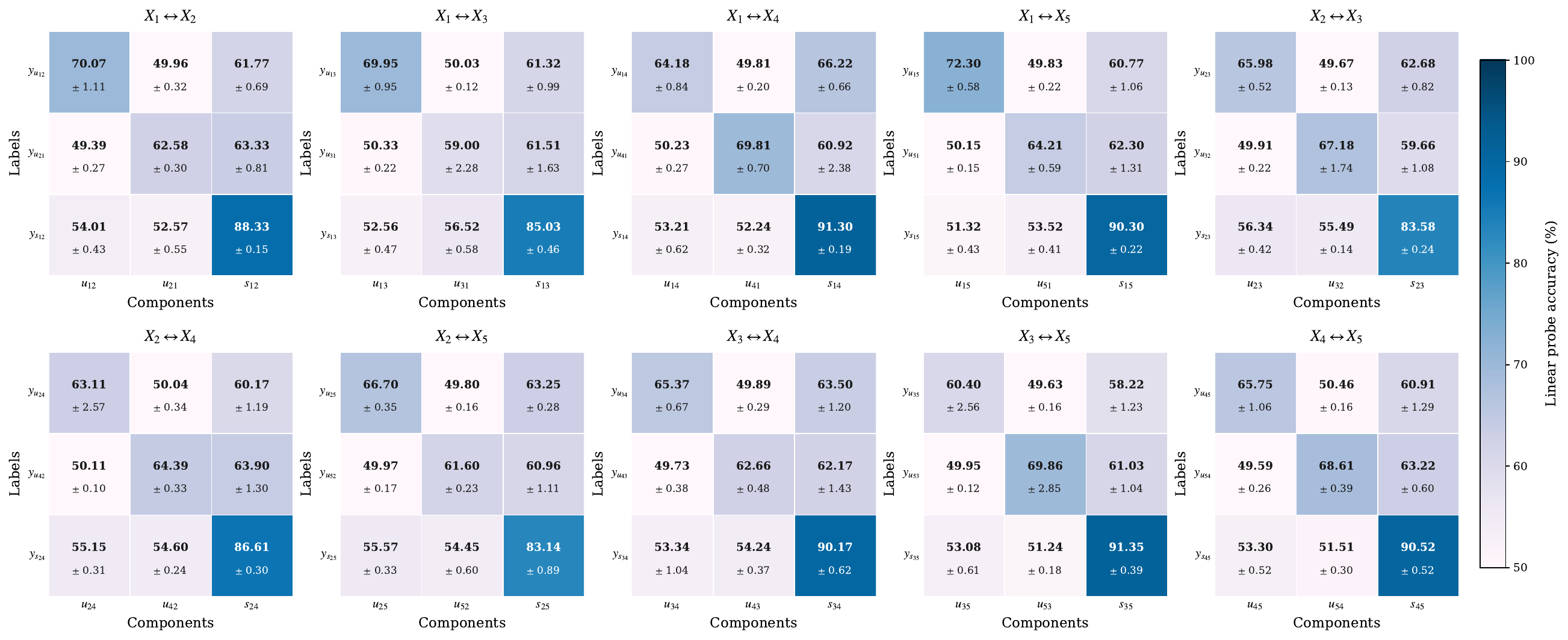}
        \caption{GRU}
        \label{fig:pairwise_confusion_gru_5m}
    \end{subfigure}
    \caption{($\mathbf{M = 5}$) Pairwise confusion matrices for the synthetic setting with five modalities, shown for MLP and GRU. Similarly to the case of $M=4$, the MLP is fails to recover the desired representations, while the GRU only captures the shared components successfully.}
    \label{fig:pairwise_confusion_mlp_gru_5m}
\end{figure}

\begin{figure}[ht]
    \centering

    \begin{subfigure}[t]{\linewidth}
        \centering
        \includegraphics[width=\linewidth]{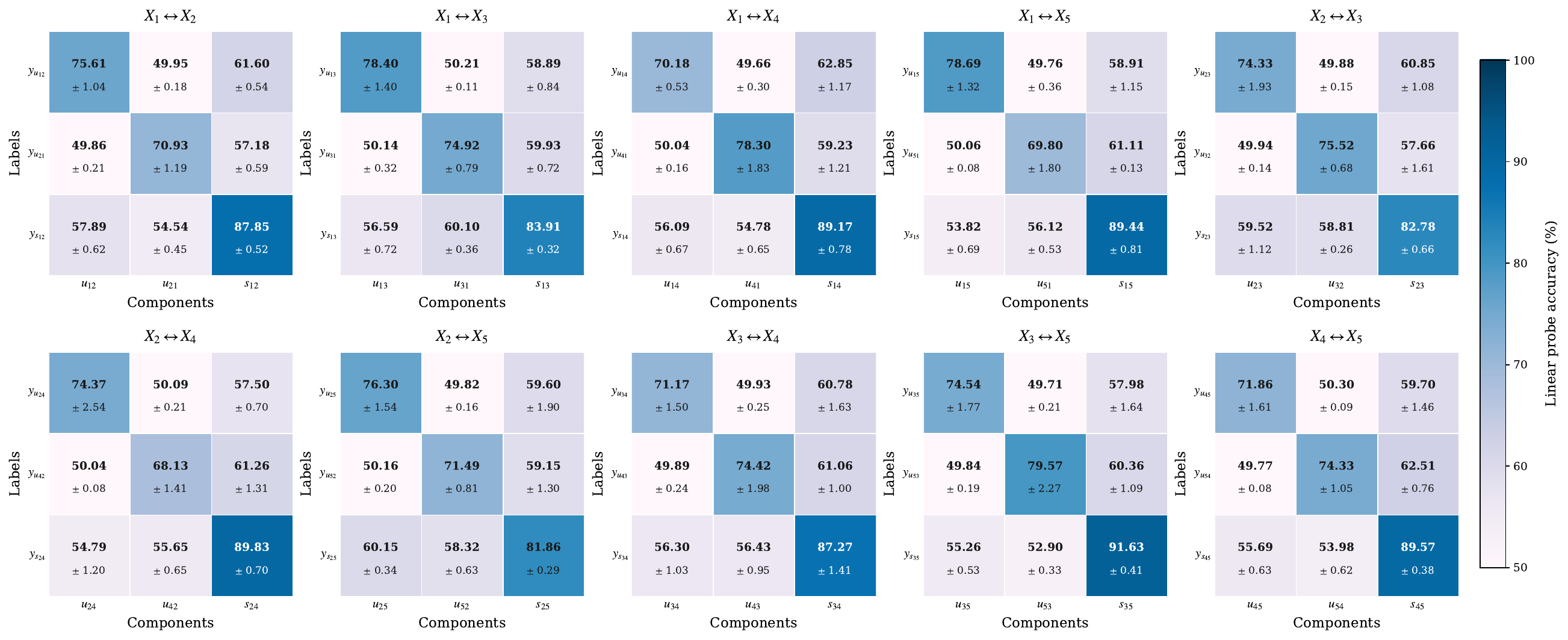}
        \caption{gMLP}
        \label{fig:pairwise_confusion_gmlp_5m}
    \end{subfigure}
    \vspace{0.5em}
    \begin{subfigure}[t]{\linewidth}
        \centering
        \includegraphics[width=\linewidth]{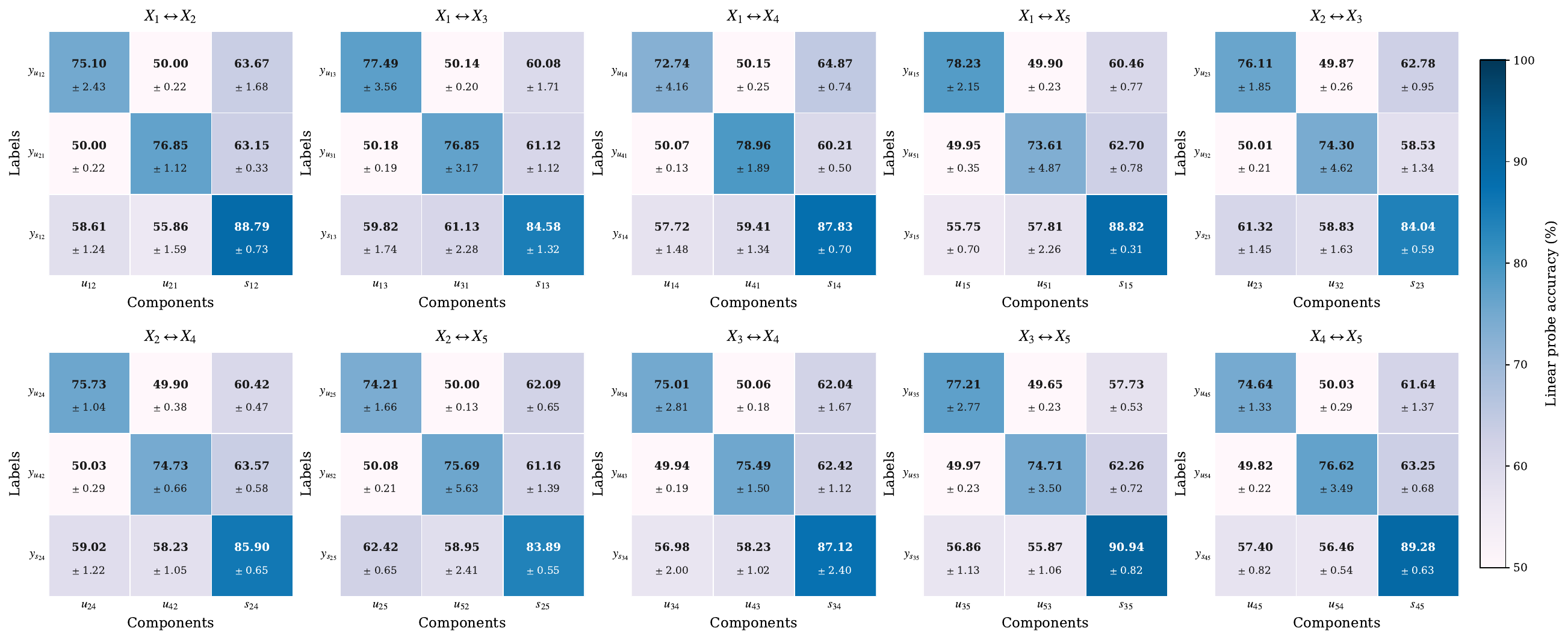}
        \caption{\textbf{RePercENT}}
        \label{fig:pairwise_confusion_repercent_5m}
    \end{subfigure}

    \caption{($\mathbf{M = 5}$) Pairwise confusion matrices for the synthetic setting with five modalities, shown for gMLP, and \textbf{RePercENT}. Despite the increased number of modality pairs, both models successfully encode pairwise unique and shared representations, yielding similar disentanglement performance.}
    \label{fig:pairwise_confusion_gmlp_repercent_5m}
\end{figure}

\clearpage

\paragraph{Hyperparameter $\alpha$} Figure \ref{fig:alpha_sweep} illustrates the impact of the balancing parameter $\alpha$, as we keep the remaining parameters constant, for different numbers of modalities, $M$. Importantly, for this experiment, we set $\lambda = 1$ and $\beta = 1$. We evaluate the disentanglement performance of the unique and shared components separately, according to the quantities:
\begin{equation}
    \Delta_s = \mathbb{E}\left[ Acc_{\hat{\mathbf{s}}_{ij}\to y_{s_{ij}}} \right]- \mathbb{E}\left[Acc_{\hat{\mathbf{s}}_{ij}\to y_{u_{ij}}}\right], \qquad \Delta_u = \mathbb{E}\left[ Acc_{\hat{\mathbf{u}}_{ij}\to y_{u_{ij}}} \right]- \mathbb{E}\left[Acc_{\hat{\mathbf{u}}_{ij}\to y_{s_{ij}}}\right].
    \label{eq: delta-s_delta-u}
\end{equation}
Notice that these $\Delta$ metrics, Eq. \eqref{eq: delta-s_delta-u}, measure the margin between correct intra-component prediction and inter-component leakage. A higher value indicates that the latent factors are not only task-relevant but also successfully isolated from one another. Observing both the unique and shared component performance, there is a similar emerging pattern. Concretely, for lower values of $\alpha$ the representation quality is quite low. The performance gradually increases as $\alpha$ increases, with the best behavior demonstrated within the range $[1, 10]$. 

This empirical trend aligns precisely with the requirements of Theorem \eqref{theo: the_01}, which states that $\alpha$ \textbf{must be greater than} $\lambda$ for stable disentanglement. This suggests that during joint optimization, prioritizing the recovery of shared components is more critical than over-penalizing the overlap between shared and unique subspaces. However, extreme values of $\alpha$ yield a negative effect, indicating that excessive imbalance between objectives eventually degrades both the discriminative power of the unique representations and the purity of the shared components.

\begin{figure}[ht]
    \centering

    \begin{subfigure}[t]{0.48\linewidth}
        \centering
        \includegraphics[width=\linewidth]{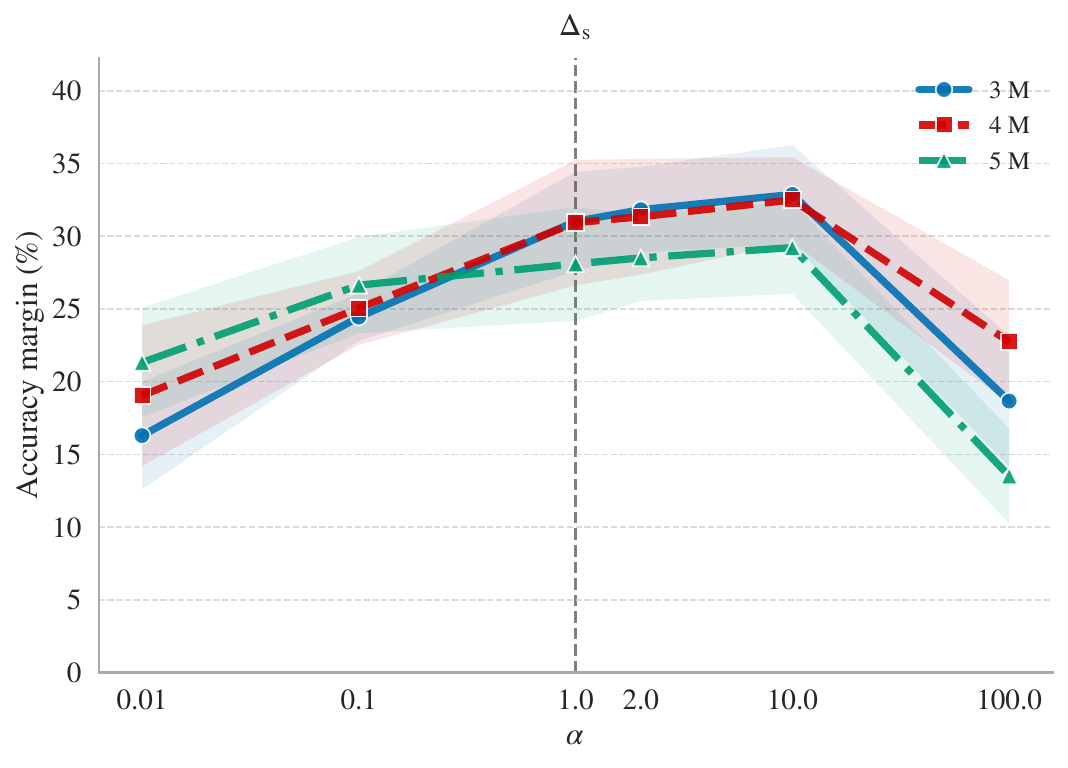}
        \caption{$\Delta_s$}
    \end{subfigure}
    \hfill
    \begin{subfigure}[t]{0.48\linewidth}
        \centering
        \includegraphics[width=\linewidth]{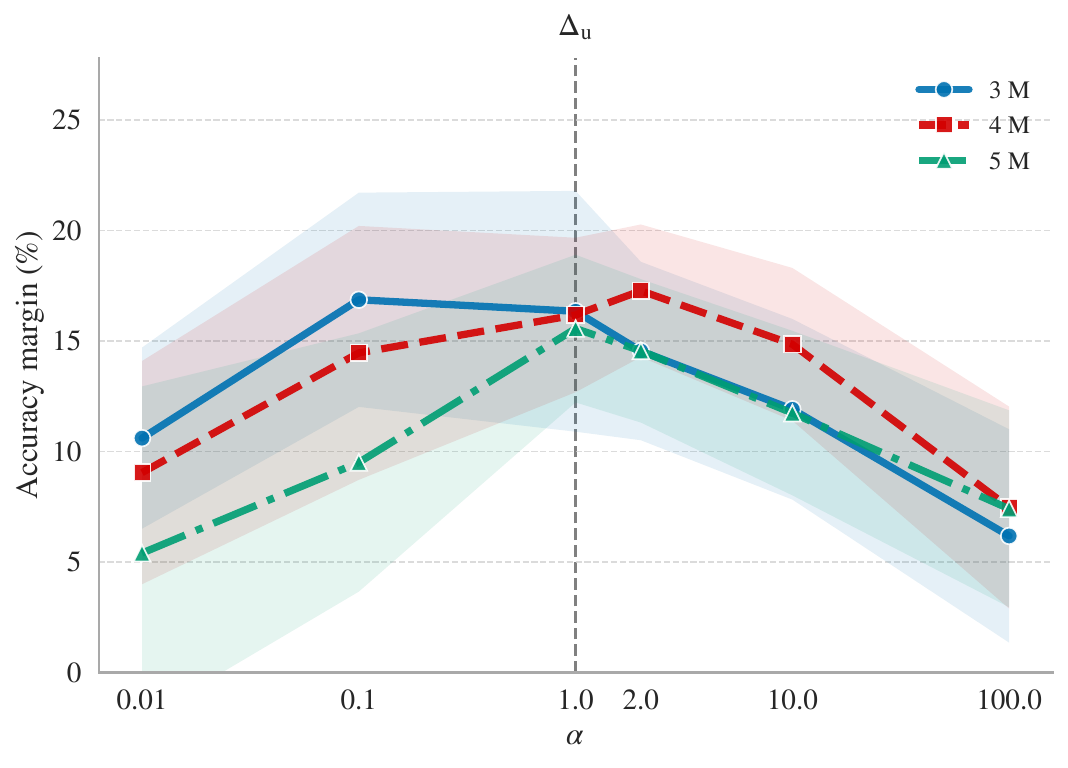}
        \caption{$\Delta_u$}
    \end{subfigure}

    \caption{Sweep of parameter $\mathbf{\alpha}$ across different $M$, when $\lambda$ is fixed to 1.}
    \label{fig:alpha_sweep}
\end{figure}

\clearpage

\subsection{IRFL}
\label{scaling_complexity_irfl}
\paragraph{Scaling complexity} Table \ref{tab:model_complexity_irfl} illustrates the scaling behavior of RePercENT compared to its strongest baseline, i.e., gMLP, as the number of modalities increases from two to three. For the two-modality calculation, we simulate the corresponding setting by excluding the definition encoders. Additionally, the inference latency is measured using a batch size of $32$ and averaged over $100$ forward passes. 

Our proposed framework demonstrates exceptional parameter efficiency. The gMLP requires nearly a $3\times$ increase in parameter count to accommodate the third modality, at the same time when RePercENT scales by a factor of only $\sim1.5$. This demonstrates that our method maintains a substantially lower computational footprint, both in parameter count and floating point operations (FLOPs), when transitioning to higher-dimensional multimodal settings. These gains are also reflected in the average inference time, where our method requires a $\sim2.2$ increase, while the gMLP rises to $\sim 2.9$ with the presence of a third modality.

\begin{table}[h]
\centering
\small
\caption{Scaling complexity for RePercENT and strongest disentanglement counterpart gMLP, as we go from two to three modalities for the IRFL detection task. Growth denotes the ratio 3-modality/2-modality. The last column reports the relative reduction of RePercENT compared to gMLP at $M=3$.}
\label{tab:model_complexity_irfl}
\begin{tabularx}{\linewidth}{>{\raggedright\arraybackslash}Xcccc}
\toprule
\textbf{Metric $\sim$ Model} 
& $\mathbf{M = 2}$ (reference) 
& $\mathbf{M = 3}$ 
& \textbf{Growth} 
& \textbf{Rel. diff. @ $\mathbf{M=3}$} \\
\midrule

\rowcolor{gray!10} 
\multicolumn{5}{l}{\textit{Parameter count (M)} ($\downarrow$)}\\
gMLP      
& $6.12$ 
& $17.17$ 
& $2.81 \times$ 
& reference \\
\textbf{RePercENT}     
& $6.06$ 
& $\mathbf{8.96}$ 
& $\mathbf{1.48 \times}$ 
& $\mathbf{47.8\%}$ lower \\

\midrule
\rowcolor{gray!10} 
\multicolumn{5}{l}{\textit{Floating-point operations, FLOPs (B)} ($\downarrow$)}\\
gMLP       
& $20.89$ 
& $63.16$ 
& $3.02 \times$ 
& reference \\
\textbf{RePercENT}       
& $5.67$ 
& $\mathbf{9.50}$ 
& $\mathbf{1.68 \times}$ 
& $\mathbf{85.0\%}$ lower \\

\midrule
\rowcolor{gray!10} 
\multicolumn{5}{l}{\textit{Inference latency (ms)} ($\downarrow$)}\\
gMLP   
& $15.32$ 
& $44.12$ 
& $2.88 \times$ 
& reference \\
\textbf{RePercENT}     
& $16.89$ 
& $\mathbf{37.10}$ 
& $\mathbf{2.20 \times}$ 
& $\mathbf{15.9\%}$ lower \\

\bottomrule
\end{tabularx}
\end{table}

\paragraph{Enriching text with definition}
When the definitions are added all the models benefit naturally, with our framework attaining the highest overall accuracy. 

 \begin{table}[h]
    \centering
    \caption{IRFL figurative language detection results averaged over 5 random seeds. As a fixed reference, we provide zero-shot CLIP alongside two fine-tuned variants, including projection-only and end-to-end fine-tuning. Top-1 accuracy (\%) is reported (mean $\pm$ std).}
    \label{tab:irfl_results_def}
    \footnotesize
    \setlength{\tabcolsep}{4pt}
    \renewcommand{\arraystretch}{0.92}
    \resizebox{\textwidth}{!}{
    \begin{tabular}{@{}lcccc@{}}
        \toprule
        \textbf{Model} & \textbf{Idioms} & \textbf{Simile} & \textbf{Metaphor--\scriptsize{OoD}} & \textbf{Overall} \\
        \midrule
        \multicolumn{5}{@{}l}{\textit{Image vs Caption $\oplus$ Definition}} \\

        CLIP-ViT-B/32 \scriptsize{(zero-shot)}
            & 26.00 & 48.38 & 36.04 & 38.78 \\
        CLIP-ViT-B/32 \scriptsize{(finetune-proj)}
            & $36.00 \pm 1.2$
            & $66.5 \pm 1.6$
            & $41.44 \pm 2.0$
            & $48.67 \pm 1.3$ \\
        CLIP-ViT-B/32 \scriptsize{(finetune-all)}
            & $38.1 \pm 1.1$
            & $70.4 \pm 1.5$
            & $42.16 \pm 3.2$
            & $50.81 \pm 0.7$ \\
        \midrule
        gMLP
            & $43.80 \pm 1.6$
            & $64.55 \pm 2.9$
            & $44.92 \pm 3.0$
            & $51.36 \pm 1.7$ \\
        GRU
            & $46.40 \pm 1.8$
            & $57.40 \pm 2.4$
            & $44.92 \pm 4.1$
            & $49.56 \pm 1.7$ \\
        \rowcolor{gray!12}
        \textbf{RePercENT}
            & $42.50 \pm 3.4$
            & $65.49 \pm 2.4$
            & $44.98 \pm 2.1$
            & $\mathbf{51.38 \pm 0.8}$ \\

        \bottomrule
    \end{tabular}
    }
    \end{table}
    
\subsection{HONeYBEE}
\label{towards_interpretability}
\paragraph{Towards interpretability} Beyond serving as a geometric diagnostic of the learned shared spaces, the heatmaps in Figure \ref{fig:centroid_distance_tcga} suggest that the learned shared representations capture biologically plausible cancer-type structure. For example, the lung cancer subtypes LUAD and LUSC often appear closer to one another than to unrelated cancer types, which is consistent with their shared tissue of origin. More broadly, the presence of structured off-diagonal similarities suggests that the shared representations are not only encoding cancer-type separability, but also capturing relationships between diseases that may arise from common tissue, histology, or molecular programs. Importantly, the observed geometry suggests that RePercENT may expose meaningful cross-cancer relationships, motivating future work that integrates domain-specific annotations, such as mutation status, molecular subtypes, pathway activity, immune profiles, to systematically validate and characterize the biological factors driving these patterns.

\begin{figure}[h]
    \centering
    \includegraphics[width=\linewidth]{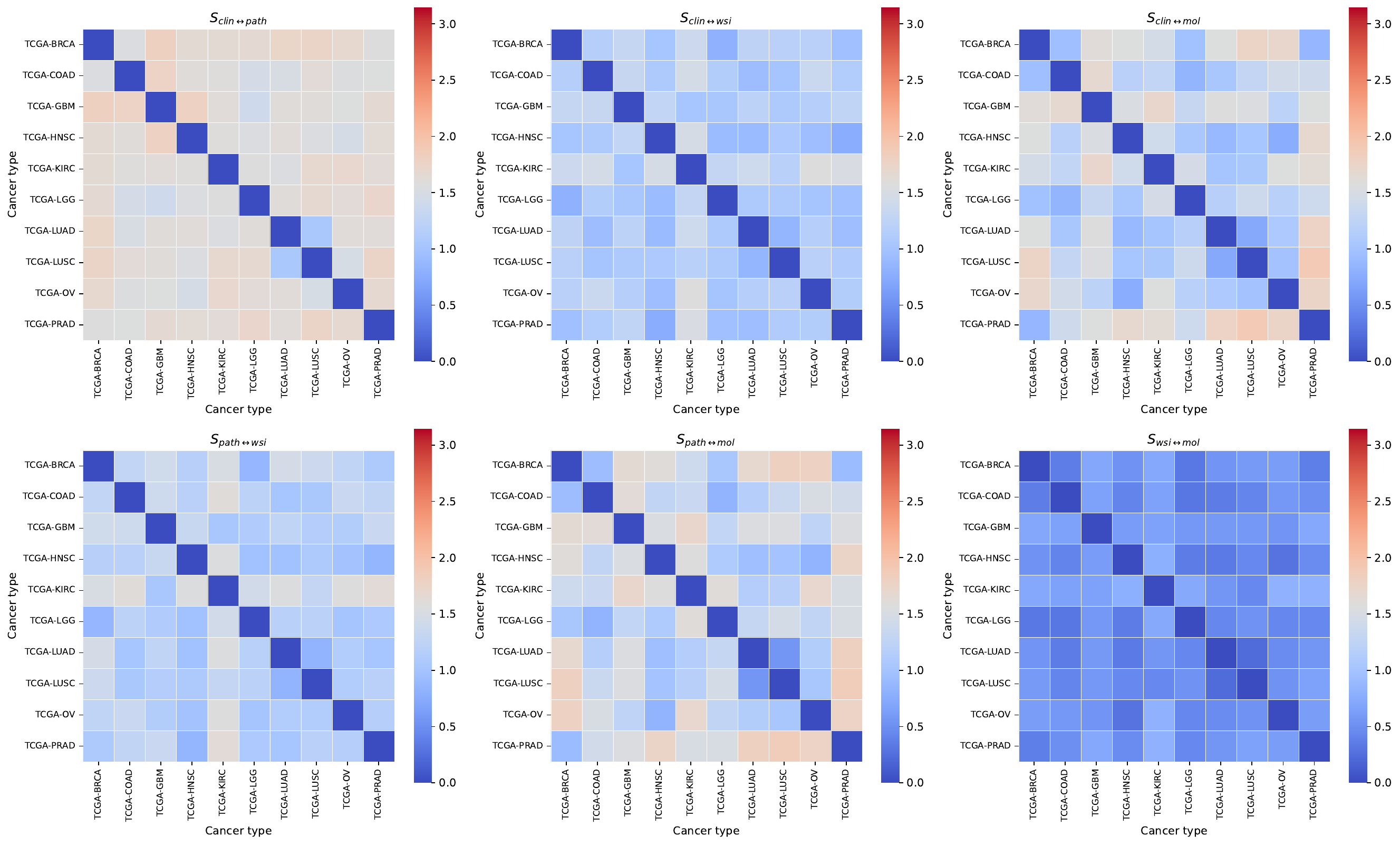}
    \caption{Evaluation of the angular distance between the studied cancer types, across the extracted shared components. The shared components, as they are modeled with the use of von-Mises Fisher distribution, they lie on a hyper-sphere. We cluster the derived shared embeddings across modalities, based on their cancer type and calculate their centroid. The heatmap reflects the angular distance between those centroids within each shared component, where values close to zero (blue) denote proximity between those cancer types, while values closer to $\pi$ (red) imply that the embeddings between these two matrices are further apart.}
    \label{fig:centroid_distance_tcga}
\end{figure}

%%%%%%%%%%%%%%%%%%%%%%%%%%%%%%%%%%%%%%%%%%%%%%%%%%%%%%%%%%%%

\clearpage

\end{document}